\documentclass[11pt]{article}

\usepackage{amssymb,amsmath,amsthm}
\usepackage{bbm}
\usepackage[noend]{algpseudocode}
\usepackage{graphicx}
\usepackage{verbatim}
\usepackage{url,xspace,hyperref}
\usepackage{cite}
\usepackage{caption}
\usepackage{subcaption}
\usepackage{multirow}
\usepackage{multicol}
\usepackage{latexsym}
\usepackage{amsmath,amssymb,enumerate}
\usepackage{algorithm,algorithmicx}
\usepackage{float}
\usepackage{xcolor}
\usepackage{mathrsfs}
\usepackage{cleveref}
\usepackage{booktabs}
\usepackage{breqn}

\usepackage{fullpage}
\usepackage{geometry}
\geometry{letterpaper,tmargin=1in,bmargin=1in,lmargin=1in,rmargin=1in}
\usepackage{setspace}
\usepackage{tcolorbox}
\usepackage{xspace}

\tcbuselibrary{skins,breakable}
\tcbset{enhanced jigsaw}


\newtheorem{theorem}{Theorem}[section]
\newtheorem*{theorem*}{Theorem}
\newtheorem{definition}{Definition}[section]

\newtheorem{lemma}[theorem]{Lemma}

\newtheorem{corollary}[theorem]{Corollary}

\newtheorem{fact}{Fact}[section]

\newcounter{note}[section]

\newcommand{\pr}{\mathbf{P}} 
\newcommand{\E}{\mathbb{E}}

\newcommand{\A}{\mathcal{A}}

\newcommand{\ignore}[1]{}

\newcommand{\half}{\frac{1}{2}}
\newcommand{\B}{\mathcal{B}}

\newcommand{\rucb}{$\mathtt{RUCB}$}

\newcommand{\balg}{$\mathtt{C2B}$}
\newcommand{\rdel}{r(\delta)}

\newenvironment{tbox}{\begin{tcolorbox}[
		enlarge top by=5pt,
		enlarge bottom by=5pt,
		 breakable,
		 boxsep=0pt,
                  left=4pt,
                  right=4pt,
                  top=10pt,
                  arc=0pt,
                  boxrule=1pt,toprule=1pt,
                  colback=white
                  ]
	}
{\end{tcolorbox}}



\newif\ifConfVersion


\title{An Asymptotically Optimal Batched Algorithm \\ for the Dueling Bandit Problem} 
\author{Arpit Agarwal \and Rohan Ghuge \and Viswanath Nagarajan}
\begin{document}

\maketitle

\begin{abstract}
We study the $K$-armed dueling bandit problem, a 
variation of the traditional multi-armed bandit problem in which feedback is obtained in the form of pairwise comparisons.
Previous learning algorithms have focused on the \emph{fully adaptive} setting,  where the algorithm can make updates
after every comparison.
The ``batched'' dueling bandit problem is motivated by large-scale applications like web search ranking and recommendation systems, where  performing sequential updates may be infeasible.
In this work, we ask: {\em is there a solution using only a few adaptive rounds
that matches the asymptotic regret bounds of the best sequential algorithms for  $K$-armed dueling bandits?}
We answer this in the affirmative \emph{under the Condorcet condition}, a standard setting of the $K$-armed dueling bandit problem.
We obtain asymptotic regret of $O(K^2\log^2(K)) + O(K\log(T))$ in $O(\log(T))$ rounds, where $T$ is the time horizon. 
Our regret bounds nearly match 
the best regret bounds known in the fully sequential setting under the Condorcet condition.
Finally, in
computational experiments  over a variety of real-world datasets,
we observe that our algorithm
using $O(\log(T))$ rounds
achieves almost the same performance as fully sequential algorithms (that use $T$ rounds).
\end{abstract}

\section{Introduction}
\ifConfVersion
\vspace{-0.1in}
\else
\fi
The \emph{$K$-armed dueling bandit} problem is a variation of the traditional multi-armed bandit problem in which feedback is obtained in the form of pairwise preferences.
This problem has applications in a wide-variety of domains like search ranking, recommendation systems and sports ranking where eliciting qualitative feedback is easy while real-valued feedback is not easily interpretable; 
thus, it has been a popular topic of research in the  machine learning community
(see, for example, \cite{YueJo11, YueBK+12, Urvoy+13, Ailon+14, Zoghi+14, Zoghi+15, Zoghi+15a, Dudik+15, Jamieson+15, Komiyama+15a, Komiyama+16, Ramamohan+16, ChenFr17}).

Previous learning algorithms have focused on a \emph{fully adaptive} setting; 
that is, the learning algorithm can make updates 
in a sequential fashion.
Such updates might be impractical in large systems; for example, consider web-search ranking where the goal is to provide a list (usually \emph{ranked}) of candidate documents to the user of the system in response to a query~\cite{RadlinskiKJ02, Joachims02, YueJoachims09, HofmannWR13}.
Modern day search engines use hundred of parameters to compute a ranked list in response to a query, and online learning frameworks (based on user feedback) have been invaluable in automatically tuning these parameters~\cite{Liu09}. 
However, given the scale of the system, it may be infeasible to adapt after each interaction: users may make multiple queries in a short time or multiple users may simultaneously query the system. Hence, we prefer solutions with \emph{limited rounds} of adaptivity. Concretely, we ask: {\em is there a solution using only a few adaptive rounds
that matches the asymptotic regret bounds of the best sequential algorithms for  $K$-armed dueling bandits?} 

This ``batched'' dueling bandit problem was introduced recently  in \cite{AgarwalGN22}. Here, the learning algorithm's actions are partitioned into a limited number of rounds. In each round/batch, the algorithm
commits to a \emph{fixed set} of  pairwise comparisons, and the feedback 
for all these comparisons  is received simultaneously. Then, the   algorithm uses the feedback from the current batch of comparisons to choose  comparisons for the next batch. \cite{AgarwalGN22}  studied this problem under two different conditions: (i) the strong stochastic transitivity  and stochastic 
triangle inequality (SST+STI) condition, which enforces a certain linear ordering over the arms;  
(ii) the Condorcet condition, which requires one arm to be superior to all others. 
Under SST+STI, their work provided almost  tight upper and lower bounds on the trade-off between number of rounds and regret; in particular, 
they showed that one can achieve  
worst-case  regret of $O(K \log^2 T)$ using $\Theta(\log T)$
rounds ($T$ is the time-horizon).\footnote{They also gave a more complicated algorithm with regret $O(K\log^2K \, \log T)$ in $O(\log T + \log K\, \log\log K)$ rounds, under the SST+STI condition.}
Under the Condorcet condition, which is more general 
than SST+STI, they  achieved a regret upper bound of
$O(K^2 \log T)$  in $O(\log T)$ rounds. 
Previous work \cite{Zoghi+14,Komiyama+15a} on fully sequential algorithms has shown that  it is possible to achieve an
\emph{asymptotic} upper bound of $O(K^2 + K \log T)$ under the Condorcet condition. Very
recently, \cite{SahaGi22} improved the sequential regret bound even further by obtaining regret   $O(K\log T)$, which is the best possible even in the special case of SST+STI~\cite{YueBK+12}.
In the batched setting, the upper bound of \cite{AgarwalGN22}
does not achieve this asymptotic 
optimality, irrespective of the number of batches, 
due to the presence of a $K^2$ {\em multiplicative} factor in the regret bound. 
Their work left open the possibility  of obtaining a batched algorithm   achieving   asymptotic optimality under  the Condorcet condition. In this paper, we nearly resolve this question, by providing an algorithm with $O(K^2\log^2K + K \log T)$ regret in $\Theta(\log T)$ rounds, under the Condorcet condition.

\subsection{Contributions}
\begin{itemize}
    \item  We 
    design an algorithm, denoted \balg, for the batched dueling bandit problem, and analyze its regret under the Condorcet condition. This 
    algorithm achieves a smooth trade-off between the expected regret and the number of batches, $B$. 
    
    \item Crucially, when $B = \log(T)$, our regret bounds nearly match the best regret bounds   \cite{Komiyama+15a, Zoghi+14} known in the fully sequential setting. Hence, our results show that 
    $O(\log T)$ rounds are sufficient to achieve asymptotically optimal regret as a function of $T$.

    \item Our results rely on new ideas for showing that the 
    Condorcet winner arm can be `trapped' using few adaptive rounds with high (constant) probability while incurring a reasonable amount of regret. 
    We can then integrate over this space of probabilities to obtain a bound on the expected regret (in the same vein as \cite{Zoghi+14}). 
    Once the Condorcet arm is `trapped', we can quickly eliminate all other `sub-optimal' arms
    and minimize regret in the process.
    
    \item Finally, we run computational experiments to validate our theoretical results.
    We show that \balg, using $O(\log T)$ batches, achieves almost the same performance as fully sequential algorithms (which effectively use $T$ batches) over a variety of real datasets.
\end{itemize}

\subsection{Preliminaries}\label{sec:prelim}
\ifConfVersion
\vspace{-0.1in}
\else
\fi
The \emph{$K$-armed dueling bandit} problem \cite{YueBK+12} is an online optimization problem, where the goal is to find the best among $K$ bandits $\B = \{1, \ldots, K\}$ using noisy pairwise comparisons with low \emph{regret}.
In each time-step,  a noisy comparison between two arms (possibly the same), say $(i, j)$, is performed. 
The outcome of the comparison is an independent random variable, and the probability of picking $i$ over $j$ is denoted $p_{i, j} = \frac{1}{2} + \Delta_{i, j}$ where $\Delta_{i, j} \in (-\frac{1}{2}, \frac{1}{2})$. 
Here, $\Delta_{i, j}$ can be thought of as a measure of distinguishability between the two arms,
and we use $i \succ j$ when $\Delta_{i, j} > 0$. We also refer to $\Delta_{i, j}$ as the \emph{gap} between $i$ and $j$.

This problem has been studied under various 
conditions on the pairwise probabilities $p_{i,j}$'s.
One such condition is the {\em strong stochastic transitivity} and {\em stochastic triangle inequality} (SST+STI) condition where there exists an ordering over arms, denoted by $\succeq$, such that for every triple $i \succeq j \succeq k$, we have $ \Delta_{i, k} \geq \max\{\Delta_{i,j}, \Delta_{j, k}\}, $
and 
$\Delta_{i, k} \leq \Delta_{i,j} + \Delta_{j, k}$ \cite{YueBK+12, YueJo11}.
In this paper, we work under the well-studied Condorcet 
winner condition, which is much more general than the 
SST+STI condition \cite{Urvoy+13,Zoghi+14, Komiyama+15a}.
We say that arm $i$ is a \emph{Condorcet winner} if, and only if, $p_{i, j} > \frac{1}{2}$ for all $j \in \B \setminus \{i\}$.
The \emph{Condorcet condition} means that there exists a Condorcet winner.

Throughout the paper, we let $a^*$ refer to the Condorcet arm. To further simplify notation, we define $\Delta_j = \Delta_{a^*, j}$; that is, the gap between $a^*$ and $j$. We define the \emph{regret} per time-step as follows: suppose arms $i_t$ and $j_t$ are chosen in time-step  $t$, then the regret $r(t) = \frac{\Delta_{i_t} + \Delta_{j_t}}{2}$. 
The cumulative regret up to time $T$ is $R(T) = \sum_{t=1}^T r(t)$, where $T$ is the time horizon, and it's assumed that $K \leq T$.  
The cumulative regret can be equivalently stated as $R(T) = \frac{1}{2}\sum_{j=1}^{K} T_j \Delta_{j}$, where $T_j$ denotes the number comparisons involving arm $j$.
The goal of an algorithm is to minimize the cumulative 
$R(T)$.
We define $\Delta_{\min} = \min_{j:\Delta_j >0}\Delta_j$ to be the smallest non-zero gap of any arm with $a^*$.

\subsection{Batch Policies}
\ifConfVersion
\vspace{-0.1in}
\else
\fi
In traditional bandit settings, actions are performed \emph{sequentially}, utilizing  the results of \emph{all prior actions} in determining the next action. In the batched setting, the algorithm must commit to a round (or \emph{batch}) of actions to be performed  \emph{in parallel}, and can only observe the results after all   actions in the batch have been performed. More formally, given a number $B$ of batches, the algorithm proceeds as follows. In each batch $r = 1, 2, \ldots B$, the algorithm first decides on the comparisons to be performed; then,  \emph{all}  outcomes of the batch-$r$  comparisons  are received simultaneously. The algorithm can then, \emph{adaptively}, select the next batch of comparisons. 
Note that even the size of the next 
batch can be  adaptively  decided based on the observations in previous batches.
Finally,
the total number of comparisons (across all batches) must sum to $T$. We assume that the values of $T$ and $B$ are known. Observe that when $T = B$, we recover the fully sequential setting.

\subsection{Results and Techniques}
\ifConfVersion
\vspace{-0.1in}
\else
\fi
Given any integer $B \geq 1$, we obtain a $B$-round algorithm for the dueling bandit problem. We provide both  high-probability and expected regret bounds, stated in the following theorems.

\begin{theorem}\label{thm:condorcet-init-high-prob}
For any integer $B \geq 1$, there is an algorithm for the $K$-armed dueling bandit problem
that uses at most $B$ rounds with the following guarantee. For any $\delta > 0$, with probability at least $1 - \delta - \frac{1}{T}$, its regret   under the Condorcet
condition is at most 
\ifConfVersion
\begin{align}
     R(T) \leq  O\left( T^{1/B} \cdot \frac{K^2 \log(K)}{\Delta_{\min}^2} \cdot \log\left(\frac{\log K}{\Delta_{\min}}\right) \right) &\,+\, O\left( T^{2/B} \cdot K^2\cdot \sqrt{\frac{1}{\delta}}\right) \notag \\
     &+\, \sum_{j \neq a^*} O\left(\frac{T^{1/B}\cdot \log(KT)}{\Delta_j}\right). \notag
\end{align}
\else
$$ R(T) \leq  O\left( T^{1/B} \cdot \frac{K^2 \log(K)}{\Delta_{\min}^2} \cdot \log\left(\frac{\log K}{\Delta_{\min}}\right) \right) \,+\, O\left( T^{2/B} \cdot K^2\cdot \sqrt{\frac{1}{\delta}}\right) \,+\, \sum_{j \neq a^*} O\left(\frac{T^{1/B}\cdot \log(KT)}{\Delta_j}\right).$$
\fi
\end{theorem}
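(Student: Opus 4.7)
The plan is to design a $B$-round algorithm with geometrically increasing batch sizes (batch $r$ performs roughly $T^{r/B}$ comparisons) whose actions in each batch are structured as two interleaved subroutines: a ``trapping'' subroutine that identifies and retains the Condorcet arm $a^*$, and an ``elimination'' subroutine that removes arms that are significantly dominated. Concretely, the algorithm maintains a current candidate $\bh$ and an active set $\A$, and in each batch it pairs $\bh$ against every $j\in\A$ a number of times that grows with the batch index; the candidate $\bh$ is replaced whenever some $j$ beats it in empirical margin by more than a confidence radius, and $j$ is dropped from $\A$ when $\bh$ beats $j$ by more than a confidence radius.

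For the analysis, I would first define the good event $\calE$ that $a^*$ becomes the candidate within the first few batches and remains the candidate for the rest of the execution; the authors call this $a^*$ being ``trapped.'' The crux is to show that $\Pr[\calE^c]\leq\delta$ while paying only $O(T^{2/B}\cdot K^2\cdot\sqrt{1/\delta})$ regret on $\calE^c$; unlike a standard $\log(1/\delta)$ Chernoff bound, the $\sqrt{1/\delta}$ scaling arises from a second-moment / Chebyshev-style control on the regret accumulated before $a^*$ is trapped, allowing one to later integrate over $\delta$ to recover expected regret (in the spirit of \cite{Zoghi+14}). Conditioned on $\calE$, the elimination subroutine removes each arm $j$ with gap $\Delta_j$ once it has been compared against $\bh=a^*$ roughly $O(\log(KT)/\Delta_j^2)$ times, contributing $O(\log(KT)/\Delta_j)$ regret per arm, with an extra $T^{1/B}$ inflation because the batch in which $j$ would have been eliminated may overshoot by one full batch of comparisons. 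The remaining term $O\bigl(T^{1/B}\cdot K^2\log(K)/\Delta_{\min}^2\cdot\log(\log K/\Delta_{\min})\bigr)$ captures the cost of the trapping itself: up to $\log K$ candidate replacements before $a^*$ is locked in, each costing roughly $K/\Delta_{\min}^2$ comparisons per active pair for reliability, with the $\log(\log K/\Delta_{\min})$ factor coming from a union bound over confidence radii across rounds.

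The hardest step will be the trapping lemma itself: proving that once $a^*$ becomes the candidate, it remains the candidate for all subsequent batches with at least constant probability, and converting this constant-probability guarantee into the specific $\sqrt{1/\delta}$ high-probability form needed for later integration to expected regret. This demands careful control of correlated failure events across batches, since the comparisons between $a^*$ and each $j\in\A$ must simultaneously favor $a^*$ in every remaining batch; the confidence radii must be tuned so that $a^*$ is never wrongly evicted while non-Condorcet candidates are evicted quickly enough to bound the pre-trapping regret. Once this trapping lemma is in place, a union bound over the trapping-failure, candidate-replacement, and elimination failure events combines the three contributions into the claimed regret expression.
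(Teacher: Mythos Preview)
Your high-level decomposition (trapping phase, then elimination with one-batch overshoot) matches the paper, but two central mechanisms are misidentified, and your proposed algorithm lacks the structural feature that actually makes trapping work.

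\textbf{The $\sqrt{1/\delta}$ term is not Chebyshev.} In the paper there is no second-moment argument. The algorithm uses two confidence radii: a tight one $\gamma_{i,j}(r)$ giving the event $G$ (probability $\ge 1-1/T$) that $a^*$ is never eliminated, and a looser one $c_{i,j}(r)=\sqrt{2\log(2K^2 q_r)/N_{i,j}(r)}$ whose per-round failure probability is $O(1/q_r^2)$. Summing over rounds $r\ge C(\delta):=\lceil\tfrac12\log_q(1/\delta)\rceil$ gives total failure probability $\le\delta$; this is the event $E(\delta)$. The $\sqrt{1/\delta}$ then appears purely deterministically: the regret in rounds $\le C(\delta)$ is at most $K^2\sum_{r\le C(\delta)+1}q^r\le O(K^2 q^{C(\delta)+1})=O(K^2 q^2\sqrt{1/\delta})$. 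So the square root comes from the \emph{definition} of the burn-in round $C(\delta)$, not from any tail inequality on a regret random variable.

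\textbf{The algorithm's trapping mechanism is different and essential.} Your design compares only the current candidate $\bh$ against $\A$. Under the Condorcet condition (no transitivity), this allows the candidate to cycle among non-$a^*$ arms for many rounds, since some $c\ne a^*$ may beat the current $\bh$ with larger margin than $a^*$ does; meanwhile $a^*$ never accumulates comparisons against arms other than the current $\bh$. The paper's algorithm avoids this: in each round, any arm \emph{not} defeated by the candidate $i_r$ is compared against \emph{all} active arms. Since (under $E(\delta)$) $a^*$ is never in anyone's defeated set after round $C(\delta)$, $a^*$ is compared to everyone in every subsequent round, so its defeated set $D_r(a^*)$ grows to all of $\A$ by a fixed round $r(\delta)$ depending on $\Delta_{\min}$. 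The $K^2\log(K)/\Delta_{\min}^2\cdot\log(\log K/\Delta_{\min})$ term is simply the all-pairs cost up to that round, not ``$\log K$ candidate replacements.'' Without the all-pairs fallback for undominated arms, your trapping lemma as stated does not go through.
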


\begin{theorem}\label{thm:condorcet-init-exp}
For any integer $B \geq 1$, there is an algorithm for the $K$-armed dueling bandit problem 
that uses at most $B$ rounds, with expected regret   under the Condorcet 
condition at most 
\ifConfVersion
\begin{align}
    \E[R(T)] =  O\left( T^{1/B} \cdot \frac{K^2 \log(K)}{\Delta_{\min}^2} \cdot \log\left(\frac{\log K}{\Delta_{\min}}\right) \right) &\,+\, O\left( T^{2/B} \cdot K^2\right) \notag \\
    &+\, \sum_{j \neq a^*} O\left(\frac{T^{1/B}\cdot \log(KT)}{\Delta_j}\right).\notag
\end{align}
\else
$$\E[R(T)] =  O\left( T^{1/B} \cdot \frac{K^2 \log(K)}{\Delta_{\min}^2} \cdot \log\left(\frac{\log K}{\Delta_{\min}}\right) \right) \,+\, O\left( T^{2/B} \cdot K^2\right) \,+\, \sum_{j \neq a^*} O\left(\frac{T^{1/B}\cdot \log(KT)}{\Delta_j}\right).$$
\fi
\end{theorem}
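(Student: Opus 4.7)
The plan is to derive Theorem~\ref{thm:condorcet-init-exp} from Theorem~\ref{thm:condorcet-init-high-prob} by a standard tail-integration trick, using the same algorithm \balg{} (for which the high-probability guarantee has already been established). Let $X = R(T)$ denote the regret. Writing
\[
A := O\!\left(T^{1/B}\cdot\frac{K^2\log(K)}{\Delta_{\min}^2}\cdot\log\!\left(\tfrac{\log K}{\Delta_{\min}}\right)\right),\quad c := O\!\left(T^{2/B}K^2\right),\quad C := \sum_{j\neq a^*} O\!\left(\tfrac{T^{1/B}\log(KT)}{\Delta_j}\right),
\]
Theorem~\ref{thm:condorcet-init-high-prob} gives, for every $\delta\in(0,1]$, the tail bound
\[
\Pr\!\left[X > A + C + \tfrac{c}{\sqrt{\delta}}\right] \leq \delta + \tfrac{1}{T}.
\]
The first step is to invert this: substituting $y = A + C + c/\sqrt{\delta}$ (so $\delta = c^2/(y-A-C)^2$), I obtain that for every $y \ge A+C+c$,
\[
\Pr[X > y] \;\leq\; \frac{c^2}{(y-A-C)^2} + \frac{1}{T}.
\]

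Next I would compute the expectation via $\E[X] = \int_0^\infty \Pr[X > y]\,dy$, using the trivial worst-case bound $X \le T$ (since the per-step regret is at most $\tfrac12$) to cap the integration at $T$. Splitting at $y^* := A+C+c$ and using $\Pr[X>y]\le 1$ on $[0,y^*]$,
\[
\E[X] \;\leq\; y^* + \int_{y^*}^{T}\!\left(\frac{c^2}{(y-A-C)^2} + \frac{1}{T}\right)dy \;\leq\; (A+C+c) + c + 1 \;=\; O(A+C+c),
\]
since $\int_{y^*}^{\infty} c^2/(y-A-C)^2\,dy = c$ and $\int_0^T \tfrac{1}{T}\,dy = 1$. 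This matches the three terms of Theorem~\ref{thm:condorcet-init-exp} exactly: the $A$ and $C$ terms reappear unchanged, while the $c/\sqrt{\delta}$ term collapses to $O(c) = O(T^{2/B}K^2)$, removing the $1/\sqrt{\delta}$ factor.

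There is essentially no obstacle here beyond bookkeeping; the only mildly delicate points are (i) restricting attention to $\delta \in (0,1]$ (for larger $\delta$ the bound is vacuous) and ensuring the substitution $y = A+C+c/\sqrt{\delta}$ is valid precisely when $y \ge A+C+c$, and (ii) verifying that the $1/T$ additive failure probability contributes only $O(1)$ to the expected regret because $X \le T$ deterministically. The real work has already been done in proving Theorem~\ref{thm:condorcet-init-high-prob}; Theorem~\ref{thm:condorcet-init-exp} is a direct corollary obtained by this integration of tails, in the same spirit as the reduction used by \cite{Zoghi+14}.
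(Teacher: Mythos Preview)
Your proposal is correct and follows essentially the same approach as the paper: both derive the expected-regret bound from Theorem~\ref{thm:condorcet-init-high-prob} by integrating the tail, exploiting that the $\sqrt{1/\delta}$ term is integrable and that the residual $1/T$ failure probability contributes only $O(1)$ since $R(T)\le T$ deterministically. The only cosmetic difference is that the paper uses the inverse-CDF identity $\E[X]=\int_0^1 F_X^{-1}(z)\,dz$ and substitutes $z=1-\delta-1/T$ (then uses $\int_0^1 \delta^{-1/2}\,d\delta\le 2$), whereas you use the equivalent tail formulation $\E[X]=\int_0^\infty \Pr[X>y]\,dy$ and substitute $y=A+C+c/\sqrt{\delta}$; the two computations are related by a change of variables and yield the same $O(A+C+c)$ bound.
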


When the number of rounds $B=\log(T)$, we obtain a batched algorithm that achieves the asymptotic optimality (in terms of $T$), even for sequential algorithms. We formalize this observation in the following corollary.

\begin{corollary}\label{cor:log}
There is an algorithm for the $K$-armed dueling bandit problem 
that uses at most $\log(T)$ rounds, 
with expected regret   under the Condorcet 
condition  at most 
$$\E[R(T)] =  O\left( \frac{K^2 \log(K)}{\Delta_{\min}^2} \cdot \log\left(\frac{\log K}{\Delta_{\min}}\right) \right) \,+\, \sum_{j \neq a^*} O\left(\frac{ \log(KT)}{\Delta_j}\right).$$
\end{corollary}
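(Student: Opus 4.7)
The plan is to obtain the corollary as a direct consequence of Theorem \ref{thm:condorcet-init-exp} by substituting $B = \log(T)$. The key observation is that with $B = \log(T)$ (natural logarithm), the $T$-dependent prefactors $T^{1/B}$ and $T^{2/B}$ appearing in the expected regret bound of Theorem \ref{thm:condorcet-init-exp} collapse to constants. Specifically,
\begin{equation*}
T^{1/B} \;=\; T^{1/\log T} \;=\; e, \qquad T^{2/B} \;=\; T^{2/\log T} \;=\; e^2,
\end{equation*}
so both are $O(1)$ and can be hidden in the big-$O$ notation.

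After this substitution, the three terms of Theorem \ref{thm:condorcet-init-exp} become, respectively,
$O\!\left(\frac{K^2 \log K}{\Delta_{\min}^2} \log\!\left(\frac{\log K}{\Delta_{\min}}\right)\right)$,
$O(K^2)$, and
$\sum_{j \neq a^*} O\!\left(\frac{\log(KT)}{\Delta_j}\right)$.
The second term $O(K^2)$ is dominated by the first, since $\Delta_{\min} \in (0, 1/2)$ implies $1/\Delta_{\min}^2 \geq 1$ and $\log K \geq 1$ (assuming $K \geq 2$; the case $K=1$ is trivial as there is nothing to compare), so $\frac{K^2 \log K}{\Delta_{\min}^2} \log\!\left(\frac{\log K}{\Delta_{\min}}\right) = \Omega(K^2)$. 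Absorbing the second term into the first then yields exactly the bound claimed in the corollary.

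There is essentially no obstacle here: the corollary is a bookkeeping consequence of plugging a specific value of $B$ into an already-established bound. The only thing to double-check is the convention of logarithm base (matching the $\log$ used in the statement of $B = \log T$ with the one producing $T^{1/B} = e$), and that the algorithm of Theorem \ref{thm:condorcet-init-exp} is well-defined for the integer value $B = \lceil \log T \rceil$; the latter only changes constants since $T^{1/\lceil \log T \rceil} \leq e$.
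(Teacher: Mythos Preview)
Your proposal is correct and follows exactly the approach the paper intends: the corollary is stated immediately after Theorem~\ref{thm:condorcet-init-exp} as the specialization $B=\log T$, and the paper offers no separate proof beyond that substitution. One tiny quibble: the claim ``$\log K \ge 1$ for $K\ge 2$'' is not literally true for the natural logarithm, but this does not affect the absorption of the $O(K^2)$ term, since $\Delta_{\min}<1/2$ still forces $\frac{K^2\log K}{\Delta_{\min}^2}\log\!\big(\tfrac{\log K}{\Delta_{\min}}\big)=\Omega(K^2)$ for all $K\ge 2$.
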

By a lower-bound result from \cite{AgarwalGN22}, it follows that no algorithm can achieve  $O(\frac{K}{\Delta_{\min}}\cdot poly(\log T))$ regret using $o(\frac{\log T}{\log\log T})$ rounds, even under the SST+STI condition. So, the $O(\log T)$ rounds required to achieve asymptotic optimality in Corollary~\ref{cor:log} is nearly the best possible.

\ifConfVersion
{\bf Technical Challenges.}
\else
\paragraph{Technical Challenges.}\fi The only prior approach for batched dueling bandits (under the Condorcet condition) is the algorithm PCOMP from \cite{AgarwalGN22}, which  performs all-pairs comparisons among arms in an active set. Such an approach cannot achieve regret better than $O(K^2 \log T)$ because the active set may remain large throughout. In order to achieve better regret bounds, \cite{AgarwalGN22} focus on the stronger SST+STI condition. In this setting, their 
main idea  is to first sample a \emph{seed set}, and use this seed set to eliminate sub-optimal arms. Their algorithm proceeds by performing  all pairwise comparisons  between the seed set and the set of active arms. However, the analysis of these `seeded comparison' algorithms  crucially rely on the total-ordering  imposed by the SST and STI assumptions. Unfortunately, there is no such structure to exploit in the Condorcet setting: if the seed set does not contain the Condorcet winner, we immediately incur high regret. 

The existing fully sequential algorithms such as RUCB \cite{Zoghi+14} and 
RMED \cite{Komiyama+15a}
are \emph{highly adaptive} in nature.
For instance, RUCB plays each \emph{candidate} arm against an optimistic \emph{competitor}
arm using upper confidence bounds (UCB) on pairwise  
probabilities.
This allows RUCB to quickly filter out candidates
and uncover the Condorcet arm.
Similarly, RMED plays each arm against a carefully selected \emph{competitor} arm that is likely 
to beat this arm.
However, such \emph{competitors} can change frequently over trials 
in both RUCB and RMED.
Since the batched setting requires 
comparisons 
to be predetermined,
we do not have the flexibility to adapt to such changes in \emph{competitors}.
Hence, these existing fully sequential algorithms cannot 
be easily implemented in our setting.

Furthermore, we might also be tempted to consider an
\emph{explore-then-exploit} strategy where we first \emph{explore} to find the Condorcet arm
and \emph{exploit} by playing this arm for remaining trials.
However, this strategy is likely to fail  because identifying the Condorcet arm with high probability might involve performing many comparisons, directly leading to high ($\Omega(K^2 \log T)$) regret; on the other hand, if the Condorcet winner is not identified with high probability, the exploit phase becomes expensive. This motivated us to consider algorithms that allow some form of \emph{recourse}; that is, unless an arm is found to be sub-optimal, it must be given the opportunity to participate in the comparisons (as it could be the Condorcet winner). 

The idea behind our algorithm is to identify the Condorcet winner $a^*$ in a small {\em expected} number of rounds, after which it uses this arm as an ``anchor'' to eliminate sub-optimal arms while incurring low regret. 
To identify the best arm, in each round we define a candidate arm and compare it against arms that it ``defeats''. Arms that are not defeated by the candidate arm are compared to \emph{all} active arms: this step ensures that the Condorcet winner is eventually discovered. 
We show that $a^*$ becomes the candidate, and \emph{defeats all other arms} within a small number of rounds (though the algorithm may not know if this has occurred). 
Additionally, once this condition is established, 
it remains invariant in future rounds. This allows us to eliminate sub-optimal arms and achieve low regret.

\ifConfVersion
{\bf Comparison to \rucb.}
\else
\paragraph{Comparison to \rucb.}\fi Initially, \rucb \ puts all arms in a pool of potential champions,  and ``optimistically'' (using a upper confidence bound) performs all pairwise comparisons.
Using these, it constructs a set of candidates $C$.
If $|C| = 1$, then that arm is the hypothesised Condorcet winner and placed in a set $B$.
Then, a randomized strategy is employed to choose 
a champion arm $a_c$ (from sets $C$ and $B$) which is compared to arm $a_d$ which is most likely to beat it. 
The pair $(a_c, a_d)$ is compared, the probabilities are updated and the algorithm continues.
Although our algorithm also seeks to identify the best arm, we do not employ the UCB approach nor do we use any randomness. 
In their analysis, \cite{Zoghi+14} show that the best arm eventually enters the set $B$, and remains in $B$: we also show a similar property for our algorithm in the analysis.
Finally,
similar to the analysis of \cite{Zoghi+14}, we first give a high-probability regret bound for our algorithm which we then convert to a bound on the expected regret.

\ifConfVersion
\else
\paragraph{Organization.} The rest of the paper is organized as follows. In \S\ref{sec:related-work}, we discuss related work on the $K$-armed dueling bandit problem. We present our batched algorithm in \S\ref{sec:alg}, and present computational results in \S\ref{sec:computations}. We conclude in \S\ref{sec:conclusion}.
\fi

\ifConfVersion
\vspace{-0.1in}
\else
\fi
\section{Related Work}\label{sec:related-work}
\ifConfVersion
\vspace{-0.1in}
\else
\fi
The $K$-armed dueling bandit problem has been widely studied in recent years (we refer the 
reader to \cite{SuiZHY18} for a comprehensive survey). 
Here, we survey the works that are most closely related to our setting.
This problem was first studied in 
\cite{YueBK+12} under the SST and STI setting.
The authors obtained
a worst-case regret upper bound of $\widetilde{O}(K\log T/\Delta_{\min})$ and provided a matching lower bound.
\cite{YueJo11} considered a slightly more general version of the 
SST and STI setting and achieved an instance-wise optimal 
regret upper bound of $\sum_{j: \Delta_j > 0} O\left({\log (T)}/{\Delta_{j}}\right)$.
Since, the SST+STI condition imposes a total order
over the arms and might not hold for real-world datasets, 
\cite{Urvoy+13} initiated the study of dueling bandits under the Condorcet winner condition.
\cite{Urvoy+13} 
proved a $O(K^2\log T/\Delta_{\min})$ regret upper bound
under the Condorcet condition, which was 
improved by \cite{Zoghi+14} 
to  $O(K^2/\Delta_{\min}^2) + \sum_{j: \Delta_j > 0}O( \log T/\Delta^2_{j})$.
\cite{Komiyama+15a} achieved a similar but tighter KL divergence-based bound, which is shown to be \emph{asymptotically instance-wise optimal} (even in terms constant factors).
There are also other works that improve the dependence on $K$ in the upper bound, but 
suffer a worse dependence on $\Delta_j$'s \cite{Zoghi+15a}.
This problem has also been studied 
under other noise models such as utility based models \cite{Ailon+14}
and other notions of regret \cite{ChenFr17}.
Alternate notions of winners
such as Borda winner \cite{Jamieson+15}, Copeland winner \cite{Zoghi+15, Komiyama+16, WuLiu16}, and von Neumann winner  \cite{Dudik+15} have also been considered.
There are also several works on extensions of dueling bandits that allow multiple arms to be compared at once \cite{Sui+17, Agarwal+20, SahaG19}.

All of the aforementioned works on the dueling bandits problem are limited to the sequential setting.
Recently, \cite{AgarwalGN22} initiated the 
study of the 
batched version of the $K$-armed dueling bandits. 
Their main results are under the SST and STI setting. They give two algorithms, called SCOMP and SCOMP2, for the batched $K$-armed dueling bandit problem. 
For any integer $B$, 
SCOMP uses at most $B+1$ batches and has expected regret bounded by $\sum_{j: \Delta_{j} > 0}O({\sqrt{K}T^{1/B} \log(T)}/{\Delta_{j}})$. 
When $B=\log (T)$, this nearly matches (up to a factor of $\sqrt{K}$) the best known instance-dependent regret bound of $\sum_{j: \Delta_j > 0} O({\log(T)}/{\Delta_j})$ obtained by \cite{YueBK+12}.
SCOMP2 aims to achieve better worst-case regret: 
it
uses at most $2B+1$ batches,
and  
has regret  
$  O\left({KB T^{1/B} \log(T)}/{\Delta_{\min}}\right).$
Thus, when $B = \log(T)$, the expected worst-case regret is  $O\left({K \log^2(T)}/{\Delta_{\min}}\right)$, matching the best known result in the sequential setting up to an additional logarithmic factor. Under the Condorcet condition, \cite{AgarwalGN22} give a straightforward pairwise comparison algorithm (PCOMP), that achieves expected regret bounded by $O(K^2\log(T)/\Delta_{\min})$ in $\log(T)$ batches.
They also provide a nearly matching lower bound of $\Omega(\frac{KT^{1/B}}{B^2\Delta_{\min}})$ for any $B$-batched algorithm. This implies that our bound (for $B$-round algorithms) in \Cref{thm:condorcet-init-exp} cannot be significantly improved.

\ifConfVersion
\else
Batched processing for the stochastic multi-armed bandit (MAB) problem  has been investigated extensively in the past few years.
A special case when there are two bandits was studied by~\cite{PerchetRC+16}. They obtain a worst-case regret bound of $O\left(\left(\frac{T}{\log(T)}\right)^{1/B}\frac{\log(T)}{\Delta_{\min}}\right)$.
\cite{Gao+19} studied the general problem and obtained a worst-case regret bound of 
$O\left(\frac{K\log(K)T^{1/B}\log(T)}{\Delta_{\min}}\right)$, which was later improved by \cite{EsfandiariKM+21} to $O\left(\frac{KT^{1/B} \log(T)}{\Delta_{\min}}\right)$. Furthermore, \cite{EsfandiariKM+21} obtained an instance-dependent regret bound of $\sum_{j:\Delta_j > 0}T^{1/B}O\left(\frac{\log(T)}{\Delta_j}\right)$.
Our results for batched dueling bandits are of a similar flavor; that is, we get a similar dependence on
$T$ and $B$.
\cite{EsfandiariKM+21} also give batched algorithms for stochastic linear bandits and adversarial multi-armed bandits.

Recently, \cite{SahaGi22} designed a fully adaptive algorithm  achieving an optimal regret of 
$\sum_{j: \Delta_{j} > 0} \frac{O( \log T)}{\Delta_{j}}$ for dueling bandits under the Condorcet setting. This algorithm is based on the idea of \emph{dueling} two classical bandit (MAB)   algorithms against each other in a repeated zero-sum game with carefully designed rewards. 
The reward for one algorithm depends on the actions of the other; hence,
these algorithms need to 
achieve \emph{best-of-both-worlds} guarantee for both stochastic and adversarial settings. 
However, the approach of \cite{SahaGi22}  is not directly applicable to the {\em batched} setting that we consider. This is because, as shown by \cite{EsfandiariKM+21}, any $B$-round algorithm for batched MAB in the adversarial setting has regret $\Omega({T/B})$.

Adaptivity and batch processing has been recently studied for stochastic submodular cover~\cite{GolovinK-arxiv, AAK19, EsfandiariKM19, GGN21}, and for various stochastic ``maximization'' problems such as knapsack~\cite{DGV08,BGK11}, matching~\cite{BGLMNR12,BehnezhadDH20}, probing~\cite{GN13} and orienteering~\cite{GuhaM09,GuptaKNR15,BansalN15}.
Recently, there have also been several results examining the role of adaptivity in (deterministic) submodular optimization; e.g. ~\cite{BalkanskiS18,BalkanskiBS18,BalkanskiS18b,BalkanskiRS19,ChekuriQ19}. 
\fi

\section{The Batched Algorithm}\label{sec:alg}
\ifConfVersion
\vspace{-0.1in}
\else
\fi
In this section, we describe a $B$-round algorithm for the $K$-armed dueling bandit problem under the Condorcet condition. Recall that given a set of $K$ arms, $\B = \{1, \ldots, K\}$, and a positive integer $B \leq \log(T)$, we wish to find a sequence of $B$ batches of noisy comparisons with low regret.
Given arms $i$ and $j$, recall that $p_{i, j} = \frac{1}{2} + \Delta_{i, j}$ denotes the probability of $i$ winning over $j$ where $\Delta_{i, j} \in \left(-1/2, {1}/{2}\right)$.  
We use $a^*$ to denote the Condorcet winner; recall that $a^*$ is a Condorcet winner if $p_{a^*, j} \geq 1/2$ for all $j \in \B$. To simplify notation, we use $\Delta_j = \Delta_{a^*,j}$.
Before describing our algorithm, we first define some notation.
We use $\A$ to denote the current set of \emph{active} arms; i.e., the arms that have not been eliminated. We will use index $r$ for rounds or batches. If pair $(i, j)$ is compared in round $r$, it is compared $q_r =  \lfloor q^r \rfloor $ times where $q = T^{1/B}$. We define the following quantities at the \emph{end} of each round $r$:
\begin{itemize}
\item $N_{i, j}(r)$ is the total number of times the pair $(i, j)$ has been compared.
\item $\widehat{p}_{i, j}({r})$ is the frequentist estimate of $p_{i, j}$, i.e.,  
\begin{align}
\label{eq:pair_est}
\widehat{p}_{i, j}({r}) = \frac{\# \ i\text{ wins against } j \text{ until end of round } r}{N_{i, j}(r)}
    \,.
\end{align}

\item  Two confidence-interval radii for each $(i,j)$ pair:
\begin{equation}
c_{i, j}(r) = \sqrt{\frac{2\log(2K^2q_r)}{N_{i, j}(r)}} \qquad \mbox{and}\qquad \gamma_{i, j}(r) = \sqrt{\frac{\log(K^2BT)}{2N_{i, j}(r)}}
\end{equation} 

\end{itemize}

We now describe our $B$-round algorithm, called \textsc{{\bf C}atching the {\bf C}ondorcet winner in {\bf B}atches} (or, \balg). At a high-level, the algorithm identifies the best arm $a^*$ in a small expected number of rounds, after which it uses this arm as an ``anchor'' to eliminate sub-optimal arms while incurring low regret. 
In every round $r$, we do the following:
\begin{enumerate}
    \item  We define a \emph{defeated set} $D_r(i)$ for every active arm $i$; this set comprises arms that are defeated \emph{with confidence} by $i$. Specifically, $j \in D_r(i)$ if $\widehat{p}_{i, j}(r-1) > 1/2 + c_{i, j}(r-1)$.
    
    \item Then, we define a \emph{candidate} $i_r$ as the arm that defeats the most number of arms; that is, $i_r = \arg\max_{i \in \A} |D_r(i)|$.
    
    \item For every arm $i \neq i_r$: 
    \begin{itemize}
        \item If $i \in D_r(i_r)$, then we compare $i$ to $i_r$ for $q_r$ times. The idea here is to use $i_r$ as an anchor against $i$. {We will show that $a^*$ becomes the candidate $i_r$ in a small number of rounds.
        Then, this step ensures that we eliminate arms efficiently using $a^*$ as an anchor.}
        \item If  $i \notin D_r(i_r)$, then $i$ is compared to all arms in $ \A$ for $q_r$ times. This step crucially protects the algorithm against cases where a sub-optimal arm becomes the candidate (and continues to become the candidate).
        {For example, suppose $K = [5]$ and the arms are linearly ordered as $1 \succ 2 \succ \cdots \succ 5$. Furthermore suppose that in some round $r$, we have that (a) $2$ defeats $3, 4, 5$ and (b) $1$ (best arm) defeats $2$ but not the others. So, $2$ is the candidate in round $r$; if $1$ is not compared to $3, 4, 5$, then $2$ would continue to be the candidate (leading to high regret).}
    \end{itemize}
    
    \item  If, for any arm $j$, there is arm $i$ 
    such that $\widehat{p}_{i,j}(r) > \frac{1}{2} + \gamma_{i, j}(r)$, then  $j$ is eliminated from $\A$.
    
\end{enumerate}

This continues until $T$ total comparisons are performed. See \Cref{alg:b-round-condorcet} for a formal description. 
The main result of this section is to show that \balg \ achieves the guarantees stated in Theorems~\ref{thm:condorcet-init-high-prob} and \ref{thm:condorcet-init-exp}.

\begin{algorithm}
\caption{\balg  \ (\textsc{Catching the Condorcet winner in Batches})}
\label{alg:b-round-condorcet}
\begin{algorithmic}[1]
\State \textbf{Input:} Arms $\B$, time-horizon $T$, integer $B \geq 1$
\State active arms $\A \gets \B$,  $r \gets 1$, emprical probabilities $\widehat{p}_{i, j}(0) = \frac{1}{2}$ for all $i, j\in \B^2$
\While{number of comparisons $\leq T$} 
\State \textbf{if} $\A = \{i\}$ for some $i$ \textbf{then} play $(i,i)$ for remaining trials
\State $D_r(i) \gets \{j \in \A : \widehat{p}_{i,j}(r-1) > \frac{1}{2} + c_{i, j}(r-1)\}$
\State $i_r \gets \arg\max_{i \in \A} |D_r(i)|$
\For{ $i \in \A \setminus \{i_r\}$}
\If{$i \in D_r(i_r)$}  
\State compare $(i_r, i)$ for $q_r$ times
\Else
\State for each $j \in \A$, compare $(i, j)$ for $q_r$ times  
\EndIf
\EndFor
\State compute $\widehat{p}_{i, j}(r)$ values
\If{$\exists i, j$ : $\widehat{p}_{i, j}(r) > \frac{1}{2} +\gamma_{i, j}(r)$} 
\State $\A \gets \A \setminus \{j\}$
\EndIf
\State $r \gets r+1$
\EndWhile
\end{algorithmic}
\end{algorithm}

\paragraph{Overview of the Analysis.} We provide a brief outline of the proofs of our main 
results. Let $\delta>0$ be any value. Towards proving \Cref{thm:condorcet-init-high-prob}, we first define two events:
\begin{itemize}
    \item The first event, denoted $G$, ensures that $a^*$ is not eliminated during the execution of \balg. We show that $\pr(G) \geq 1 - 1/T$.
    \item The second event, denoted $E(\delta)$, says that there exists a round $C(\delta)$ (defined later) such that for all $r > C(\delta)$, the estimate $\widehat{p}_{i, j}(r-1)$ satisfies the confidence interval of $c_{i, j}(r-1)$. Moreover, $\pr(E(\delta)) \geq 1 -\delta$.
\end{itemize}
By union bound, $\pr(G \cap E(\delta)) \geq 1 - \delta- 1/T$. Together, we use $G$ and $E(\delta)$ to argue that:
\begin{itemize}
    \item the best arm, $a^*$, is \emph{not defeated} by any arm $i$ in any round $r > C(\delta)$,
    \item and that there exists a round $\rdel\ge C(\delta)$ such that for every round after $\rdel$, arm $a^*$ defeats \emph{every other arm}.
\end{itemize}
Under the event $G \cap E(\delta)$, we analyze the regret in two parts: (i) regret incurred up to round $\rdel$, which is upper
bounded by $K^2 \sum_{r \leq \rdel} q^{r}$ and 
(ii) regret after $\rdel$, which is the regret 
incurred in eliminating sub-optimal arms using $a^*$
as an anchor.
Finally, we can use the high-probability bound to also obtain a bound on the expected regret, proving \Cref{thm:condorcet-init-exp}. 
\ifConfVersion
We provide some details of the proof of \Cref{thm:condorcet-init-high-prob}. We defer the proof of \Cref{thm:condorcet-init-exp} to \Cref{app:exp}.
\else\fi

\subsection{The Analysis}
\ifConfVersion
\vspace{-0.1in}
\else
\fi
In this section, we prove high-probability and expected regret bounds for \balg.
Recall that $q = T^{1/B}$, and that 
$q \geq 2$. 
We first prove the following lemma which will be used to prove that $a^*$ is never eliminated.

\begin{lemma}\label{lem:confidence}
For any batch $r \in [B]$, and for any pair $(i, j)$,  
we have $$ \pr\left(|\widehat{p}_{i, j}(r) - p_{i, j}| > \gamma_{i, j}(r) \right) \leq 2\eta, $$ where
$\eta = 1/K^2BT$.
\end{lemma}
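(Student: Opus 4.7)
The plan is a direct application of Hoeffding's inequality. The key observation is that the outcomes of any sequence of $(i,j)$-comparisons are iid Bernoulli($p_{i,j}$)---the environment generates each comparison outcome independently and \balg\ only controls which pairs get compared (not the outcomes). So I would write the successive $(i,j)$-outcomes as an iid sequence $X_1, X_2, \ldots$ with partial sums $S_n = \sum_{\ell=1}^n X_\ell$, so that $\widehat{p}_{i,j}(r) = S_{N_{i,j}(r)}/N_{i,j}(r)$.

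For any fixed integer $n \geq 1$, Hoeffding's inequality applied to $n$ iid Bernoulli($p_{i,j}$) samples with deviation $\sqrt{\log(K^2 BT)/(2n)}$ gives
\[
\pr\!\left(\bigl|S_n/n - p_{i,j}\bigr| > \sqrt{\tfrac{\log(K^2 BT)}{2n}}\right) \;\leq\; 2\exp\!\left(-2n\cdot\tfrac{\log(K^2 BT)}{2n}\right) \;=\; \tfrac{2}{K^2 BT} \;=\; 2\eta.
\]
Since the deviation parameter here matches $\gamma_{i,j}(r)$ on the event $\{N_{i,j}(r)=n\}$, the plan is to finish by lifting this bound to the random sample size $N_{i,j}(r)$, via union-bounding (or marginalizing) over its possible realizations.

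The main obstacle is exactly this last step: $N_{i,j}(r)$ is itself random and may be correlated with $X_1, X_2, \ldots$, because \balg's decision to include $(i,j)$ in a subsequent batch can depend on past empirical estimates of $p_{i,j}$ (and on other pairs' outcomes). I would handle this by introducing the natural filtration under which $N_{i,j}(r)$ is a stopping time---the filtration that advances by $q_s$ units each time \balg\ chooses to compare $(i,j)$ in batch $s$. Under this filtration, the partial sums $S_n - np_{i,j}$ remain a bounded martingale and $N_{i,j}(r)$ is a bounded stopping time, so the Hoeffding-type concentration above carries over without change. This stopping-time reduction is what turns the straightforward Hoeffding calculation into a rigorous justification of the stated $2\eta$ bound.
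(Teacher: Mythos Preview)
Your core approach---a direct Hoeffding calculation yielding $2\exp\bigl(-2N_{i,j}(r)\gamma_{i,j}(r)^2\bigr)=2\eta$---is exactly the paper's proof, which is literally that one line and does not engage with the random-sample-size issue at all.

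You go beyond the paper in flagging that $N_{i,j}(r)$ is data-dependent, and that is a legitimate subtlety. However, your proposed fix has its own gap: optional stopping on the exponential supermartingale does \emph{not} make the Hoeffding bound ``carry over without change'' when the deviation threshold $\gamma_{i,j}(r)=\sqrt{\log(K^2BT)/(2N_{i,j}(r))}$ itself scales with the random $N_{i,j}(r)$. The usual Chernoff step optimizes the tilting parameter $\lambda$ as a function of the sample size; here that would force $\lambda$ to be random, so $\exp(\lambda M_n-n\lambda^2/8)$ is no longer a single supermartingale to which optional stopping applies. A plain union bound over the possible values $n$ of $N_{i,j}(r)$ costs an extra multiplicative factor that $\gamma$ is not designed to absorb---contrast this with the proof of Lemma~\ref{lem:c-delta-conf}, where the paper \emph{does} union-bound over $n$ and the other radius $c_{i,j}(r)$ is inflated precisely to pay for it. Making the present step fully rigorous would require either a self-normalized (anytime) concentration inequality, at the price of a slightly larger radius, or a sharper exploitation of the batched structure.
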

\begin{proof}
Note that $\E[\widehat{p}_{i, j}(r)] = p_{i, j}$, and applying Hoeffding's inequality gives  
$$ \pr\left(|\widehat{p}_{i, j}(r) - p_{i, j}| >\gamma_{i, j}(r)\right) \leq 2\exp\left(-2N_{i, j}(r) \cdot \gamma_{i, j}(r)^2\right) \leq 2\eta. $$
\end{proof}

We first define the \emph{good} event $G$  as follows.
\begin{definition}[Event $G$]
An estimate $\widehat{p}_{i,j}(r)$ at the end of batch $r$ is {\bf \emph{strongly-correct}} if $|\widehat{p}_{i, j}(r) - p_{i, j}| \leq \gamma_{i, j}(r)$. 
We say that event $G$ occurs if every estimate in every batch $r\in [B]$ is strongly-correct.
\end{definition}

The following two lemmas show  that $G$ occurs with high probability and  that $a^*$ is not eliminated under $G$. 
\begin{lemma}\label{lem:correct-est-event}
The probability that every estimate in every batch of \balg \ is strongly-correct is at least $1 - 1/T$.
\end{lemma}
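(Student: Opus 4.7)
The plan is to apply Lemma~\ref{lem:confidence} pointwise and then take a union bound over all pairs of arms and all batches. First, I would observe that there are at most $\binom{K}{2}$ distinct (unordered) pairs of arms, and that the algorithm runs for at most $B$ batches (since $q_r = \lfloor q^r \rfloor$ with $q = T^{1/B}$ ensures that the total number of comparisons reaches $T$ by batch $B$). Thus, the event $G^c$ — that some estimate $\widehat{p}_{i,j}(r)$ fails to be strongly-correct in some batch — is a union of at most $\binom{K}{2} \cdot B \le K^2 B / 2$ individual bad events.

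Next, I would invoke Lemma~\ref{lem:confidence} for each individual pair $(i,j)$ and batch $r$: the probability that $|\widehat{p}_{i,j}(r) - p_{i,j}| > \gamma_{i,j}(r)$ is at most $2\eta = 2/(K^2 B T)$. A union bound then yields
\[
\pr(G^c) \;\le\; \binom{K}{2} \cdot B \cdot \frac{2}{K^2 B T} \;\le\; \frac{1}{T},
\]
which gives the desired bound $\pr(G) \ge 1 - 1/T$.

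There is essentially no real obstacle here — this is a textbook union-bound argument, and the definition of $\gamma_{i,j}(r)$ was explicitly calibrated (via the $\log(K^2 B T)$ term inside the square root) to make this bookkeeping come out cleanly. The only minor care needed is to handle the fact that pairs may be compared different numbers of times across batches: this is already absorbed by Lemma~\ref{lem:confidence}, which holds for any value of $N_{i,j}(r)$ since the confidence radius $\gamma_{i,j}(r)$ scales with $N_{i,j}(r)$. Pairs that are never compared in a given batch simply inherit strongly-correct estimates from previous batches (or from the initial value at $r=0$, which is trivially inside the confidence window since comparisons have not yet been performed), so no extra work is required for them.
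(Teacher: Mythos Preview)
Your proposal is correct and matches the paper's own proof essentially verbatim: apply Lemma~\ref{lem:confidence} to each pair and batch, then union-bound over the at most $\binom{K}{2}\cdot B$ events to get $\pr(\overline{G}) \le \binom{K}{2}\cdot B \cdot 2\eta \le 1/T$. Your additional remarks about varying $N_{i,j}(r)$ are accurate but not needed beyond what Lemma~\ref{lem:confidence} already provides.
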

\begin{proof}
Applying \Cref{lem:confidence} and taking a union bound over all pairs and batches, we get 
that the probability that some estimate is not strongly-correct is at most $\binom{K}{2} \times B \times 2\eta \leq \frac{1}{T}$ where $\eta = 1/K^2BT$. Thus, $\pr(\overline{G}) \leq \frac{1}{T}$. 
\end{proof}

We now show that, under event $G$, the best arm $a^*$ is never eliminated.

\begin{lemma}\label{lem:a1}
Conditioned on $G$, the best arm $a^*$ is never eliminated from $\A$ in the elimination step of \balg.
\end{lemma}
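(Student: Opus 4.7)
The plan is to argue by contradiction via the elimination rule of \balg. Recall that the algorithm removes an arm $j$ from $\mathcal A$ in round $r$ only when there exists some $i$ with $\widehat p_{i,j}(r) > \tfrac12 + \gamma_{i,j}(r)$. So to show $a^*$ is never eliminated, it suffices to show that for every other arm $i$ and every round $r$, we have $\widehat p_{i,a^*}(r) \le \tfrac12 + \gamma_{i,a^*}(r)$.

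Here is where I would use each ingredient. First, invoke the Condorcet property: since $a^*$ is the Condorcet winner, $p_{a^*,i} \ge \tfrac12$ for every $i \in \B \setminus \{a^*\}$, and because $p_{i,a^*} = 1 - p_{a^*,i}$, this gives $p_{i,a^*} \le \tfrac12$. Second, invoke the conditioning on $G$: by definition of strong-correctness, in every batch $r$ and for every pair $(i, a^*)$ that has been compared, $|\widehat p_{i,a^*}(r) - p_{i,a^*}| \le \gamma_{i,a^*}(r)$, hence $\widehat p_{i,a^*}(r) \le p_{i,a^*} + \gamma_{i,a^*}(r) \le \tfrac12 + \gamma_{i,a^*}(r)$. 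This is exactly the negation of the elimination trigger, so no arm $i$ can cause $a^*$ to be removed.

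The only thing I would be careful about is the edge case where the pair $(i, a^*)$ has not been compared at all up to round $r$, so $N_{i,a^*}(r) = 0$ and $\gamma_{i,a^*}(r)$ is undefined. In that case the algorithm initializes $\widehat p_{i,a^*}(0) = \tfrac12$, and the elimination step is only triggered by a strict inequality $\widehat p_{i,a^*}(r) > \tfrac12 + \gamma_{i,a^*}(r)$; for pairs with no observations the natural convention (and the one implicit in the algorithm's use of $c$ and $\gamma$) is that the confidence width is infinite, so the elimination condition cannot fire. This corner case aside, the argument is a two-line deterministic computation once $G$ is assumed, and I do not anticipate any real obstacle.
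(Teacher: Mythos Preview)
Your argument is correct and is essentially identical to the paper's own proof: both use the elimination rule together with the strong-correctness bound from $G$ to conclude that $\widehat p_{i,a^*}(r)$ can never exceed $\tfrac12+\gamma_{i,a^*}(r)$, with the Condorcet property supplying $p_{i,a^*}\le\tfrac12$. The paper phrases it as a one-line contradiction and does not spell out the $N_{i,a^*}(r)=0$ edge case, but otherwise the reasoning is the same.
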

\begin{proof}
In \balg, an arm $j$ is deleted in batch $r$ iff there is an arm $i\in \A$ with $\widehat{p}_{i, j}(r) > \frac{1}{2} + \gamma_{i, j}(r)$. If $a^*$ is eliminated due to some arm $j$, then by definition of event $G$,  we  get $p_{j, a^*}\ge \widehat{p}_{i, j}(r) - \gamma_{i, j}(r) > \frac12$, a contradiction.
\end{proof}

\subsubsection{High-probability Regret Bound}
We now prove Theorem~\ref{thm:condorcet-init-high-prob}. Fix any $\delta>0$. We first define another good event as follows.
\begin{definition}[Event $E(\delta)$]
An estimate $\widehat{p}_{i,j}(r)$ in batch $r$ is {\bf \emph{weakly-correct}} if $|\widehat{p}_{i, j}(r) - p_{i, j}| \leq c_{i, j}(r)$. Let $ C(\delta) := \lceil \frac{1}{2} \log_q(1/\delta)\rceil$.  
We say that event $E(\delta)$ occurs if for each batch $r\ge C(\delta)$, every estimate  is weakly-correct.
\end{definition}

The next lemma shows that $E(\delta)$ occurs with probability at least $1-\delta$.
\begin{lemma}\label{lem:c-delta-conf}
For all $\delta > 0$, we have 
 $$ \pr(\neg E(\delta)) \,\,= \,\, \pr\left( \exists r \ge C(\delta), i, j : |\widehat{p}_{i, j}(r) - p_{i, j}| > c_{i, j}(r) \right)  \,\,\leq  \,\,\delta.$$ 
\end{lemma}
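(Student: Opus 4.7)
My plan is to prove the bound by first applying Hoeffding's inequality to a single estimate, then taking a union bound over pairs and batches, and finally exploiting the choice of $C(\delta)$ via a geometric sum. Concretely, fix a batch $r$ and a pair $(i,j)$. By Hoeffding's inequality, using that $\E[\widehat{p}_{i,j}(r)] = p_{i,j}$ and that $N_{i,j}(r) \cdot c_{i,j}(r)^2 = 2\log(2K^2 q_r)$, I obtain
\[
\pr\!\left(|\widehat{p}_{i,j}(r) - p_{i,j}| > c_{i,j}(r)\right) \,\leq\, 2\exp\!\left(-2N_{i,j}(r)\, c_{i,j}(r)^2\right) \,=\, \frac{2}{(2K^2 q_r)^4}.
\]
This is the analogue of \Cref{lem:confidence}, but with the larger radius $c_{i,j}(r)$, giving a much smaller failure probability.

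Next I would union bound over the at most $K^2$ pairs to see that in a fixed batch $r$ the probability of any estimate failing is at most $K^2 \cdot 2/(2K^2 q_r)^4 = 1/(8K^6 q_r^4)$. Summing over $r \geq C(\delta)$ and using $q_r = \lfloor q^r\rfloor \geq q^r/2$ (valid since $q \geq 2$, so $q^r \geq 2$), I get
\[
\pr(\neg E(\delta)) \,\leq\, \sum_{r \geq C(\delta)} \frac{1}{8K^6 q_r^4} \,\leq\, \frac{2}{K^6} \sum_{r \geq C(\delta)} q^{-4r} \,\leq\, \frac{2}{K^6} \cdot \frac{q^{-4C(\delta)}}{1-q^{-4}} \,\leq\, \frac{4\, q^{-4C(\delta)}}{K^6},
\]
where in the last step I used $q \geq 2 \Rightarrow 1/(1-q^{-4}) \leq 2$.

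Finally, I plug in the definition $C(\delta) = \lceil \frac{1}{2}\log_q(1/\delta)\rceil$, which gives $q^{C(\delta)} \geq \delta^{-1/2}$ and hence $q^{-4C(\delta)} \leq \delta^2$. Substituting back yields $\pr(\neg E(\delta)) \leq 4\delta^2/K^6 \leq \delta$ (for the nontrivial regime $\delta \leq 1 \leq K^6/4$), which is the claim.

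I do not anticipate a serious obstacle here: the proof is essentially a careful Hoeffding + union bound + geometric sum computation. The only place requiring a bit of care is the truncation step $q_r = \lfloor q^r\rfloor$, which I handle via the bound $q_r \geq q^r/2$ that holds because $q = T^{1/B} \geq 2$ (since $B \leq \log T$). The design choice of $c_{i,j}(r)$ with the $\log(2K^2 q_r)$ term is tailored precisely so that the per-batch failure probability decays like $q_r^{-4}$, allowing the tail sum from $r = C(\delta)$ onward to be controlled by $\delta^2$ rather than $\delta$, which is what makes the threshold $C(\delta) = \Theta(\log_q(1/\delta))$ sufficient.
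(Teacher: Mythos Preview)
Your approach---Hoeffding, then a union bound over pairs and rounds, then a geometric sum controlled by $C(\delta)$---is exactly the paper's, and your arithmetic is correct. The one step the paper treats more carefully is the application of Hoeffding: in \balg, the sample size $N_{i,j}(r)$ is \emph{random}, since whether pair $(i,j)$ is compared in a given round depends on outcomes from earlier rounds. Consequently $\widehat p_{i,j}(r)$ is not simply the average of a deterministic number of i.i.d.\ samples, and the line ``by Hoeffding, $\pr(\cdot)\le 2\exp(-2N_{i,j}(r)c_{i,j}(r)^2)$'' is not justified as written.

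The paper handles this by viewing the $(i,j)$-comparisons as a single i.i.d.\ stream, letting $s_{ij}(n)$ denote the average of its first $n$ terms, noting that $N_{i,j}(r)\le \sum_{s\le r}q_s\le 2q_r$ deterministically, and union-bounding over the realized value of $N_{i,j}(r)$:
\[
\pr\bigl[|\widehat p_{i,j}(r)-p_{i,j}|>c_{i,j}(r)\bigr]\ \le\ \sum_{n=0}^{2q_r}\pr\Bigl[|s_{ij}(n)-p_{ij}|>\sqrt{\tfrac{2\log(2K^2q_r)}{n}}\Bigr]\ \le\ \frac{4q_r}{(2K^2q_r)^4}.
\]
This costs an extra factor of $O(q_r)$, which your bound absorbs effortlessly: the per-round term becomes $O(K^{-6}q_r^{-3})$ instead of $O(K^{-6}q_r^{-4})$, and the geometric tail from $r=C(\delta)$ still sums to at most $\delta$ (indeed, the paper only uses the much weaker $O(K^{-2}q_r^{-2})$ per round). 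With this one step made explicit, your proof is complete and matches the paper's.
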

\begin{proof}
For any pair $i,j$ of arms and round $r$, let $B_{i, j}(r)$ denote the event that $ |\widehat{p}_{i, j}(r) - p_{i, j}| > c_{i, j}(r)$. Note that $N_{ij}(r)\le \sum_{s=1}^r q_s\le 2q_r$. For any integer $n$, let $s_{ij}(n)$ denote the sample average of $n$ independent Bernoulli r.v.s with probability $p_{ij}$. By Hoeffding's bound, 
$$\pr[|s_{ij}(n) - p_{ij}| > c]\le 2e^{-2n c^2},\qquad \mbox{ for any }c\in [0,1].$$ 
We now bound
\begin{align*}
\pr[B_{ij}(r)] & \le \sum_{n=0}^{2q_r} \pr[B_{ij}(r) \,\wedge\, N_{ij}(r)=n ]\\
&\le \sum_{n=0}^{2q_r} \pr\left[|s_{ij}(n) - p_{ij}| > \sqrt{\frac{2\log(2K^2 q_r)}{n}}\right] \le \sum_{n=0}^{2q_r} 2\exp\left(-2n\cdot  \frac{2\log(2K^2 q_r)}{n}\right)\\
&\le 4q_r\cdot \frac{1}{(2K^2 q_r)^4}  \le \frac{1}{4K^2\cdot q_r^2}
\end{align*}

The second inequality uses the definition of $c_{ij}(r)$ when $N_{ij}(r)=n$.
The last inequality uses $K\ge 2$. 
Next, by a union bound over arms and rounds, we can write the desired probability as 
\begin{align}
\pr(\exists r \ge C(\delta), i, j : B_{i, j}(r)) & \leq \sum_{r \ge  C(\delta)}  \sum_{i < j}\pr(B_{i, j}(r)) \notag \\
& \leq \sum_{r \ge C(\delta)} {K\choose 2} \cdot \frac{1}{4K^2\cdot q_r^2} \le \sum_{r \ge C(\delta)} \frac{1}{8q_r^2}\notag  \\
&\leq \sum_{r \ge C(\delta)}  \frac{1}{2q^{2r}} = \frac{1}{2q^{2C(\delta)}} \cdot \left(1 + \frac{1}{q^2 } + \frac{1}{q^{4}} + \cdots \right) \leq \frac{1}{q^{2C(\delta)}} \le \delta \label{eq:r-union-bnd}
\end{align}
The second inequality above uses the bound on $\pr[B_{ij}(r)]$. 
The first inequality in \eqref{eq:r-union-bnd} uses $q_r=\lfloor q^r\rfloor \ge q^r-1\ge \frac{q^r}{2}$ as $q\ge 2$. The last inequality in \eqref{eq:r-union-bnd} uses the definition of $C(\delta)$.

The lemma now follows by the definition of event $\neg E(\delta)$ as  $\exists r \ge C(\delta), i, j : B_{i, j}(r)$. 
\end{proof}

We will analyze our algorithm under both events $G$ and $E(\delta)$. 
\emph{Conditioned on these}, we next show:
\begin{itemize}
    \item The best arm, $a^*$, is \emph{not defeated} by any arm $i$ in any round $r > C(\delta)$ (\Cref{lem:not-defeated}).
    \item Furthermore, there exists a round $\rdel\ge C(\delta)$ such that arm $a^*$ defeats \emph{every other arm}, in every round after $\rdel$  (\Cref{lem:trapped}).
\end{itemize}
Intuitively, these observations imply that our algorithm identifies the best arm after $\rdel$ rounds. Thus, beyond round $\rdel$, we only perform pairwise comparisons of the form $(a^*, i)$ for $i \neq a^*$: thus, $a^*$ is used as an anchor to eliminate sub-optimal arms. Note that event $G$ is required to ensure that $a^*$ is not eliminated (especially in rounds before $C(\delta)$ where the \Cref{lem:c-delta-conf} does not apply).  We now  prove the aforementioned observations.

\begin{lemma}\label{lem:not-defeated}
Conditioned on $G$ and $E(\delta)$, for any round $r > C(\delta)$, arm  $a^*$ is not defeated by any other arm, i.e.,  $a^* \notin \cup_{i\ne a^*} D_r(i)$.
\end{lemma}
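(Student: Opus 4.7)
The plan is to argue by contradiction, using the definition of $D_r(\cdot)$ together with the weakly-correct guarantee afforded by event $E(\delta)$ and the Condorcet property of $a^*$. The proof should be quite short; there is no delicate probabilistic argument left to do here, since all of that work was folded into the definitions of $G$ and $E(\delta)$.

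First I would observe that since $r > C(\delta)$ and both are integers, we have $r-1 \geq C(\delta)$, so the weakly-correct bound of event $E(\delta)$ applies at the end of round $r-1$: for every pair of arms $(i,j)$,
\[
|\widehat{p}_{i,j}(r-1) - p_{i,j}| \leq c_{i,j}(r-1).
\]
Next, I would note that by \Cref{lem:a1}, conditioning on $G$ ensures $a^* \in \A$ at the start of round $r$, so the question of whether $a^* \in D_r(i)$ is nontrivial (otherwise the statement is vacuous).

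Now suppose, for contradiction, that $a^* \in D_r(i)$ for some active arm $i \neq a^*$. By the definition of $D_r(i)$ used by \balg, this means
\[
\widehat{p}_{i,a^*}(r-1) > \tfrac{1}{2} + c_{i,a^*}(r-1).
\]
Combining with the weakly-correct bound $\widehat{p}_{i,a^*}(r-1) \leq p_{i,a^*} + c_{i,a^*}(r-1)$, the $c_{i,a^*}(r-1)$ terms cancel and we obtain $p_{i,a^*} > \tfrac{1}{2}$, equivalently $p_{a^*, i} < \tfrac{1}{2}$. This contradicts the assumption that $a^*$ is a Condorcet winner, which requires $p_{a^*, i} > \tfrac{1}{2}$ for every $i \neq a^*$. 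Hence no such $i$ exists, and $a^* \notin \cup_{i \neq a^*} D_r(i)$.

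There is really no main obstacle: the entire content of the lemma is packaged into the two conditioning events. The only care needed is to check the index shift ($r > C(\delta)$ versus $r-1 \geq C(\delta)$) so that $E(\delta)$ applies to the estimates being used at the start of round $r$, and to explicitly invoke $G$ to guarantee that $a^*$ is still in $\A$ so that membership in $D_r(i)$ is well-posed.
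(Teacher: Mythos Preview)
Your proof is correct and follows essentially the same contradiction argument as the paper: assume $a^* \in D_r(i)$, use the definition of $D_r(i)$ together with the weakly-correct bound from $E(\delta)$ at round $r-1 \ge C(\delta)$, and conclude $p_{i,a^*} > \tfrac12$, contradicting the Condorcet property. The only (minor) addition is your explicit invocation of $G$ via \Cref{lem:a1} to ensure $a^*$ is still active, which the paper leaves implicit.
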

\begin{proof} Fix any round $r\ge C(\delta)+1$. 
Suppose that $a^* \in D_r(i)$ for some other arm $i$. This implies that $\widehat{p}_{i, a^*}(r-1) > \frac{1}{2} + c_{i, a^*}(r-1)$. But under event $E(\delta)$, we have $|\widehat{p}_{i, a^*}(r-1) - p_{i, a^*}| \leq c_{i, a^*}(r-1)$ because $r-1\ge C(\delta)$. Combined, these two observations imply $p_{i, a^*} > \frac{1}{2}$, a contradiction.
\end{proof}

To proceed, we need the following definitions.
\begin{definition}
The candidate $i_r$  of round $r$ is called the {\bf \emph{champion}} 
if $|D_r(i_r)| = |\A| - 1$; that is, if  $i_r$ defeats every other active arm.
\end{definition}

\begin{definition}\label{def:r-star}
Let $\rdel\ge C(\delta)+1$ be the smallest integer such that 
\begin{equation*}
q^{\rdel} \ge  2A\log A,\qquad \mbox{where }A:=\frac{32}{\Delta_{\min}^2}\cdot \log(2 K^2).
\end{equation*}
\end{definition}
We use the following inequality based on this choice of $\rdel$.
\begin{lemma}\label{lem:rdel}
The above choice of $\rdel$ satisfies
$$ q^{r} > \frac{8}{\Delta_{\min}^2}\cdot \log\left(2 K^2 q_{r}\right),\qquad \forall r\ge \rdel . $$
\end{lemma}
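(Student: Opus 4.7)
The plan is to reduce the inequality to a single check at $r = r(\delta)$ by monotonicity, and then verify that check by direct substitution. Let $c := 8/\Delta_{\min}^2$ and $b := 2K^2$, so that $A = 4c\log b$ and we want to show $q^r > c\log(b\cdot q_r)$ for every $r \ge r(\delta)$. Since $q_r = \lfloor q^r\rfloor \le q^r$, it suffices to prove the stronger inequality
\[
\phi(q^r) \,:=\, q^r - c\log\bigl(b\cdot q^r\bigr) \,>\, 0\qquad\text{for all } r\ge r(\delta).
\]

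Next, I would observe that $\phi'(x) = 1 - c/x > 0$ whenever $x > c$, so $\phi$ is strictly increasing on $[c,\infty)$. Since $r \ge r(\delta)$ implies $q^r \ge q^{r(\delta)} \ge 2A\log A = 8c\log b\cdot \log A$, and $\log b,\log A \ge 1$ (which holds routinely because $K\ge 2$ gives $\log b \ge \log 8$, and $\Delta_{\min} \le 1/2$ gives $c\ge 32$, hence $A \ge 128$), we have $q^r \ge 8c > c$, so $\phi$ is increasing at every $q^r$ of interest. Therefore it is enough to check $\phi(2A\log A) > 0$.

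The final step is a routine computation. Substituting $x = 2A\log A = 8c\log b\cdot\log A$ and writing $\log(bx) = \log b + \log 2 + \log A + \log\log A$, the desired inequality becomes, after dividing by $c$,
\[
8\log b \cdot \log A \,>\, \log b + \log A + \log\log A + \log 2.
\]
Setting $u := \log b$ and $v := \log A$ (both $\ge 1$ as noted), this follows from the elementary bound $8uv \ge 4u + 4v$ (since $uv\ge u$ and $uv\ge v$) together with $\log v \le v$, which gives
\[
8uv - u - v - \log v - \log 2 \;\ge\; 3u + 2v - \log 2 \;>\; 0.
\]
I do not anticipate any real obstacle; the main thing to be careful about is choosing the reduction via $q_r \le q^r$ before invoking monotonicity (so that the log term is expressed in terms of the same variable as the linear term) and confirming that $q^{r(\delta)}$ indeed lies in the region where $\phi$ is increasing, which is immediate from the definition of $A$.
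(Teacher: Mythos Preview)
Your proof is correct and follows essentially the same route as the paper's: use $q_r \le q^r$, observe that the difference function is increasing past the relevant threshold, and verify positivity at $x = 2A\log A$ directly. The only cosmetic difference is that the paper first bounds $\log(2K^2) + \log q^r \le 4\log(2K^2)\log q^r$ to reduce to the one-parameter inequality $x > A\log x$, whereas you keep the two constants $b$ and $c$ separate and handle them together in the final arithmetic check.
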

\begin{proof}
Using the fact that $q_r\le q^r$, it suffices to show $q^r\ge \frac{8}{\Delta_{\min}^2}\cdot \left(\log(2 K^2) + \log q^r\right)$. 
Moreover,
$$\log(2 K^2) + \log q^r \le \left(1+\log(2 K^2) \right) \cdot \left(1+\log q^r \right) \le 4\cdot \log(2 K^2) \cdot \log q^r,$$
where the last inequality uses $K\ge 2$, $r\ge \rdel\ge 1$ and $q\ge 2$. So, it suffices to show:
\begin{equation}
\label{eq:rdel-calc}
q^{r} > A\cdot \log(q^r),\quad \forall r\ge \rdel,\qquad \mbox{where }A=\frac{32}{\Delta_{\min}^2}\cdot \log(2 K^2)
\end{equation}
Below, let $x=q^r$, $R:=2A\log A$ and function $f(x):=x-A\log x$. We will show that  $f(x) > 0$ for all $x\ge R$, which would imply \eqref{eq:rdel-calc} because $q^{\rdel}\ge R$. As $R\ge A$, and $f$ is increasing for $x\ge A$, it suffices to show that $f(R)\ge 0$. 
Indeed,
$$
\frac{f(R)}A = 2 \log A - \log(2A\log A) = \log A - \log(2\log A)  > 0,
$$
where the inequality uses $A\ge 8$.
\end{proof}

Then, we have the following.

\begin{lemma}\label{lem:trapped}
Conditioned on $G$ and $E(\delta)$, the best arm $a^*$ is the champion in every round $r>\rdel$. 
\end{lemma}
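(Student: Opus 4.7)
Fix any round $r > \rdel$. The plan is to show that under events $G$ and $E(\delta)$: (i) every active arm $j \neq a^*$ lies in $D_r(a^*)$, and (ii) no other active arm can match this; these together force $i_r = a^*$ and $|D_r(i_r)| = |\A|-1$, which is exactly the champion condition.

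First, I will establish a lower bound on $N_{a^*, j}(r-1)$ for every currently active $j \neq a^*$. The key observation is that in each past round $s \le r-1$, irrespective of who the candidate $i_s$ is, the pair $(a^*, j)$ is compared at least $q_s$ times. Indeed, if $i_s = a^*$ then $j$ is compared to $i_s$ either directly (when $j \in D_s(i_s)$) or through the ``compare to all of $\A$'' branch (when $j \notin D_s(i_s)$). If $i_s \neq a^*$, then Lemma~\ref{lem:not-defeated} gives $a^* \notin D_s(i_s)$ (using $s-1 \geq C(\delta)$ for $s$ large enough, and handled separately otherwise via the active-set argument), so $a^*$ falls in the ``compare to all of $\A$'' branch and hence is compared to $j$. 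Since $a^*$ is never eliminated (Lemma~\ref{lem:a1}) and $j$ is active, this yields $N_{a^*, j}(r-1) \ge q_{r-1} \ge q^{r-1}/2$.

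Next, I will combine this lower bound with Lemma~\ref{lem:rdel} applied at index $r-1 \ge \rdel$, which gives $q^{r-1} > 8 \log(2K^2 q_{r-1}) / \Delta_{\min}^2$ (with a factor of $2$ to spare coming from the slack in the definition of $\rdel$). This implies $N_{a^*, j}(r-1) > 2\log(2K^2 q_{r-1})/\Delta_j^2$, i.e., $c_{a^*, j}(r-1) < \Delta_j/2$. Under event $E(\delta)$, the estimate is weakly-correct, so
\[
\widehat{p}_{a^*, j}(r-1) \,\ge\, p_{a^*, j} - c_{a^*, j}(r-1) \,=\, \tfrac{1}{2} + \Delta_j - c_{a^*, j}(r-1) \,>\, \tfrac{1}{2} + c_{a^*, j}(r-1),
\]
which is precisely the condition $j \in D_r(a^*)$. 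Hence $D_r(a^*) \supseteq \A \setminus \{a^*\}$, and since an arm cannot defeat itself, $|D_r(a^*)| = |\A| - 1$.

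Finally, I will argue that $a^*$ attains the unique maximum of $|D_r(\cdot)|$. For any active $i \neq a^*$, Lemma~\ref{lem:not-defeated} gives $a^* \notin D_r(i)$, and trivially $i \notin D_r(i)$, so $D_r(i) \subseteq \A \setminus \{i, a^*\}$, yielding $|D_r(i)| \le |\A|-2 < |D_r(a^*)|$. Therefore $i_r = \arg\max_{i \in \A} |D_r(i)| = a^*$ and $|D_r(i_r)| = |\A|-1$, so $a^*$ is the champion in round $r$, as required. The main obstacle is the bookkeeping in the first step (verifying that $(a^*, j)$ is indeed compared every round for all $s \le r-1$, including the small rounds $s \le C(\delta)$ where we cannot yet invoke Lemma~\ref{lem:not-defeated}); for those, the comparison still happens simply because $j$ must be active and hence the algorithm's branching guarantees at least $q_s$ comparisons of $(a^*, j)$ regardless of the identity of $i_s$.
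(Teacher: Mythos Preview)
Your approach is the same as the paper's, and your final paragraph (bounding $|D_r(i)| \le |\A|-2$ for $i \neq a^*$ via Lemma~\ref{lem:not-defeated}) is a useful clarification that the paper leaves implicit. But your first step contains an error exactly where you flag ``the main obstacle.'' The claim that the pair $(a^*, j)$ is compared in \emph{every} round $s \le r-1$ is false, and your hand-wave for the early rounds does not work: if in some round $s \le C(\delta)$ the candidate is $i_s = k \notin \{a^*, j\}$ and both $a^* \in D_s(k)$ and $j \in D_s(k)$ (nothing rules this out before $E(\delta)$ takes effect), then by the algorithm's branching each of $a^*$ and $j$ is compared only to $k$, so $(a^*,j)$ receives zero comparisons in round $s$. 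Being active does not by itself force a comparison between two specific non-candidate arms.

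The good news is that you never actually use the all-rounds claim: the bound you extract, $N_{a^*,j}(r-1) \ge q_{r-1}$, requires only the single round $s = r-1$. Since $r-1 \ge \rdel \ge C(\delta)+1$, Lemma~\ref{lem:not-defeated} applies directly to that round, and the early-round bookkeeping simply never arises. This is precisely what the paper does: it argues that $a^*$ is compared to every active arm in each round $s \ge \rdel$ (hence in round $r-1$), and makes no attempt to cover rounds $s \le C(\delta)$. Drop the all-rounds claim, restrict the case analysis to $s = r-1$, and your proof is clean and matches the paper's.
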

\begin{proof} 
We first argue that $a^*$ is compared to all active arms in each round $r\ge \rdel$. By Lemma~\ref{lem:a1}, we know $a^* \in \A$.  By \Cref{lem:not-defeated}, we have  $a^* \notin D_{r}(j)$ for all $j \neq a^*$ because $r\ge \rdel\ge 1+ C(\delta)$. 
If candidate $i_{r} \neq a^*$, then $a^*$ will be compared to all $j \in \A$ (since $a^* \notin D_{r}(i_{r})$). On the other hand, if $i_{r} = a^*$, then (1) for any $j \in D_{r}(a^*)$, arm $j$ is only compared to $a^*$, and (2) for any $j \in\A\setminus  D_{r}(a^*)$, arm $j$ is compared to all active arms including $a^*$.

Next, we show that for any round $r\ge \rdel+1$, arm $a^*$ defeats all other arms, i.e.,    $|D_{r}(a^*)| = |\A|-1$. This would imply that $i_r=a^*$ and $a^*$ is the champion. Consider any arm $j\in \A\setminus a^*$. 
Since $a^*$ is compared to all active arms in round $r-1\ge \rdel$, we have 
$$N_{a^*, j}(r-1) \geq q^{r-1} > \frac{8}{\Delta_{\min}^2} \cdot \log\left(2K^2 q_{r-1}\right),$$
where the second inequality is by Lemma~\ref{lem:rdel} with $r-1\ge \rdel$.  
Now, by definition,  we have 
$$ c_{a^*, j}(r-1) = \sqrt{\frac{2\log\left(2K^2  q_{r-1}\right)}{N_{a^*, j}(r-1)}} < \sqrt{\frac{2\log\left(2K^2  q_{r-1}\right)}{\frac{8}{\Delta_{\min}^2}  \log\left(2K^2 q_{r-1}\right)}} = \frac{\Delta_{\min}}{2}. $$
Given this, it is easy to show that $a^*$ defeats arm $j$ in round $r$.
 Conditioned on $E(\delta)$, we know that $|\widehat{p}_{a^*, j}(r-1) - p_{a^*, j}| \leq c_{a^*, j}(r-1) < \frac{\Delta_{\min}}{2}$. 
Then, we have $$ \widehat{p}_{a^*, j}(r-1) > p_{a^*, j} - \frac{\Delta_{\min}}{2} = \frac{1}{2} + \Delta_j - \frac{\Delta_{\min}}{2} \geq \frac{1}{2} + \frac{\Delta_{\min}}{2} > \frac{1}{2} + c_{a^*, j}(r-1).$$ 
Therefore, $j \in D_{r}(a^*)$.
It now follows that for any round $r\ge \rdel +1$, arm $a^*$ is the champion.
\end{proof}

We are now ready to prove \Cref{thm:condorcet-init-high-prob}.

\begin{proof}[Proof of \Cref{thm:condorcet-init-high-prob}]
First, recall that in round $r$ of \balg, any pair is compared $q_r =  \lfloor q^r \rfloor$ times where $q = T^{1/B}$. Since $q^B = T$, \balg \ uses at most $B$ rounds.

For the rest of proof, we fix $\delta > 0$. We now analyze the regret incurred by \balg, conditioned on events $G$ and $E(\delta)$. Recall that $\pr(G) \geq 1 - 1/T$ (\Cref{lem:correct-est-event}), and $\pr(E(\delta)) \geq 1 - \delta$ (\Cref{lem:c-delta-conf}). Thus, $\pr(G\cap E(\delta)) \geq 1 - \delta - 1/T$. Let $R_1$ and $R_2$ denote the regret incurred before and after round $\rdel$ (see \Cref{def:r-star}) respectively. 

\paragraph{Bounding $R_1$.} This is  the regret incurred up to (and including) round $\rdel$. We upper bound the regret by considering all pairwise comparisons every round $r\le \rdel$. 
\begin{align}
    R_1 \ \ &\leq \ \ K^2 \cdot \sum_{r \leq \rdel} q_r \leq \ \ K^2 \cdot \sum_{r \leq \rdel} q^r \ \ \leq \ \ 2K^2 \cdot q^{\rdel}  \notag \\
        &\leq 2 K^2   \cdot \max\left\{ q\cdot 2A\log A \, ,\, q^{C(\delta)+1} \right\}\notag,
\end{align}
where the last inequality uses Definition~\ref{def:r-star}; recall $A=\frac{32}{\Delta_{\min}^2}\cdot \log(2 K^2)$.
Plugging in the value of $C(\delta)\le 1+\frac{1}{2}\log_q(1/\delta)$, we obtain
\begin{equation}
 R_1 \le O(K^2)\cdot \max\left\{ q\cdot \frac{\log K}{\Delta_{\min}^2}\cdot \log\left(\frac{\log K}{\Delta_{\min}}\right) \, ,\, q^2\sqrt{\frac{1}{\delta}}\right\}.\label{eq:r1}  
\end{equation}

\paragraph{Bounding $R_2$.} This is the regret in rounds $r\ge \rdel+1$. By Lemma~\ref{lem:trapped}, arm $a^*$ is the champion in all these rounds. So, the only comparisons in these rounds are of the form $(a^*,j)$ for $j\in \A$. 

Consider any arm $j\ne a^*$. Let $T_j$ be  the total number of comparisons that $j$ participates in after round $\rdel$. Let $r$ be the penultimate round that $j$ is played in. We can assume that $r\ge \rdel$ (otherwise arm $j$ will never participate in rounds after $\rdel$, i.e., $T_j=0$). 
As arm $j$ is {\em not} eliminated after round $r$,  
$$\widehat{p}_{a^*, j}(r) \leq \frac{1}{2} + \gamma_{a^*, j}(r).$$
Moreover, by $E(\delta)$, we have $\widehat{p}_{a^*, j}(r) \ge p_{a^*, j} - c_{a^*, j}(r)$ because $r\ge \rdel\ge C(\delta)$.  So,
$$\frac12+\Delta_j = p_{a^*, j} \le \widehat{p}_{a^*, j}(r) + c_{a^*, j}(r)\le \frac{1}{2} + \gamma_{a^*, j}(r) +c_{a^*, j}(r).$$

It follows that 
$$ \Delta_j \leq \gamma_{a^*, j}(r) +c_{a^*, j}(r)  \le \frac{3}{\sqrt{2}}  \sqrt{\frac{ \log(2K^2B  T)}{N_{a^*, j}(r)}} $$ where the final inequality follows by definition of $c$ and $\gamma$. 
On re-arranging, we get $ N_{a^*, j}(r) \leq \frac{9\log(2K^2 B T)}{2\Delta_j^2}$. As $r+1$ is the last round that $j$ is played in, and $j$ is only compared to $a^*$ in each round after $\rdel$, 
$$ T_j \ \leq \ N_{a^*, j}(r+1) \ \leq \  N_{a^*, j}(r) + 2q\cdot N_{a^*, j}(r) \ \leq \  \frac{15q\cdot\log(2K^2 B T)}{\Delta_j^2}.$$
The second inequality follows since $j$ is compared to $a^*$ in rounds $r$ and $r+1$, and the number of comparisons in round $r+1$ is $\lfloor q^{r+1} \rfloor \leq q \cdot (2 q_r) \leq 2q \cdot N_{a^*, j}(r)$.
Adding over all arms $j$, the total regret accumulated beyond round $\rdel$ is 
\begin{equation}\label{eq:r2}
R_2 = \sum_{j \neq a^*} T_j \Delta_j \leq  \sum_{j \neq a^*} O\left(\frac{q\cdot \log(K T)}{\Delta_j}\right). 
\end{equation}

\noindent Combining \eqref{eq:r1} and \eqref{eq:r2}, and using $q=T^{1/B}$, we obtain 
\begin{align*}
R(T) &\leq O\left( T^{1/B} \cdot \frac{K^2 \log(K)}{\Delta_{\min}^2} \cdot \log\left(\frac{\log K}{\Delta_{\min}}\right) \right) \,+\, O\left( T^{2/B} \cdot K^2\cdot \sqrt{\frac{1}{\delta}}\right) \,+\, \sum_{j \neq a^*} O\left(\frac{T^{1/B}\cdot \log(K T)}{\Delta_j}\right). 
\end{align*}
 This completes the proof \Cref{thm:condorcet-init-high-prob}.
\end{proof}

\ifConfVersion
\else
\subsubsection{Expected Regret Bound}
In this section, we present the proof of \Cref{thm:condorcet-init-exp}.
We first state the definitions needed in the proof.
Let $F_X(\cdot)$ denote the cumulative density function (CDF) of a random variable $X$; that is, $F_X(x) = \pr(X \leq x)$. The inverse CDF of $X$, denoted $F^{-1}_X$, is defined as $F^{-1}_X(z) = \inf \{x : \pr(X \leq x) \geq z\}$ where $z \in [0, 1]$. We will use the identity $\E[X] = \int_0^1 F^{-1}_X(z) dz$. 

\begin{proof}[Proof of \Cref{thm:condorcet-init-exp}.]
First, note that in round $r$ of \balg, any pair is compared $q_r =  \lfloor q^r \rfloor$ times where $q = T^{1/B}$. Since $q^B = T$, \balg \ uses at most $B$ rounds.

Let $R(T)$ be the random variable denoting the regret incurred by \balg. By \Cref{thm:condorcet-init-high-prob}, we know that with probability at least $1 - \delta - 1/T$,
\begin{align*}
R(T) &\leq O\left( T^{1/B} \cdot \frac{K^2 \log(K)}{\Delta_{\min}^2} \cdot \log\left(\frac{\log K}{\Delta_{\min}}\right) \right) \,+\, O\left( T^{2/B} \cdot K^2\cdot \sqrt{\frac{1}{\delta}}\right) \,+\, \sum_{j \neq a^*} O\left(\frac{T^{1/B}\cdot \log(K T)}{\Delta_j}\right). 
\end{align*}
Thus, $F_{R(T)}^{-1}(1-\delta-1/T) \leq G(\delta)$ where 
$$ G(\delta) :=  A + O\left(T^{2/B} \cdot K^2\cdot \sqrt{\frac{1}{\delta}}\right) + B$$ where to simplify notation we set $A = O\left( T^{1/B} \cdot \frac{K^2 \log(K)}{\Delta_{\min}^2} \cdot \log\left(\frac{\log K}{\Delta_{\min}}\right) \right)$ and $B = \sum_{j \neq a^*} O\left(\frac{T^{1/B}\cdot \log(K T)}{\Delta_j}\right)$. 
Using the identity for expectation of a random variable, we get 
\begin{align*}
\E[R(T)] &= \int_0^1 F^{-1}_{R(T)}(z) dz \\
         &= \int_0^{1-\frac1T} F^{-1}_{R(T)}(z) dz +  \underbrace{\int_{1-\frac1T}^T F^{-1}_{R(T)}(z) dz}_{\leq T \cdot \frac{1}{T} = 1}\\
        &\leq \int_0^{1-\frac1T} F^{-1}_{R(T)}(z) dz \ \ + \ \ 1 \\
        &= \int_{1-\frac1T}^0 F^{-1}_{R(T)}\left(1-\delta-\frac{1}{T}\right) (-d\delta) \ \ + \ \ 1 \\
         &\leq \int_0^{1-\frac1T} G(\delta) d\delta \ \ + \ \ 1 \\
         &\leq A + O\left(T^{2/B}\cdot K^2\right) + B  +  1 
\end{align*}
where the fourth equality follows by setting $1-q-1/T = \delta$ and the final inequality follows since 
$\int_0^1 \left(\frac{1}{\delta}\right)^{1/2} \leq 2$. Thus, 
\begin{align*}
\E[R(T)] &\leq O\left( T^{1/B} \cdot \frac{K^2 \log(K)}{\Delta_{\min}^2} \cdot \log\left(\frac{\log K}{\Delta_{\min}}\right) \right) \,+\, O\left( T^{2/B} \cdot K^2\right) \,+\, \sum_{j \neq a^*} O\left(\frac{T^{1/B}\cdot \log(K T)}{\Delta_j}\right). 
\end{align*}
This completes the proof of \Cref{thm:condorcet-init-exp}.
\end{proof}
\fi

\ifConfVersion
\vspace{-0.1in}
\else
\fi
\section{Computational Results}\label{sec:computations}
\ifConfVersion
\vspace{-0.1in}
\else
\fi
\ifConfVersion
\begin{figure}[t]
     \centering
     \begin{subfigure}[b]{0.32\textwidth}
         \centering
         \includegraphics[width=\textwidth]{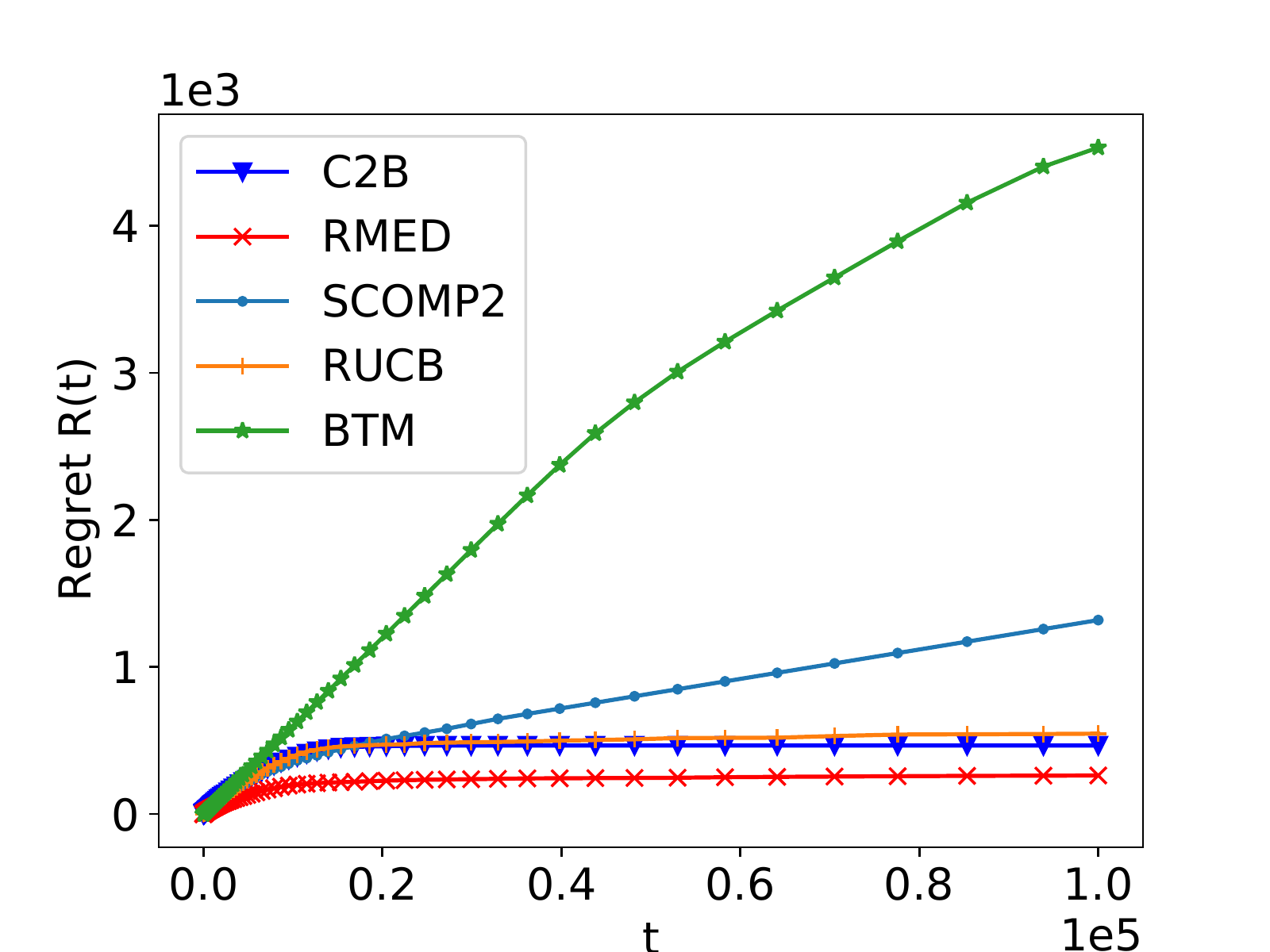}
         \caption{Six rankers}
     \end{subfigure}
     \begin{subfigure}[b]{0.32\textwidth}
         \centering
         \includegraphics[width=\textwidth]{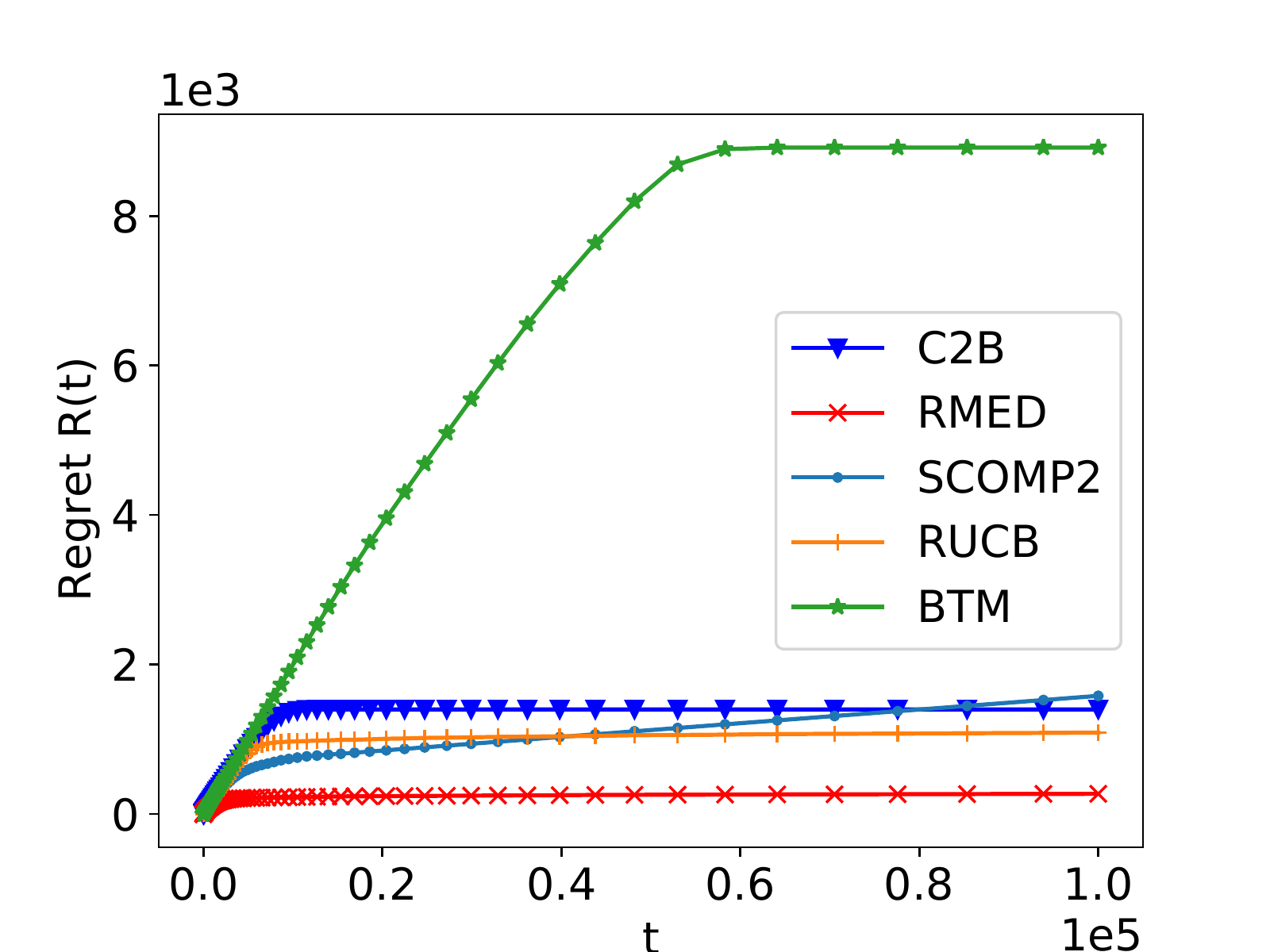}
         \caption{Sushi}
     \end{subfigure}
     \begin{subfigure}[b]{0.32\textwidth}
         \centering
         \includegraphics[width=\textwidth]{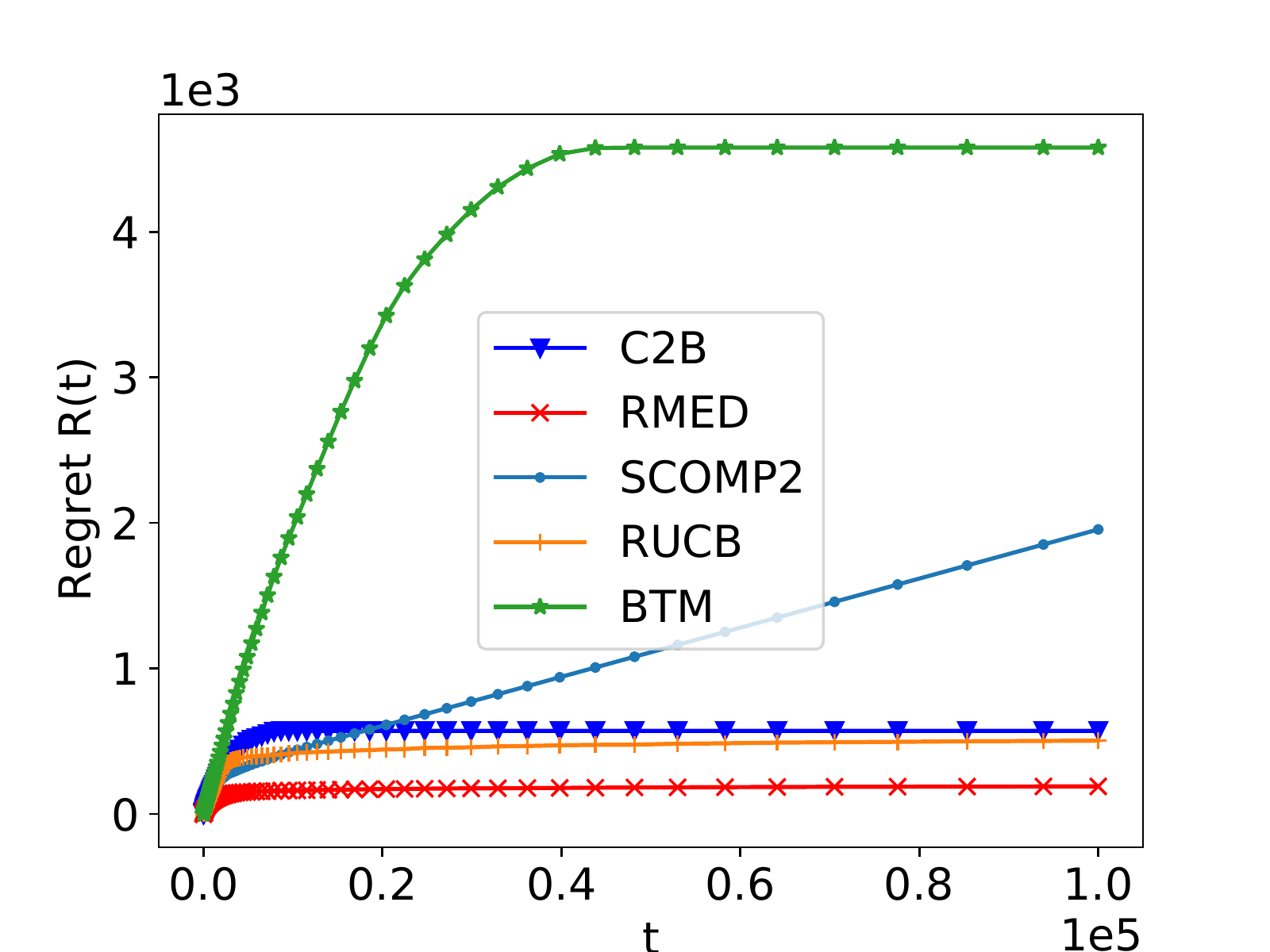}
         \caption{Irish-Meath}
     \end{subfigure}
     \begin{subfigure}[b]{0.32\textwidth}
         \centering
         \includegraphics[width=\textwidth]{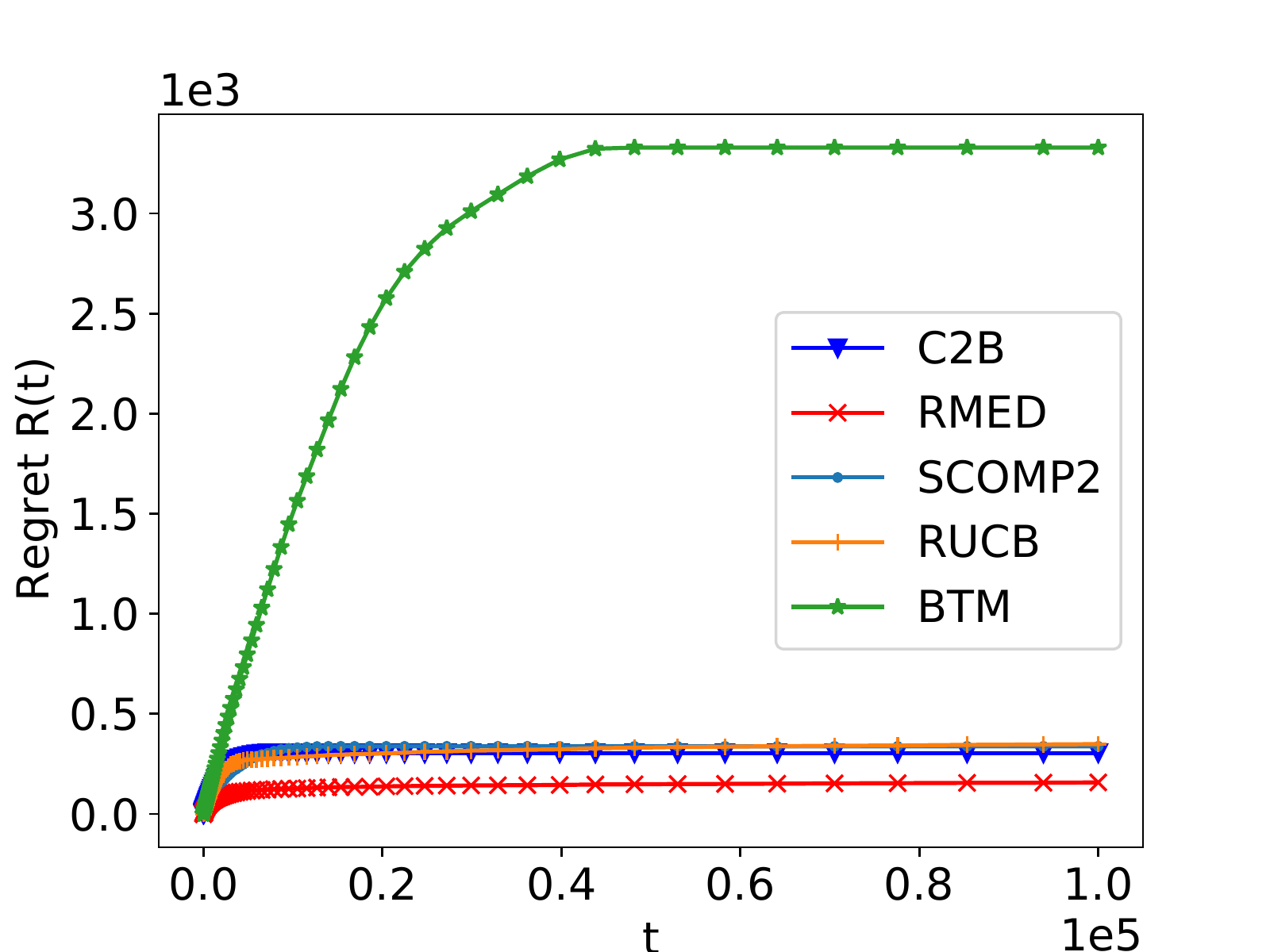}
         \caption{Irish-Dublin}
     \end{subfigure}
     \begin{subfigure}[b]{0.32\textwidth}
         \centering
         \includegraphics[width=\textwidth]{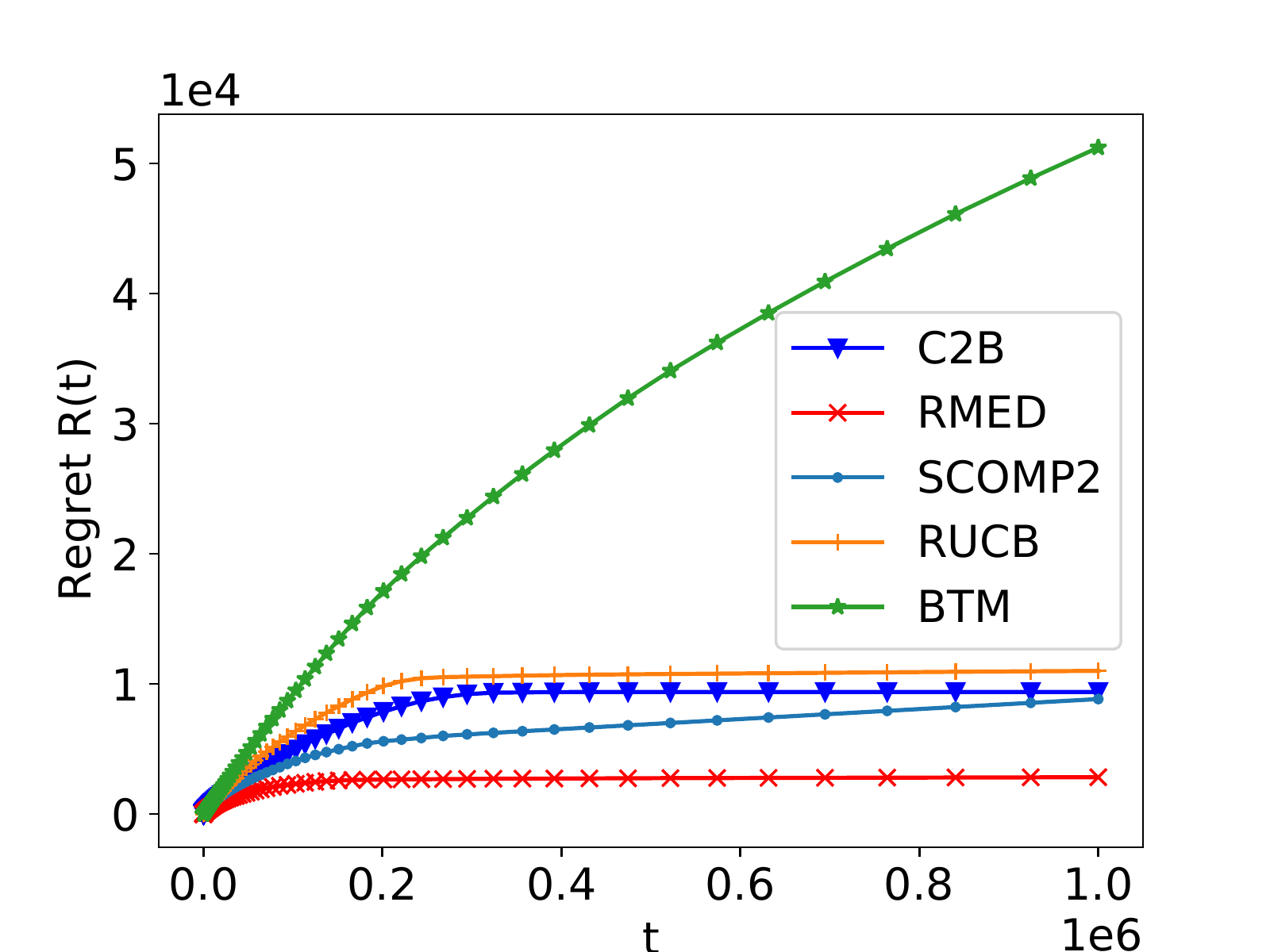}
         \caption{MSLR30}
     \end{subfigure}
     \begin{subfigure}[b]{0.32\textwidth}
         \centering
         \includegraphics[width=\textwidth]{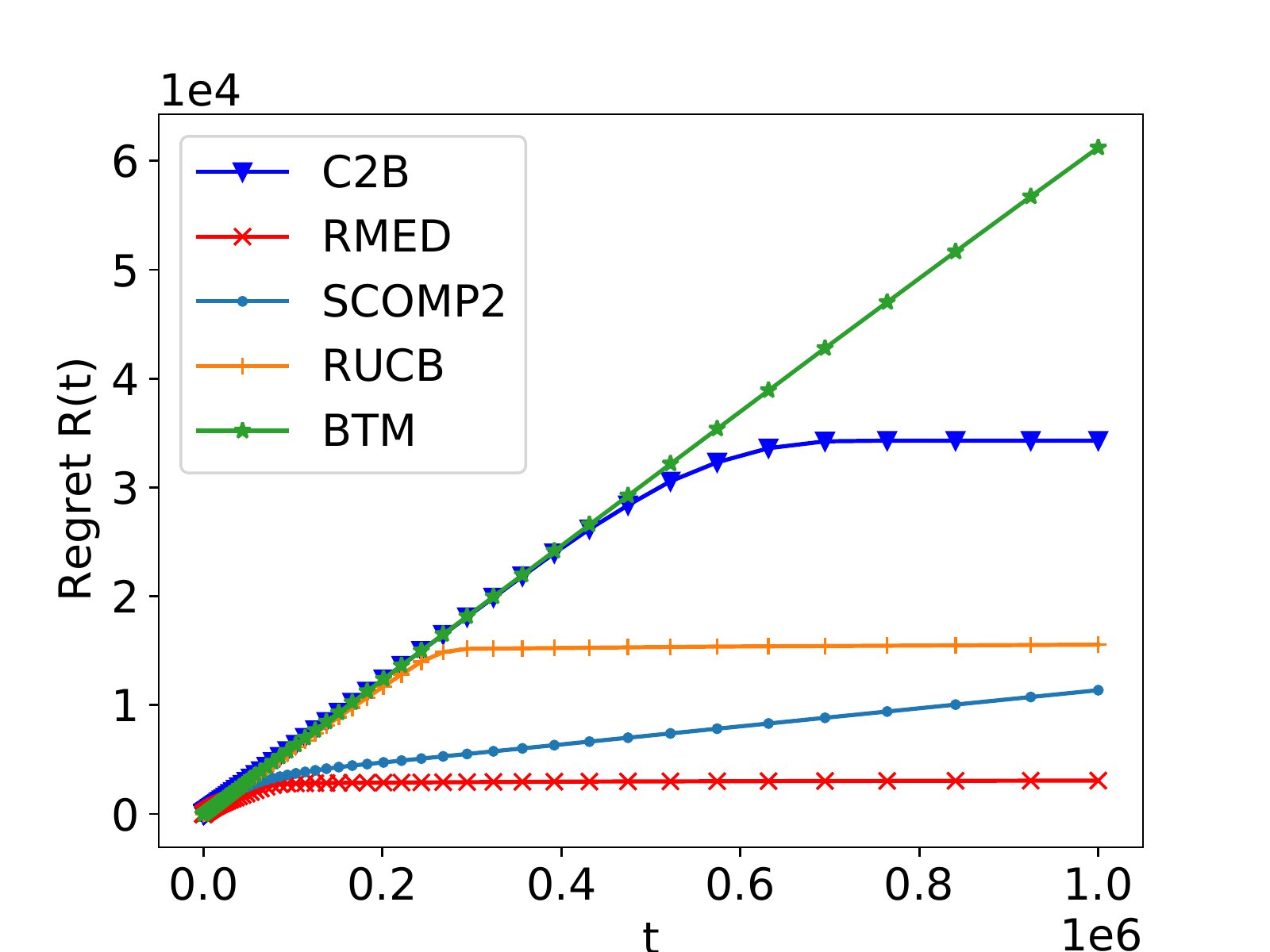}
         \caption{Yahoo30}
     \end{subfigure}
        \caption{Regret v/s t plots of algorithms when $B = \lfloor\log(T)\rfloor$}
        \label{fig:comparisons}
\end{figure}
\else
\fi

In this section, we provide details of 
our computational experiments.
The goal of our experiments is to answer the following questions: (i) How does the regret of \balg \ using $B = \lfloor \log(T)\rfloor$ batches compare to that of existing fully sequential as well as batched algorithms? and (ii) Can the regret of \balg \ match the regret of the best known sequential algorithms; if yes, then how many rounds suffice to achieve this?
Towards answering (i),
we compare \balg \ to
a representative set of sequential algorithms for dueling bandits using the library due to \cite{Komiyama+15a}. We compare \balg \ to the sequential algorithms 
RUCB~\cite{Zoghi+14}, RMED~\cite{Komiyama+15a}, and \textsc{Beat-the-Mean}\ (BTM)~\cite{YueJo11}. The reason that we chose these sequential algorithms is that our batched algorithm (\balg) is based on a similar paradigm, and such a  comparison demonstrates the power of adaptivity in this context.  
We also compare \balg \ to the batched algorithm SCOMP2 \cite{AgarwalGN22}.
We plot the cumulative regret $R(t)$ incurred by the algorithms against time $t$. 
We set $B = \lfloor \log(T)\rfloor$ for \balg \ and SCOMP2 in this experiment.
For (ii), we increased $B$ by a small amount; we found that the performance of \balg \ improves noticeably when given a constant number of additional rounds (we use $B = \lfloor \log(T)\rfloor + 6$ in this experiment).
We perform these experiments 
using the following real-world datasets.

\textbf{Six rankers.} This dataset is based on the $6$ retrieval functions used in the engine of ArXiv.org.

\textbf{Sushi.} The Sushi dataset is based on the Sushi preference dataset~\cite{Kamishima03} that contains the preference data regarding $100$ types of Sushi. A preference dataset using the top-$16$ most popular types of sushi is obtained.

\textbf{Irish election data.} The Irish election data for Dublin and Meath is available at \emph{preflib.org}. It contains partial preference orders over candidates. As in \cite{Agarwal+20}, these are transformed into preference matrices by selecting a subset of candiates to ensure that a Condorcet winner exists. There are $12$ candidates in the {\bf Irish-Meath} dataset, and $8$ in the {\bf Irish-Dublin} dataset.  

\textbf{MSLR and Yahoo!\ data.}  
We also run experiments on two web search ranking datasets: the Microsoft Learning to Rank (MSLR) dataset \cite{QinLiu13} and the Yahoo!\ Learning to Rank Challenge Set 1 \cite{ChapelleChang10}. These datasets have been used in prior work on online ranker evaluation \cite{Zoghi+15a, ChangLM+20}.
We use preference matrices generated using the ``navigational'' configuration 
(see \cite{ChangLM+20} for details). 
The MSLR dataset has $136$ rankers and the Yahoo! dataset has $700$ rankers. We sample $30$ rankers from each dataset while ensuring the existence of a Condorcet winner. In this way, we obtain two datasets, denoted {\bf MSLR30} and {\bf Yahoo30}.

Note that there exists a Condorcet winner in all datasets. 
We repeat each experiment $20$ times and report the average regret. 
In our algorithm, we use the \emph{KL-divergence based confidence bound} due to \cite{Komiyama+15a}  for elimination as it performs much better empirically, and our  theoretical bounds continue to hold (see \S\ref{sec:kl-alg}). 
This KL-divergence based 
elimination criterion eliminates an arm $i$ in round $r$ if $I_i(r) - I^*(r) > \log(T) +  f(K)$
where $I_i(r) = \sum_{j: \widehat{p}_{i,j}(r) < \half} N_{i,j}(r)  \cdot D_{\text{KL}}(\widehat{p}_{i,j}(r), \half)$
and $I^*(r) = \min_{j \in [K]} I_i(r)$.

\ifConfVersion
\else
\begin{figure}
     \centering
     \begin{subfigure}[b]{0.45\textwidth}
         \centering
         \includegraphics[width=\textwidth]{plots/compare_arxiv.pdf}
         \caption{Six rankers}
     \end{subfigure}
     \begin{subfigure}[b]{0.45\textwidth}
         \centering
         \includegraphics[width=\textwidth]{plots/compare_sushi.pdf}
         \caption{Sushi}
     \end{subfigure}
     \begin{subfigure}[b]{0.45\textwidth}
         \centering
         \includegraphics[width=\textwidth]{plots/compare_irish_meath.pdf}
         \caption{Irish-Meath}
     \end{subfigure}
     \begin{subfigure}[b]{0.45\textwidth}
         \centering
         \includegraphics[width=\textwidth]{plots/compare_irish_dublin.pdf}
         \caption{Irish-Dublin}
     \end{subfigure}
     \begin{subfigure}[b]{0.45\textwidth}
         \centering
         \includegraphics[width=\textwidth]{plots/compare_mslr_30.pdf}
         \caption{MSLR30}
     \end{subfigure}
     \begin{subfigure}[b]{0.45\textwidth}
         \centering
         \includegraphics[width=\textwidth]{plots/compare_yahoo_30.pdf}
         \caption{Yahoo30}
     \end{subfigure}
        \caption{Regret v/s t plots of algorithms when $B = \lfloor\log(T)\rfloor$}
        \label{fig:comparisons}
\end{figure}
\fi

\ifConfVersion
\begin{figure}[t]
     \centering
     \begin{subfigure}[b]{0.32\textwidth}
         \centering
         \includegraphics[width=\textwidth]{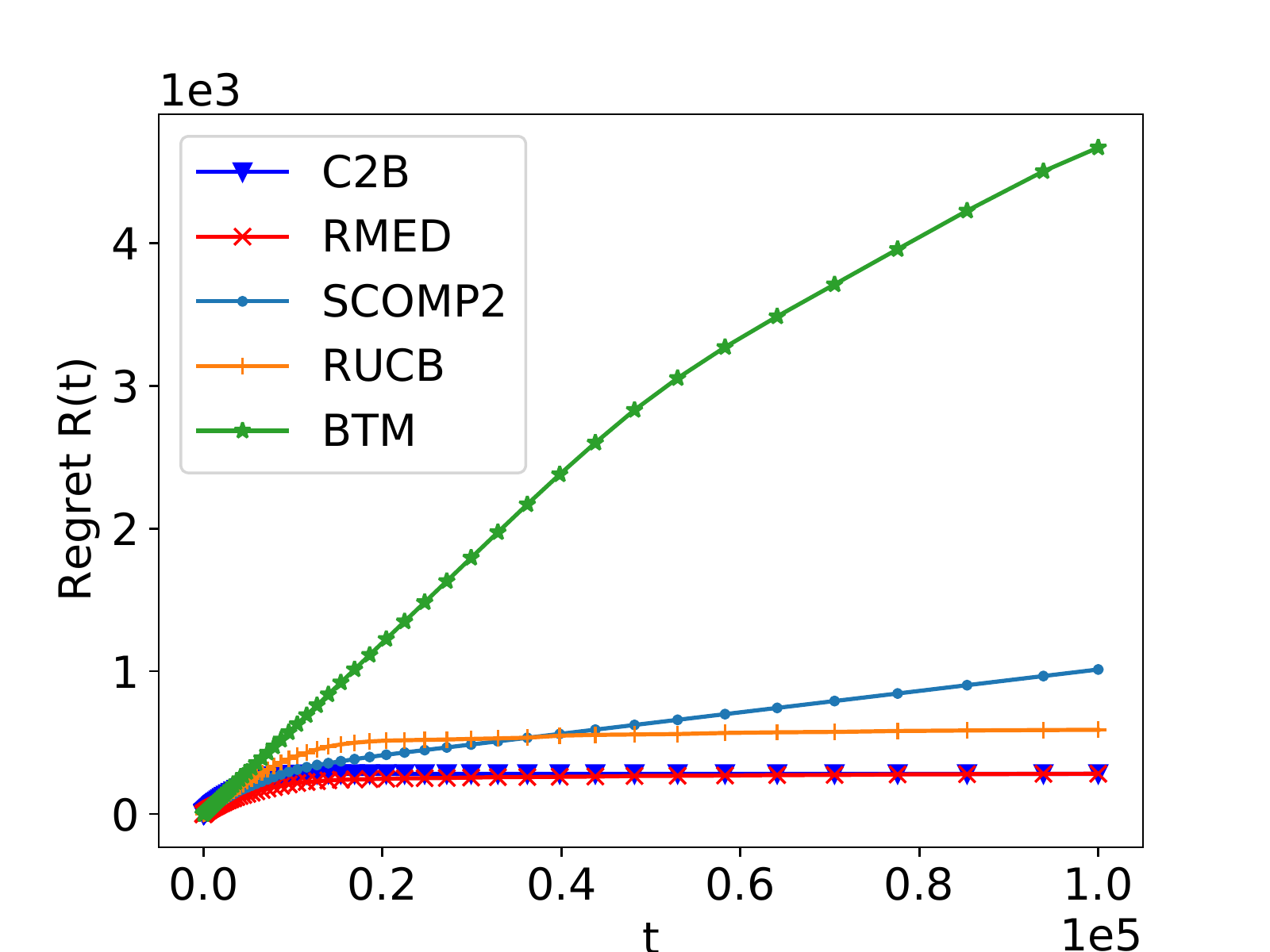}
         \caption{Six rankers}
     \end{subfigure}
     \begin{subfigure}[b]{0.32\textwidth}
         \centering
         \includegraphics[width=\textwidth]{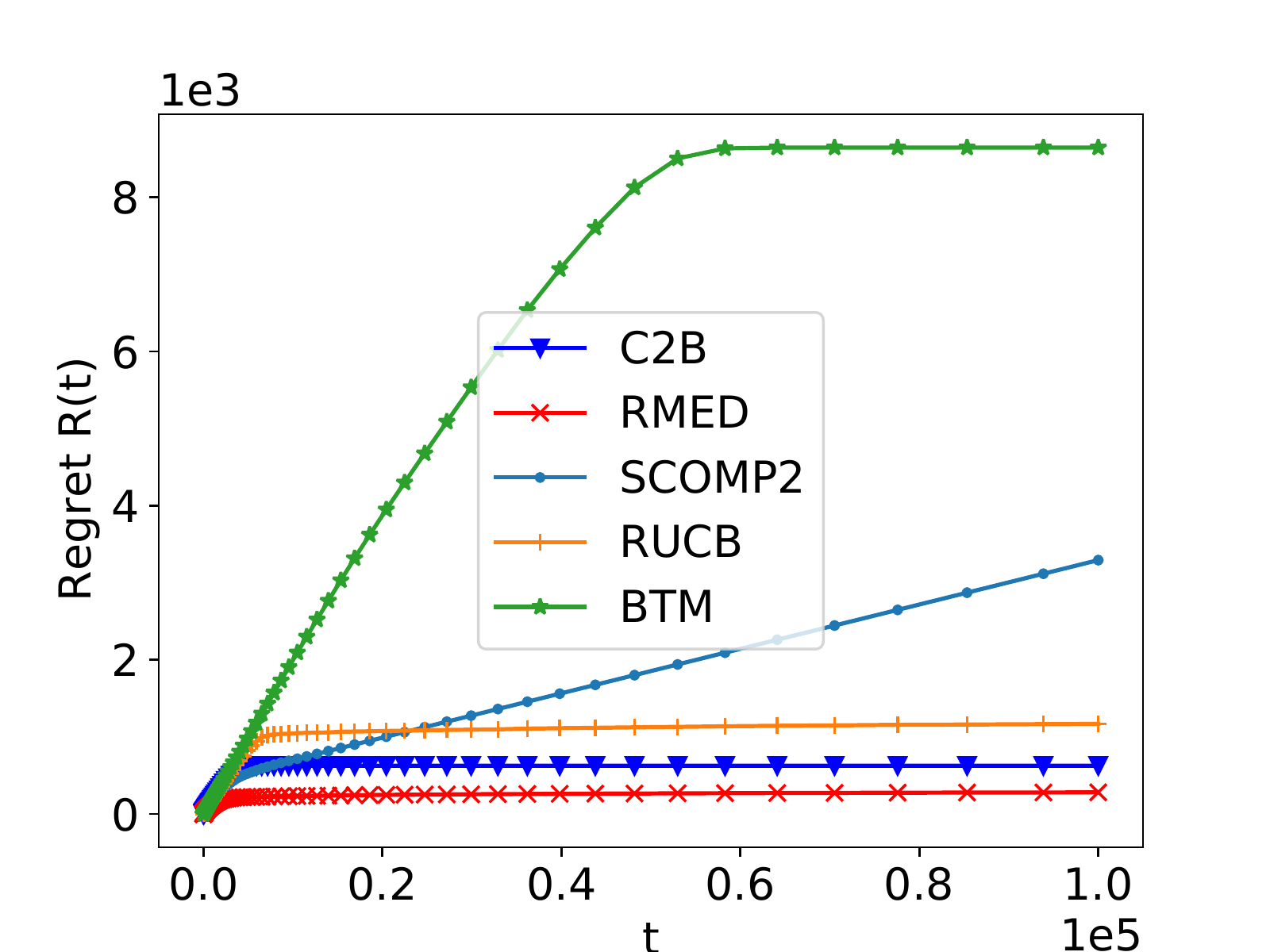}
         \caption{Sushi}
     \end{subfigure}
     \begin{subfigure}[b]{0.32\textwidth}
         \centering
         \includegraphics[width=\textwidth]{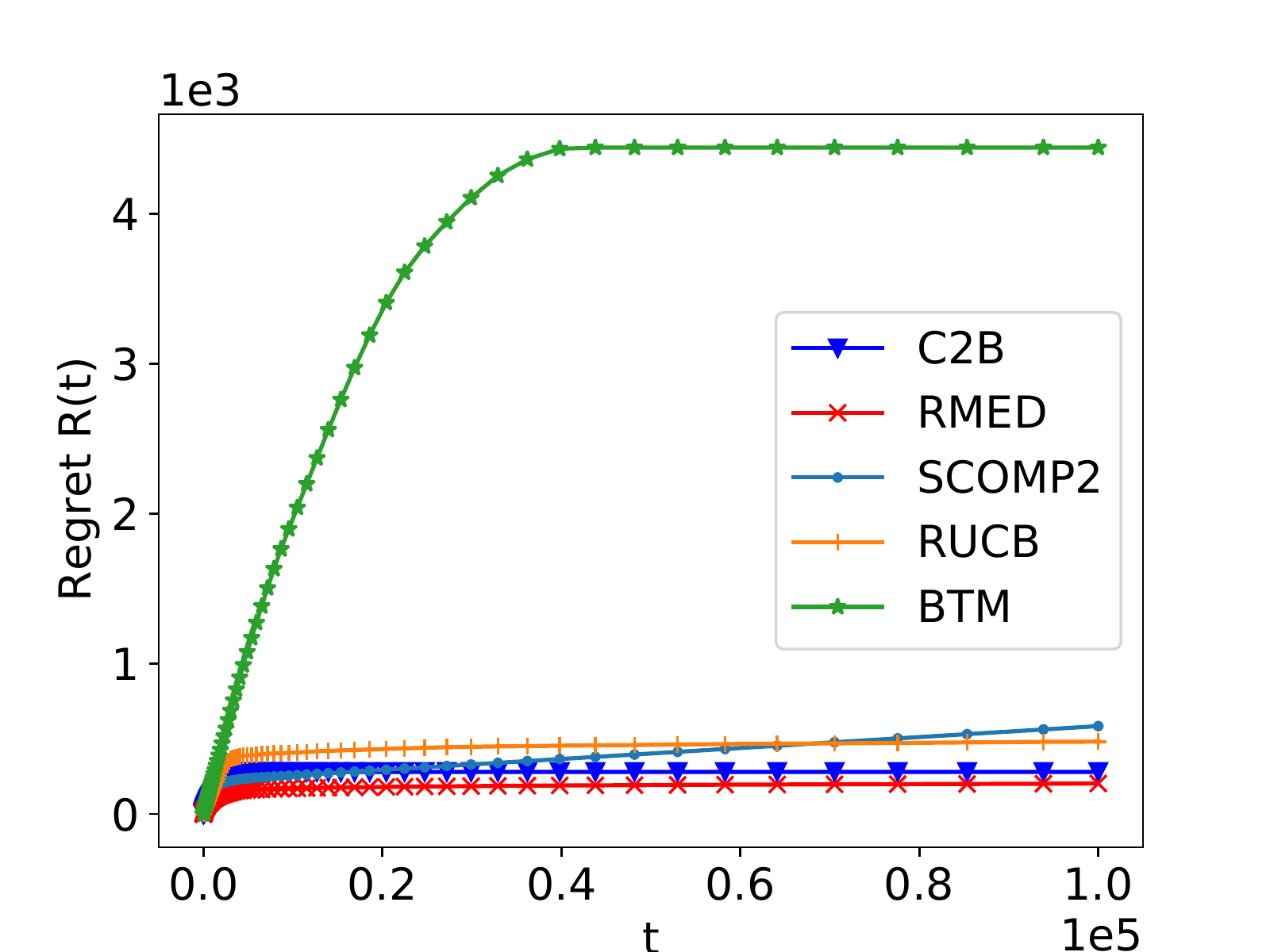}
         \caption{Irish-Meath}
     \end{subfigure}
     \begin{subfigure}[b]{0.32\textwidth}
         \centering
         \includegraphics[width=\textwidth]{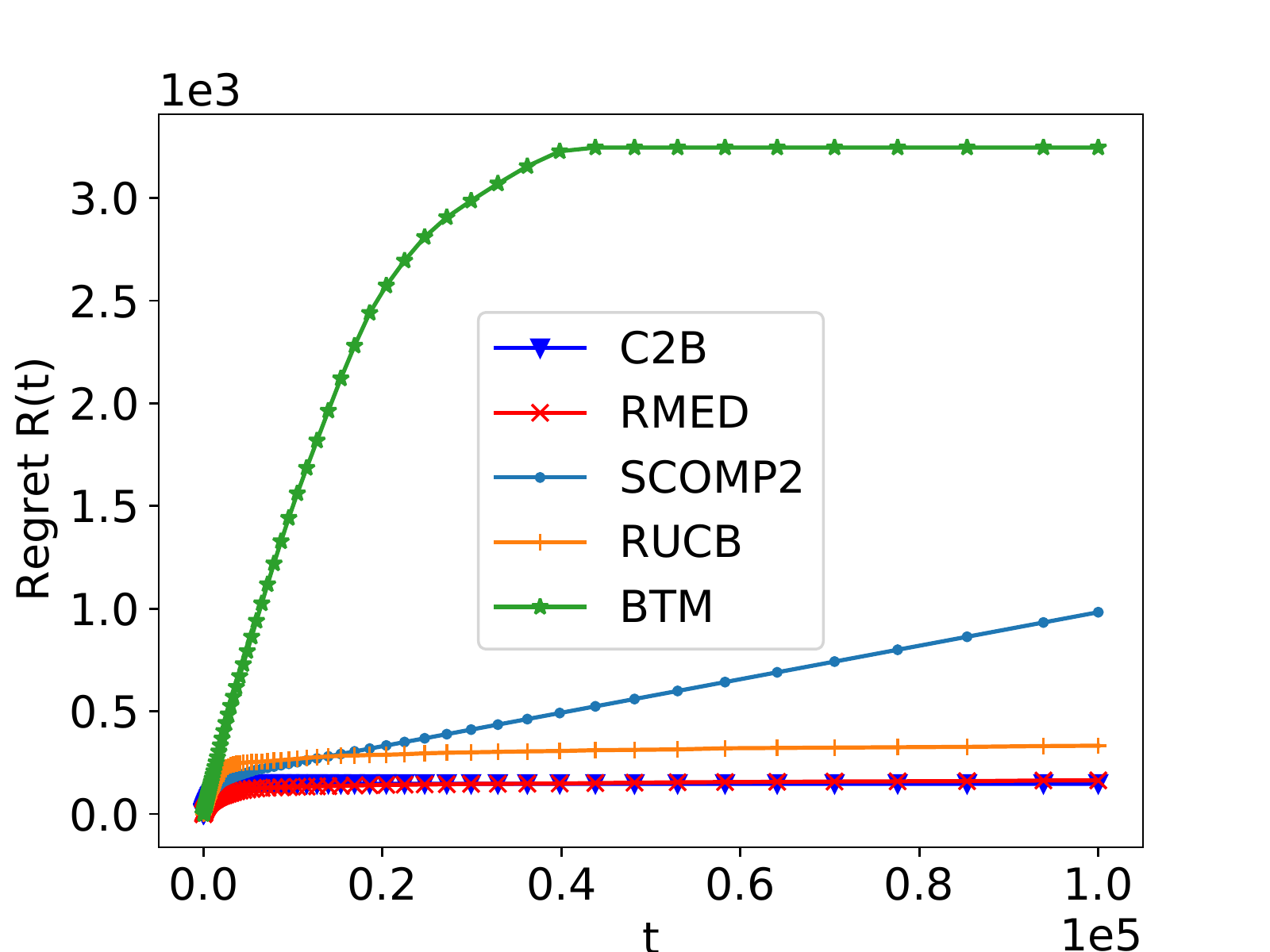}
         \caption{Irish-Dublin}
     \end{subfigure}
     \begin{subfigure}[b]{0.32\textwidth}
         \centering
         \includegraphics[width=\textwidth]{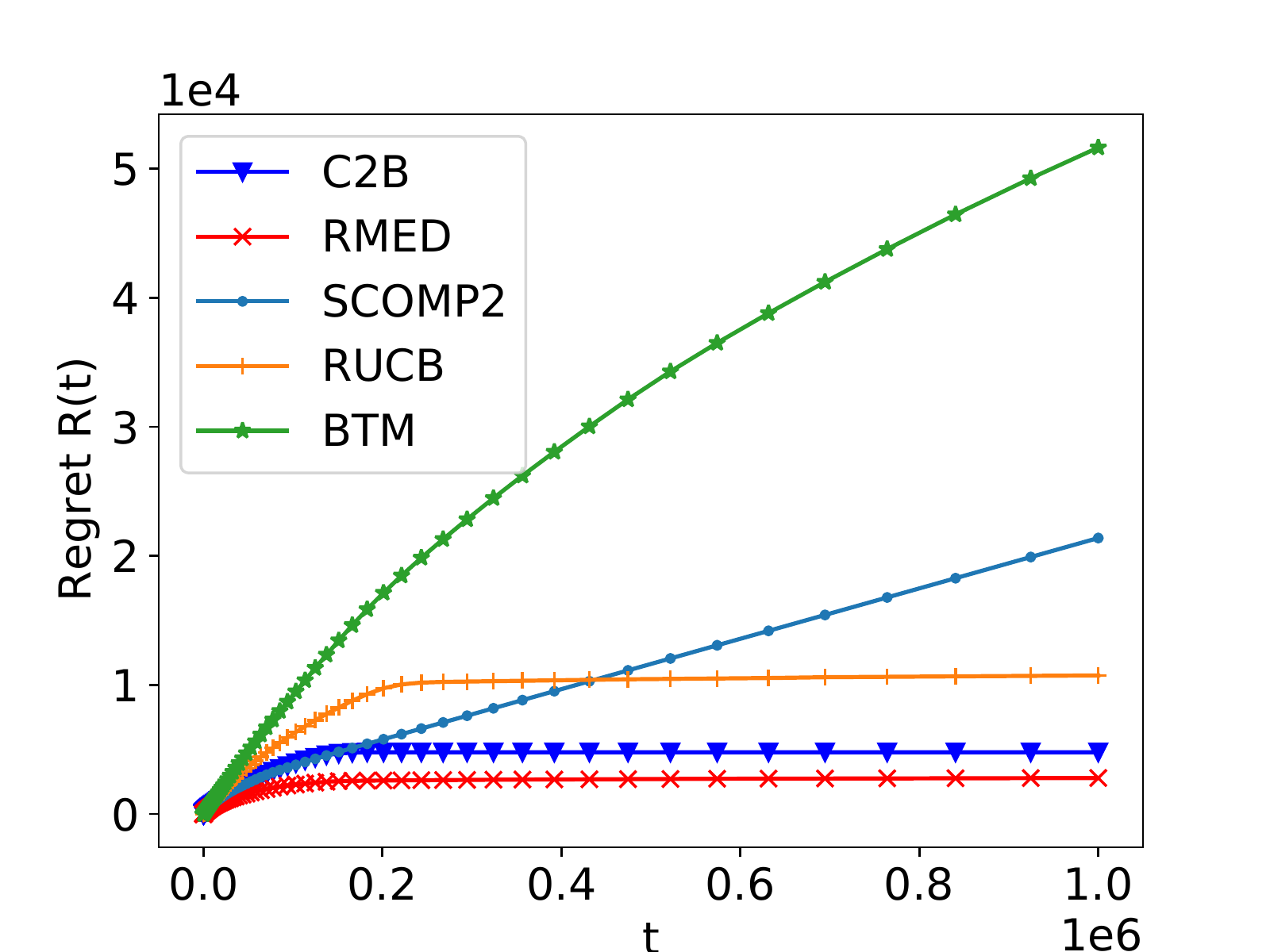}
         \caption{MSLR30}
     \end{subfigure}
     \begin{subfigure}[b]{0.32\textwidth}
         \centering
         \includegraphics[width=\textwidth]{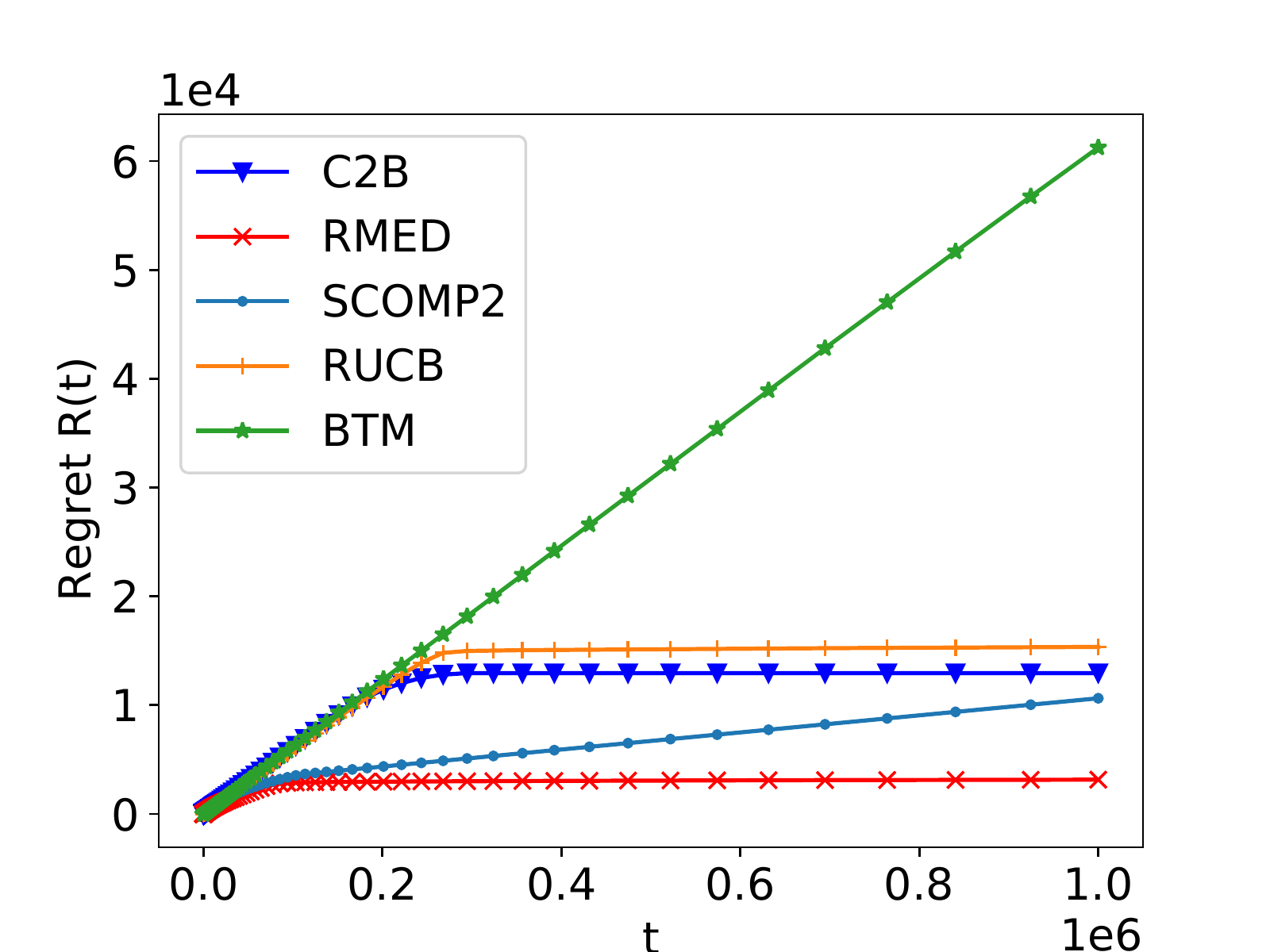}
         \caption{Yahoo30}
     \end{subfigure}
        \caption{Regret v/s t plots of algorithms when $B = \lfloor\log(T)\rfloor + 6$}
        \label{fig:comparisons-extra-rounds}
\end{figure}
\else
\begin{figure}
     \centering
     \begin{subfigure}[b]{0.45\textwidth}
         \centering
         \includegraphics[width=\textwidth]{plots/compare_arxiv_p.pdf}
         \caption{Six rankers}
     \end{subfigure}
     \begin{subfigure}[b]{0.45\textwidth}
         \centering
         \includegraphics[width=\textwidth]{plots/compare_sushi_p.pdf}
         \caption{Sushi}
     \end{subfigure}
     \begin{subfigure}[b]{0.45\textwidth}
         \centering
         \includegraphics[width=\textwidth]{plots/compare_irish_meath_p.pdf}
         \caption{Irish-Meath}
     \end{subfigure}
     \begin{subfigure}[b]{0.45\textwidth}
         \centering
         \includegraphics[width=\textwidth]{plots/compare_irish_dublin_p.pdf}
         \caption{Irish -Dublin}
     \end{subfigure}
     \begin{subfigure}[b]{0.45\textwidth}
         \centering
         \includegraphics[width=\textwidth]{plots/compare_mslr_30_p.pdf}
         \caption{MSLR30}
     \end{subfigure}
     \begin{subfigure}[b]{0.45\textwidth}
         \centering
         \includegraphics[width=\textwidth]{plots/compare_yahoo_30_p.pdf}
         \caption{Yahoo30}
     \end{subfigure}
        \caption{Regret v/s t plots of algorithms when $B = \lfloor\log(T)\rfloor + 6$}
        \label{fig:comparisons-extra-rounds}
\end{figure}
\fi

\textbf{Computational Results.} As mentioned earlier, 
we compare our algorithms against a representative set of sequential dueling bandits algorithms
(RUCB, RMED, and BTM). 
We set $\alpha = 0.51$ for RUCB, and $f(K) = 0.3 K^{1.01}$ for RMED\ and \balg, and $\gamma = 1.3$ for BTM: these parameters are known to perform well both theoretically and empirically \cite{Komiyama+15a}.
We set $T = 10^6$ for {MSLR30} and {Yahoo30} datasets (as they have larger number of arms), and $T=10^5$ for the remaining four. For the first set of experiments, we set $B = \lfloor\log(T)\rfloor$.
We observe that \balg \ always outperforms BTM \ and beats SCOMP2 \ on most of the datasets. We observe that even when SCOMP2 \ beats \balg \ it has a slightly linear curve (implying that its regret would keep increasing as $T$ increases) while the regret curve of \balg \ is mostly flat. 
Furthermore, \balg \ performs comparably to RUCB \ in all datasets except Yahoo30.
We plot the results in \Cref{fig:comparisons}.
In the second set of experiments, we set $B = \lfloor \log(T) \rfloor + 6$. We observe that \balg \ always outperforms RUCB \ and, in fact, performs comparably to RMED \ on all datasets except Yahoo30.
We plot the results in \Cref{fig:comparisons-extra-rounds}.
Finally, we note that SCOMP2 exhibits varying performance across runs (even on the same dataset) and we  think that this is due to the randomness involved in selecting the ``seed set''.

\ifConfVersion
\vspace{-0.1in}
\else
\fi
\section{Conclusion}\label{sec:conclusion}
\ifConfVersion
\vspace{-0.1in}
\else
\fi
In this paper, we proposed a batched algorithm, named \balg, for the $K$-armed dueling bandit problem.
Assuming the existence of a Condorcet winner, we show both high-probability and expected regret bounds for \balg \  that trade-off smoothly with the number of batches.
Furthermore, we obtain asymptotic regret of $O(K^2\log^2(K)) + O(K\log(T))$ in $O(\log(T))$ batches,
nearly matching
the best regret bounds known in the fully sequential setting under the Condorcet condition.
Our computational results show that \balg, using $O(\log(T))$ batches, achieves almost the same performance as fully sequential algorithms over a variety of real-world datasets.
A direction for future research is to design batched algorithms for the $K$-armed dueling bandit problem when a Condorcet winner does not exist; for example, designing an algorithm for a more general concept of winner, such as \emph{Copeland winner} \cite{WuLiu16} or \emph{von Neumann winner} \cite{DudikHSSZ15}.

\appendix

\newcommand{\KL}{\text{KL}}
\newcommand{\balgkl}{$\mathtt{C2B}$-${\mathtt{KL}}$}

\section{The Batched Algorithm with KL-based Elimination Criterion}\label{sec:kl-alg}

In this section, we modify \balg \ to use a Kullback-Leibler divergence based elimination criterion. We provide a complete description of the algorithm, denoted \balgkl,
in Algorithm~\ref{alg:kl-b-round-condorcet}.
In what follows, we highlight the main differences of \balgkl \ from \balg.
Recall the following notation.
We use $\A$ to denote the current set of \emph{active} arms; i.e., the arms that have not been eliminated. We use index $r$ for rounds or batches. If pair $(i, j)$ is compared in round $r$, it is compared $q_r =  \lfloor q^r \rfloor $ times where $q = T^{1/B}$. We define the following quantities at the \emph{end} of each round $r$:
\begin{itemize}
\item $N_{i, j}(r)$ is the total number of times the pair $(i, j)$ has been compared.
\item $\widehat{p}_{i, j}({r})$ is the frequentist estimate of $p_{i, j}$, i.e.,  
\begin{align}
\label{eq:kl-pair_est}
\widehat{p}_{i, j}({r}) = \frac{\# \ i\text{ wins against } j \text{ until end of round } r}{N_{i, j}(r)}
    \,.
\end{align}

\item  A confidence-interval radius for each $(i,j)$ pair:
\begin{equation*}
c_{i, j}(r) = \sqrt{\frac{2\log(2K^2q_r)}{N_{i, j}(r)}}
\end{equation*} 

\item We define a term $I_j(r)$ which, at a high-level, measures how unlikely it is for $j$ to be the Condorcet winner at the end of batch $r$: 
$$ I_j(r) = \sum_{i: \widehat{p}_{i, j}(r) \geq \frac{1}{2}} D_{\KL}\left(\widehat{p}_{i, j}(r), \frac{1}{2}\right) \cdot N_{i, j}(r),$$ where $D_{\KL}(p, q)$ denotes the Kullback–Leibler divergence between two Bernoulli distributions: $B(p)$ and $B(q)$. We define $I^*(r) = \min_{j \in \A} I_j(r)$.

\end{itemize} 

The $B$-round algorithm, \balgkl, proceeds exactly as \balg. The only change is in the \emph{elimination criterion}, which we describe next. 

\paragraph{Elimination Criterion.} In round $r$, if, for any arm $j$, we have
$I_j(r) - I^*(r) > \log(T) + f(K)$, then  $j$ is eliminated from $\A$. Here $f(K)$ is a non-negative function of $K$, independent of $r$. \\

The main result of this section is to show that \balgkl \ achieves the following guarantee.

\begin{theorem}\label{thm:kl-condorcet-init-high-prob}
For any integer $B \geq 1$, there is an algorithm for the $K$-armed dueling bandit problem
that uses at most $B$ rounds. Furthermore,
for any $\delta > 0$, with probability at least $1 - \delta - \frac{1}{T}\cdot e^{K\log(C)-f(K)}$, where $C$ is some constant (see Lemma~\ref{lem:kl-not-deleted}), its regret under the Condorcet condition is at most 
\begin{align*}
R(T) &\leq O\left( T^{1/B} \cdot \frac{K^2 \log(K)}{\Delta_{\min}^2} \cdot \log\left(\frac{\log K}{\Delta_{\min}}\right) \right) &+\, O\left( T^{2/B} \cdot K^2\cdot \sqrt{\frac{1}{\delta}}\right) \,+\, \sum_{j \neq a^*} O\left(\frac{T^{1/B}\cdot \log( T)}{\Delta_j}\right) \\
& \ & + \sum_{j \neq a^*} O\left(\frac{T^{1/B} \cdot f(K)}{\Delta_j}\right)
\end{align*}
\end{theorem}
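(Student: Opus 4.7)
My plan is to mirror the structure of the proof of \Cref{thm:condorcet-init-high-prob}, changing only the pieces that concern the elimination step. All of the structural lemmas in \S3.1 -- specifically \Cref{lem:not-defeated}, \Cref{lem:rdel}, and \Cref{lem:trapped} -- depend only on the event $E(\delta)$ (which controls the $c_{i,j}(r)$ used in the defeated sets $D_r(i)$) and on the fact that $a^*$ survives elimination. Since $E(\delta)$ and the definition of $D_r(i)$ are identical in \balgkl, these lemmas transfer verbatim, so once I can guarantee that $a^*$ is not eliminated by the KL criterion I will immediately conclude that $a^*$ is the champion in every round $r > \rdel$, and the $R_1$ regret bound from the proof of \Cref{thm:condorcet-init-high-prob} carries over unchanged.

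The first new piece is replacing the event $G$ by an event $G_{\KL}$ that asserts $I_{a^*}(r) \leq \log(T) + f(K)$ for every batch $r$. Under $G_{\KL}$, because $I^*(r) \leq I_{a^*}(r)$, we have $I_{a^*}(r) - I^*(r) \leq \log(T) + f(K)$, so the KL criterion never eliminates $a^*$. To bound $\pr(\neg G_{\KL})$ I will invoke a maximal concentration inequality for the KL index of the Condorcet winner of the kind underlying the analysis of RMED in \cite{Komiyama+15a}: there is an absolute constant $C$ such that for every $x \geq 0$, $\pr(\exists r:\, I_{a^*}(r) > x) \leq C^{K}\, e^{-x}$. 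Choosing $x = \log(T) + f(K)$ gives $\pr(\neg G_{\KL}) \leq \tfrac{1}{T} e^{K\log C - f(K)}$, and a union bound with $E(\delta)$ yields the failure probability stated in the theorem.

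The second new piece is bounding $R_2$, the regret incurred after round $\rdel$. Fix any $j \neq a^*$ and let $r$ be the penultimate round in which $j$ participates. Because $j$ is not eliminated at the end of round $r$, the KL rule gives $I_j(r) - I^*(r) \leq \log(T) + f(K)$; combined with $G_{\KL}$ this implies $I_j(r) \leq 2(\log(T) + f(K))$. By \Cref{lem:trapped}, after round $\rdel$ the only comparisons involving $j$ are against $a^*$, so $I_j(r) \geq N_{a^*,j}(r)\cdot D_{\KL}\!\left(\widehat{p}_{a^*,j}(r),\tfrac12\right)$. Under $E(\delta)$ and for $r \geq \rdel$, the computation in the proof of \Cref{lem:trapped} shows $c_{a^*,j}(r) \leq \Delta_{\min}/2 \leq \Delta_j/2$, hence $\widehat{p}_{a^*,j}(r) \geq \tfrac12 + \Delta_j/2$; using the standard lower bound $D_{\KL}(\tfrac12+\varepsilon,\tfrac12)\geq 2\varepsilon^2$, I obtain $D_{\KL}(\widehat{p}_{a^*,j}(r),\tfrac12) \geq \Omega(\Delta_j^2)$. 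Rearranging gives $N_{a^*,j}(r) \leq O\!\left(\frac{\log(T)+f(K)}{\Delta_j^2}\right)$, and one additional batch at most doubles this count (as in the proof of \Cref{thm:condorcet-init-high-prob}), so the regret contributed by $j$ is $O\!\left(T^{1/B}(\log T + f(K))/\Delta_j\right)$. Summing over $j \neq a^*$ and combining with $R_1$ gives the theorem.

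The main obstacle will be the maximal concentration inequality $\pr(\exists r:\, I_{a^*}(r) > x) \leq C^K e^{-x}$: in the sequential setting of \cite{Komiyama+15a} this bound is stated over all time steps, and I need to verify that the peeling/martingale argument still goes through when comparisons are blocked into batches of size $\lfloor q^r \rfloor$ without inflating the constant $C$ beyond what is absorbed into the $K\log C$ exponent. A minor secondary point is keeping the Pinsker-type lower bound on $D_{\KL}$ valid throughout the regime $\widehat{p}_{a^*,j}(r) \in [\tfrac12 + \Delta_j/2,\, 1)$, but this follows from the usual convexity inequality and does not require any case analysis.
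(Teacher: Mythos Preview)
Your plan is correct and mirrors the paper's proof almost exactly; the maximal concentration bound you flag as the main obstacle is precisely \Cref{lem:kl-not-deleted}, whose proof (a union bound over all subsets $S\subseteq[K]\setminus\{a^*\}$ of arms empirically beating $a^*$ and over all comparison-count vectors $\{n_j\}_{j\in S}$, followed by a Markov/Chernoff argument on $e^{A}$) is batch-agnostic and yields exactly the $C^K e^{-x}$ tail you need. The only place the paper is slightly sharper is in the $R_2$ step: once $a^*$ is the champion after round $\rdel$, no arm $i$ has $\widehat{p}_{i,a^*}(r)\geq\tfrac12$, so $I_{a^*}(r)=0$ and hence $I^*(r)=0$, giving $I_j(r)\leq\log T+f(K)$ directly without your extra factor of $2$.
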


\paragraph{Remark.} Setting $f(K) > K\log(C)$, we get the same asymptotic expected regret bound as in Theorem~\ref{thm:condorcet-init-exp}. 
Following \cite{Komiyama+15a},  we set $f(K) = 0.3 K^{1.01}$ in our experiments.

We require the following result in the proof of Theorem~\ref{thm:kl-condorcet-init-high-prob}.
\begin{fact}\label{fact:diff-kl}
For any $\mu$ and $\mu_2$ satisfying $0 < \mu_2 < \mu < 1$. Let $C_1(\mu, \mu_2) = (\mu - \mu_2)^2/(2\mu(1-\mu_2))$. Then, for any $\mu_3 \leq \mu_2$, $$D_{\KL}(\mu_3, \mu) - D_{\KL}(\mu_3, \mu_2) \geq C_1(\mu, \mu_2) >0.$$

\end{fact}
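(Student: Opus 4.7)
The plan is to reduce the two-KL difference to a single KL divergence by exploiting the linearity of the map $p \mapsto D_{\KL}(p, \mu) - D_{\KL}(p, \mu_2)$, and then lower bound that single KL divergence via an integral/Taylor remainder argument.

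First, I would write $D_{\KL}(p, q) = p \log(p/q) + (1-p)\log((1-p)/(1-q))$ and observe that
\[
f(p) \;:=\; D_{\KL}(p, \mu) - D_{\KL}(p, \mu_2) \;=\; p\log\!\frac{\mu_2}{\mu} + (1-p)\log\!\frac{1-\mu_2}{1-\mu},
\]
because the $p\log p$ and $(1-p)\log(1-p)$ pieces cancel. This is linear (in fact affine) in $p$, with slope
\[
\log\!\frac{\mu_2}{\mu} - \log\!\frac{1-\mu_2}{1-\mu} \;=\; \log\!\frac{\mu_2(1-\mu)}{\mu(1-\mu_2)}.
\]
Since $\mu_2 < \mu < 1$ implies $\mu_2(1-\mu) < \mu(1-\mu_2)$, the slope is strictly negative, so $f$ is a decreasing function of $p$. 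Thus for any $\mu_3 \le \mu_2$,
\[
f(\mu_3) \;\geq\; f(\mu_2) \;=\; D_{\KL}(\mu_2, \mu) - D_{\KL}(\mu_2, \mu_2) \;=\; D_{\KL}(\mu_2, \mu).
\]

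Next, I would lower bound $D_{\KL}(\mu_2, \mu)$. Treating $g(x) := D_{\KL}(x, \mu)$ as a function of $x$, we have $g(\mu) = 0$, $g'(\mu) = 0$, and $g''(x) = 1/(x(1-x))$. Taylor's theorem with integral remainder then yields
\[
D_{\KL}(\mu_2, \mu) \;=\; g(\mu_2) \;=\; \int_{\mu_2}^{\mu} \frac{t - \mu_2}{t(1-t)}\, dt.
\]
For $t \in [\mu_2, \mu]$ we have $t \le \mu$ and $1 - t \le 1 - \mu_2$, hence $t(1-t) \le \mu(1-\mu_2)$. Substituting and evaluating the resulting elementary integral gives
\[
D_{\KL}(\mu_2, \mu) \;\geq\; \frac{1}{\mu(1-\mu_2)} \int_{\mu_2}^{\mu} (t - \mu_2)\, dt \;=\; \frac{(\mu - \mu_2)^2}{2\mu(1-\mu_2)} \;=\; C_1(\mu, \mu_2).
\]
Chaining this with the monotonicity step completes the proof, and positivity of $C_1(\mu, \mu_2)$ is immediate from $\mu_2 < \mu$.

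I expect no real obstacle here; the only subtle point is choosing the right bound on $t(1-t)$ over $[\mu_2, \mu]$ so that the constant matches $C_1(\mu, \mu_2) = (\mu - \mu_2)^2/(2\mu(1-\mu_2))$ precisely rather than a looser Pinsker-type constant. Bounding $t \le \mu$ and $1-t \le 1-\mu_2$ separately (rather than using $\max$ over the interval) is what gives exactly the claimed denominator.
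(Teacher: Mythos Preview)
Your proof is correct. The paper states this result as a \textbf{Fact} without proof (it is borrowed from prior work, presumably \cite{Komiyama+15a}), so there is no in-paper argument to compare against. Your two-step approach---first using the affine dependence of $D_{\KL}(p,\mu)-D_{\KL}(p,\mu_2)$ on $p$ to reduce to $D_{\KL}(\mu_2,\mu)$, and then lower-bounding that single KL via the integral remainder with the pointwise bound $t(1-t)\le \mu(1-\mu_2)$ on $[\mu_2,\mu]$---is clean and yields exactly the stated constant $C_1(\mu,\mu_2)$.
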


\begin{algorithm}
\caption{\balgkl}
\label{alg:kl-b-round-condorcet}
\begin{algorithmic}[1]
\State \textbf{Input:} Arms $\B$, time-horizon $T$, integer $B \geq 1$
\State active arms $\A \gets \B$,  $r \gets 1$, emprical probabilities $\widehat{p}_{i, j}(0) = \frac{1}{2}$ for all $i, j\in \B^2$
\While{number of comparisons $\leq T$} 
\State \textbf{if} $\A = \{i\}$ for some $i$ \textbf{then} play $(i,i)$ for remaining trials
\State $D_r(i) \gets \{j \in \A : \widehat{p}_{i,j}(r-1) > \frac{1}{2} + c_{i, j}(r-1)\}$
\State $i_r \gets \arg\max_{i \in \A} |D_r(i)|$
\For{ $i \in \A \setminus \{i_r\}$}
\If{$i \in D_r(i_r)$}  
\State compare $(i_r, i)$ for $q_r$ times
\Else
\State for each $j \in \A$, compare $(i, j)$ for $q_r$ times  
\EndIf
\EndFor
\State compute $\widehat{p}_{i, j}(r)$ values
{
\If{$\exists j$ : $I_j(r) - I^*(r) > \log(T) + f(K)$} 
\State $\A \gets \A \setminus \{j\}$
\EndIf
}
\State $r \gets r+1$
\EndWhile
\end{algorithmic}
\end{algorithm}

\ignore{
\begin{theorem}\label{thm:high-prob}
Given any set $\B$ of $K$ arms and a time-horizon $T$, a positive integer $B \geq 1$, \balg \ uses at most $B$ rounds. Furthermore, for any $\delta > 0$, with probability at least $1 - \delta - \frac{1}{T}$, its regret   under the Condorcet 
assumption is at most 
$$ R(T) \leq  O\left( T^{1/B} \cdot \frac{K^2 \log(K)}{\Delta_{\min}^2} \cdot \log\left(\frac{\log K}{\Delta_{\min}}\right) \right) \,+\, O\left( T^{2/B} \cdot K^2\cdot \sqrt{\frac{1}{\delta}}\right) \,+\, \sum_{j \neq a^*} O\left(\frac{T^{1/B}\cdot \log(KT)}{\Delta_j}\right).$$
\end{theorem}

\begin{theorem}\label{thm:exp}
Given any set $\B$ of $K$ arms and a time-horizon $T$, a positive integer $B \geq 1$, \balg \ uses at most $B$ rounds and its expected regret  under the Condorcet 
condition is at most 
$$\E[R(T)] =  O\left( T^{1/B} \cdot \frac{K^2 \log(K)}{\Delta_{\min}^2} \cdot \log\left(\frac{\log K}{\Delta_{\min}}\right) \right) \,+\, O\left( T^{2/B} \cdot K^2\right) \,+\, \sum_{j \neq a^*} O\left(\frac{T^{1/B}\cdot \log(KT)}{\Delta_j}\right).$$
\end{theorem}
}

The high-level outline of the analysis is exactly the same as that of \balg. For completeness, we provide the analysis in the following section; however, we skip the proofs of lemmas that follow from the analysis of \balg.

\subsection{The Analysis}
In this section, we prove the high-probability regret bound for \balgkl.
Recall that $q = T^{1/B}$, and that $q \geq 2$. 
We first show that, with high probability, $a^*$ is not eliminated during the execution of the algorithm. The following lemma formalizes this.

\begin{lemma}\label{lem:kl-not-deleted}
Let $G$ denote the event that the best arm $a^*$ is not eliminated during the execution of \balgkl. We can bound the probability of $\overline{G}$ as follows. $$\pr(\overline{G}) \leq \frac{1}{T}\cdot e^{K\log(C)-f(K)},$$ where $C = \max_j C(j) + 1$, is a constant, with $C(j) = \left(\frac{1}{e^{ D_{\KL}\left(p_{j, a^*}, 1/2\right)}-1} + \frac{e^{C_1\left(p_{a^*,j}, 1/2\right)} }{\left(e^{C_1\left(p_{a^*,j}, 1/2\right)} -1\right)^2}\right)$.
\end{lemma}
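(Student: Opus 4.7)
The event $\overline G$ occurs exactly when $a^*$ is removed from $\A$ at some round $r$, which by the KL elimination rule forces
\[
 I_{a^*}(r) - I^*(r) \,>\, \log(T) + f(K).
\]
Since $I^*(r) \geq 0$ always, a necessary condition is $I_{a^*}(r) > \log(T) + f(K)$, and this is the event whose probability I would bound. The plan is to decompose
\[
 I_{a^*}(r) \,=\, \sum_{j \ne a^*} V_j, \qquad V_j \,:=\, N_{j, a^*}(r)\, D_{\KL}(\widehat{p}_{j, a^*}(r),\, \half)\, \mathbbm{1}\!\left[\widehat{p}_{j, a^*}(r) \geq \half\right].
\]
Because the Bernoulli comparisons from distinct pairs $(j, a^*)$ and $(j', a^*)$ are drawn from independent sources, the family $\{V_j\}_{j \neq a^*}$ is mutually independent. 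The core strategy is therefore a Chernoff/MGF bound: first control $\E[e^{V_j}]$ for each $j$, then multiply across $j$, and finally apply Markov.

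\textbf{Per-arm MGF bound.} Fix $j \neq a^*$ and write $p := p_{j, a^*} < \half$ (as $a^*$ is the Condorcet winner). I would condition on $N_{j, a^*}(r) = n$. Since $V_j$ is supported on the event $\widehat{p}_{j, a^*}(r) \geq \half$, which is a deviation above the true mean $p$, the standard Chernoff bound for Binomial$(n, p)$ yields
\[
 \pr\!\left[\widehat{p}_{j, a^*}(r) = k/n \,\middle|\, N_{j, a^*}(r) = n\right] \,\leq\, e^{-n\, D_{\KL}(k/n,\, p)} \qquad \text{for } k/n \geq p.
\]
Next, I would apply Fact~\ref{fact:diff-kl} in its complemented form (with $\mu_3 = 1 - k/n \leq \half = \mu_2 < 1 - p = \mu$, using the Bernoulli symmetry $D_{\KL}(x, y) = D_{\KL}(1-x, 1-y)$), which gives the crucial shift
\[
 D_{\KL}(k/n,\, p) \,\geq\, D_{\KL}(k/n,\, \half) \,+\, C_1(p_{a^*, j},\, \half).
\]
Setting $C_1 := C_1(p_{a^*, j}, \half)$, the factor $e^{n D_{\KL}(k/n, \half)}$ in the MGF cancels against the dominant piece of $e^{-n D_{\KL}(k/n, p)}$, leaving an $e^{-n C_1}$ damping per summand. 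Summing over the $\le n+1$ admissible values of $k \ge n/2$ and then over $n \ge 1$ via the geometric-type series $\sum_{n\ge 1}(n+1)x^n$ produces the second term $\tfrac{e^{C_1}}{(e^{C_1}-1)^2}$ of $C(j)$. The boundary case $\widehat{p}_{j, a^*}(r) = \half$ contributes $0$ to $V_j$ but still enters the MGF through $\pr[\widehat{p}_{j, a^*}(r) = \half \mid n] \le e^{-n D_{\KL}(\half, p)}$; summing this over $n$ produces the first term $\tfrac{1}{e^{D_{\KL}(p_{j, a^*}, \half)} - 1}$. Together these deliver $\E[e^{V_j}] \leq C(j)$.

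\textbf{Combining and finishing.} By independence across $j$,
\[
 \E[e^{I_{a^*}(r)}] \,=\, \prod_{j \neq a^*} \E[e^{V_j}] \,\leq\, \left(\max_{j} C(j)\right)^{K-1}.
\]
Markov's inequality then gives, at each round $r$,
\[
 \pr\!\left[I_{a^*}(r) > \log T + f(K)\right] \,\leq\, \tfrac{1}{T}\, e^{(K-1) \log(\max_j C(j))\, -\, f(K)}.
\]
Finally, I would union-bound over the at most $B \le \log T$ rounds at which an elimination can occur, and set $C := \max_j C(j) + 1$ so that the multiplicative factor of $B$ is absorbed into the $K \log C$ exponent (the extra "$+1$" provides exactly the slack needed to inflate $(\max_j C(j))^{K-1}\cdot B$ up to $C^K$). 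This delivers the stated $\pr(\overline G) \leq \tfrac{1}{T}\, e^{K \log C - f(K)}$.

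\textbf{Main obstacle.} The hard part is the per-arm MGF bound: $V_j$ has a large point mass at $0$ (since $\widehat{p}_{j, a^*}(r) < \half$ is the typical event when $p < \half$) yet, conditioned on the deviation, only an $e^{-V_j}$-type tail, so naive MGF computations diverge. Fact~\ref{fact:diff-kl} is what rescues the argument by swapping the reference $p$ for the more convenient $\half$ at the cost of an additive $C_1 > 0$; the resulting $e^{-n C_1}$ damping is exactly what makes both the $k$-sum and the $n$-sum converge and produces the two-term form of $C(j)$. A secondary subtlety is the union bound over rounds; burying the factor of $B$ in the "$+1$" is the cleanest route, though a Doob-type maximal-inequality argument on the process $\{e^{I_{a^*}(r)}\}_r$ would eliminate this slack entirely.
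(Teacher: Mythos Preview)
There are two genuine gaps. First, the claim that $\{V_j\}_{j\neq a^*}$ are mutually independent is not correct: each sample size $N_{j,a^*}(r)$ is determined by the algorithm's adaptive decisions, which depend on the outcomes of \emph{all} pairs' comparisons in previous rounds. Hence $N_{j,a^*}(r)$ and $N_{j',a^*}(r)$ are coupled through the shared history, and so are $V_j$ and $V_{j'}$; the factorization $\E[e^{I_{a^*}(r)}]=\prod_j \E[e^{V_j}]$ is therefore not justified. Second, the absorption of the factor $B$ into the ``$+1$'' fails: you need $B\cdot(\max_j C(j))^{K-1}\le (\max_j C(j)+1)^K$, but the right-hand side is an instance-dependent constant while $B$ can be as large as $\log T$, so for large $T$ the inequality breaks.

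The paper sidesteps both issues at once by union-bounding not over rounds but over all possible \emph{states} $(S,\{n_j\}_{j\in S})$, where $S$ is the putative set of arms $j$ with $\widehat p_{j,a^*}\ge\half$ and $n_j$ the putative number of $(j,a^*)$-comparisons. For fixed $(S,\{n_j\})$ the frequencies $\widehat p^{(n_j)}_{j,a^*}$ are built from disjoint, fixed-length Bernoulli blocks and are genuinely independent, so the MGF factorizes; summing the resulting product bound over all $\{n_j\}$ yields the two-term $C(j)$, and then summing over $S$ via $\sum_{S}\prod_{j\in S}C(j)\le\prod_{j}(1+C(j))\le C^{K}$ is where the ``$+1$'' actually enters --- it accounts for the subset sum, not for a round count. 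Your closing remark about a maximal-type inequality is the right instinct: the paper's state-union-bound plays exactly that role and removes any need to sum over $r$.
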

\ifConfVersion
\vspace{-0.2in}
\else\fi
\begin{proof}

Let $n_j$ denote the number of times $a^*$ and $j$ are compared. Let $\widehat{p}_{a^*, j}(n_j)$ denote the frequentist estimate of ${p}_{a^*, j}$ when $a^*$ and $j$ are compared $n_j$ times (we will abuse notation and use $\widehat{p}_{a^*, j}$ when $n_j$ is clear from context). Let $S \in 2^{[K]\setminus \{a^*\}} \setminus \emptyset$, and consider vector $\{n_j \in \mathbb{N} : j \in S\}$. 
We define $A = \sum_{j \in S} D_{\KL}\left(\widehat{p}_{j,a^*}, 1/2\right) \cdot n_j$.
Let $D(S; \{n_j : j \in S\})$ denote the event that $a^*$ and $j$ are compared $n_j$ times and $\widehat{p}_{a^*, j} \leq 1/2$ for all $j \in S$, and that $A > \log(T) + f(K)$. The probability of this event upper bounds the probability that $a^*$ is eliminated (as per our elimination criterion) when $a^*$ and $j$ are compared $n_j$ times, and $\widehat{p}_{a^*, j} \leq 1/2$ for all $j \in S$. We will show that 
\begin{equation}\label{eq:kl-prob}
    \pr(D(S; \{n_j : j \in S\})) \leq \frac{e^{-f(K)}}{T}\prod_{j \in S} \left( e^{-n_j D_{\KL}\left(p_{j, a^*}, 1/2\right)} + n_j e^{C_1\left(p_{j, a^*}, 1/2\right)}  \right)
\end{equation}
 where $C_1(\mu_1, \mu_2) = (\mu_1-\mu_2)^2/(2\mu_1(1-\mu_2))$.
Using the above, we first show that by taking a union bound over all $S \in 2^{[K]\setminus \{a^*\}} \setminus \emptyset$ and $\{n_j : j \in S\}$, we obtain the final result. We have
\begin{align}
    \pr(\overline{G}) &\leq  \sum_{S \in 2^{[K]\setminus \{a^*\}} \setminus \emptyset} \sum_{n_j \in \mathbb{N}^{|S|}} \pr(D(S; \{n_j : j \in S\})) \notag \\
    &\leq \sum_{S \in 2^{[K]\setminus \{a^*\}} \setminus \emptyset} \sum_{n_j \in \mathbb{N}^{|S|}} \frac{e^{-f(K)}}{T}\prod_{j \in S} \left( e^{-n_j D_{\KL}\left(p_{j, a^*}, 1/2\right)} + n_j e^{C_1\left(p_{j, a^*}, 1/2\right)}  \right)\notag\\
    &=\frac{e^{-f(K)}}{T}\sum_{S \in 2^{[K]\setminus \{a^*\}} \setminus \emptyset} \prod_{j \in S} \sum_{n_j \in \mathbb{N}}  \left( e^{-n_j D_{\KL}\left(p_{j, a^*}, 1/2\right)} + n_j e^{C_1\left(p_{j, a^*}, 1/2\right)}  \right) \label{eq:kl-a1-1}\\
    &=\frac{e^{-f(K)}}{T}\sum_{S \in 2^{[K]\setminus \{a^*\}} \setminus \emptyset} \prod_{j \in S} \left(\frac{1}{e^{D_{\KL}\left(p_{j, a^*}, 1/2\right)}-1} + \frac{e^{C_1\left(p_{j, a^*}, 1/2\right)} }{\left(e^{C_1\left(p_{j, a^*}, 1/2\right)} -1\right)^2}\right)\label{eq:kl-a1-2}\\
    &\leq \frac{e^{-f(K)}}{T}\sum_{S \in 2^{[K]\setminus \{a^*\}} \setminus \emptyset} (C-1)^{|S|} \leq \frac{e^{-f(K)}}{T} \cdot C^K \label{eq:kl-a1-3} \\
    &= \frac{1}{T}\cdot e^{K\log(C)-f(K)}\notag
\end{align}
where \eqref{eq:kl-a1-1} follows by swapping the order of summation and multiplication, 
\eqref{eq:kl-a1-2} uses $\sum_{n=1}^{\infty}e^{-nx} = 1/(e^x-1)$ and $\sum_{n=1}^{\infty}ne^{-nx} = e^x/(e^x-1)^2$, 
and \eqref{eq:kl-a1-3} follows by letting \\ $C(j) = \left(\frac{1}{e^{D_{\KL}\left(p_{j, a^*}, 1/2\right)}-1} + \frac{e^{C_1\left(p_{j, a^*}, 1/2\right)} }{\left(e^{C_1\left(p_{j, a^*}, 1/2\right)} -1\right)^2}\right)$, $C = \max_j C(j) + 1$ and the binomial theorem. 
To complete the proof, we need to prove \eqref{eq:kl-prob}.

For the remainder of this proof, we fix $S \in 2^{[K]\setminus \{a^*\}} \setminus \emptyset$, and vector $\{n_j \in \mathbb{N} : j \in S\}$. 
Observe that $$ \pr(D(S; \{n_j : j \in S\})) = \pr\left(A > \log(T) + f(K)\right) = \pr\left(T < e^{-f(K)}\cdot e^{A}\right) $$
where we defined $A = \sum_{j \in S} D_{\KL}\left(\widehat{p}_{j,a^*}, 1/2\right) \cdot n_j$. 
By Markov's inequality, we have 
\begin{equation}\label{eq:kl-not-deleted-1}
\pr\left(e^{-f(K)}\cdot e^{A} > T\right) \leq \frac{\E[e^{-f(K)}\cdot e^{A}]}{T} = \frac{e^{-f(K)}}{T}\cdot \E[e^{A}] 
\end{equation}
where the last equality follows since $f(K)$ is constant (with respect to $\{n_j\}$ values). 
So, it suffices to bound $\E[e^{A}]$. 
Towards this end, we define the following term: $$P_j(x_j) = \pr\left(\widehat{p}_{j,a^*} \geq \frac{1}{2} \text{ and } D_{\KL}\left(\widehat{p}_{j,a^*}, \frac{1}{2}\right)  \geq x_j\right).$$

Then, we have
\begin{align}
     \E[e^{A}] &=  \int_{\{x_j\} \in [0, \log(2)]^{|S|}} \exp\left(\sum_{j \in S}n_jx_j\right) \prod_{j \in S} d(-P_j(x_j)) \notag \\
     &= \prod_{j\in S} \int_{x_j \in [0, \log 2]} e^{n_jx_j} d(-P_j(x_j)) \label{eq:kl-a1-4} \\
     &= \prod_{j\in S}\left([ -e^{n_jx_j}P_j(x_j)]_0^{\log(2)} +  \int_{x_j \in [0, \log(2)]} n_je^{n_jx_j}P_j(x_j) dx_j \right) \label{eq:kl-a1-5} \\
     &=\prod_{j \in S} \left( P_j(0)+ \int_{x_j \in [0, \log(2)]}n_j e^{n_jx_j}P_j(x_j)dx_j\right) \notag \\
     &\leq \prod_{j \in S} \left( e^{-n_j D_{\KL}\left(p_{j, a^*}, 1/2\right)}  + \int_{x_j \in [0, \log(2)]}n_j e^{n_jx_j}e^{-n_j\left(x_j + C_1\left(p_{j, a^*},1/2\right)\right)} dx_j \right) \label{eq:kl-a1-6} \\
     &= \prod_{j \in S} \left( e^{-n_j D_{\KL}\left(p_{j, a^*}, 1/2\right)}  + \int_{x_j \in [0, \log(2)]}n_j e^{ C_1\left(p_{j, a^*},1/2\right)} dx_j \right) \notag \\
     &\leq \prod_{j \in S} \left( e^{-n_j D_{\KL}\left(p_{j, a^*}, 1/2\right)}  + n_j e^{ C_1\left(p_{j, a^*},1/2\right)}  \right)\notag
\end{align}
where \eqref{eq:kl-a1-4} follows from the independence of the comparisons. We obtain \eqref{eq:kl-a1-5} by applying integration by parts, \eqref{eq:kl-a1-6} follows from the Chernoff bound and 
Fact~\ref{fact:diff-kl}; here  $C_1(\mu_1, \mu_2) = (\mu_1-\mu_2)^2/(2\mu_1(1-\mu_2))$, and the final inequality follows by observing that $ \int_{x_j \in [0, \log(2)]}n_j e^{ C_1\left(p_{j, a^*},1/2\right)} dx_j = n_j e^{ C_1\left(p_{j, a^*},1/2\right)}\cdot \int_{x_j \in [0, \log(2)]} dx_j = n_j e^{ C_1\left(p_{j, a^*},1/2\right)} \log(2)$. Note that $\log$ refers to the natural logarithm, so we have 
$\log(2) \leq 1$. Combined with \eqref{eq:kl-not-deleted-1}, this completes the proof of \eqref{eq:kl-prob}.
\end{proof}

\subsubsection{High-probability Regret Bound}
We now prove Theorem~\ref{thm:kl-condorcet-init-high-prob}. Fix any $\delta>0$. We first define event $E(\delta)$ as before.
\begin{definition}[Event $E(\delta)$]
An estimate $\widehat{p}_{i,j}(r)$ in batch $r$ is {\bf \emph{weakly-correct}} if $|\widehat{p}_{i, j}(r) - p_{i, j}| \leq c_{i, j}(r)$. Let $ C(\delta) := \lceil \frac{1}{2} \log_q(1/\delta)\rceil$.  
We say that event $E(\delta)$ occurs if for each batch $r\ge C(\delta)$, every estimate  is weakly-correct.
\end{definition}

The next lemma shows that $E(\delta)$ occurs with probability at least $1-\delta$. Since $E(\delta)$ does not depend on the elimination criterion, its proof follows from the analysis of \balg.
\begin{lemma}\label{lem:kl-c-delta-conf}
For all $\delta > 0$, we have 
 $$ \pr(\neg E(\delta)) \,\,= \,\, \pr\left( \exists r \ge C(\delta), i, j : |\widehat{p}_{i, j}(r) - p_{i, j}| > c_{i, j}(r) \right)  \,\,\leq  \,\,\delta.$$ 
\end{lemma}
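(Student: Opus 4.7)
The plan is to observe that the event $E(\delta)$ depends only on the estimates $\widehat{p}_{i,j}(r)$ and the confidence radii $c_{i,j}(r)$, and that the scheduling of pairwise comparisons (each compared pair receives $q_r$ samples in round $r$) is identical in \balgkl and \balg. Since the elimination criterion is the only component that differs between the two algorithms, and it does not appear in the definition of ``weakly-correct'', the proof can be carried out in essentially the same way as for \Cref{lem:c-delta-conf}.

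Concretely, I would fix a pair $(i,j)$ and a round $r \geq C(\delta)$, and let $B_{i,j}(r)$ denote the event $|\widehat{p}_{i,j}(r) - p_{i,j}| > c_{i,j}(r)$. First I would note that $N_{i,j}(r) \leq \sum_{s=1}^r q_s \leq 2q_r$, since $q \geq 2$. Conditioning on each possible value $N_{i,j}(r)=n$, the estimate $\widehat{p}_{i,j}(r)$ is the empirical mean of $n$ independent Bernoulli$(p_{i,j})$ variables, so Hoeffding's inequality with the definition $c_{i,j}(r) = \sqrt{2\log(2K^2 q_r)/N_{i,j}(r)}$ yields
\[
\pr[B_{i,j}(r) \wedge N_{i,j}(r)=n] \leq 2\exp\!\left(-2n\cdot \frac{2\log(2K^2q_r)}{n}\right) = \frac{2}{(2K^2 q_r)^4}.
\]
Summing over $n \in \{0,1,\ldots,2q_r\}$ gives $\pr[B_{i,j}(r)] \leq 1/(4K^2 q_r^2)$.

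Next I would union-bound over the $\binom{K}{2}$ pairs and over rounds $r \geq C(\delta)$. Using $q_r \geq q^r/2$, the per-round contribution is at most $\binom{K}{2}\cdot 1/(4K^2 q_r^2) \leq 1/(2q^{2r})$, and a geometric sum yields
\[
\pr(\neg E(\delta)) \leq \sum_{r \geq C(\delta)} \frac{1}{2q^{2r}} \leq \frac{1}{q^{2C(\delta)}} \leq \delta,
\]
where the final inequality uses $C(\delta) = \lceil \tfrac{1}{2}\log_q(1/\delta) \rceil$.

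There is no real obstacle here: the argument is a verbatim transcription of the proof of \Cref{lem:c-delta-conf}, and it is valid precisely because the sampling schedule and the confidence radius $c_{i,j}(r)$ are unchanged in \balgkl. The only point worth flagging is the standard caveat that Hoeffding's bound must be applied conditionally on $N_{i,j}(r)=n$ (since $N_{i,j}(r)$ itself can depend on past outcomes through the choice of candidate $i_r$ and the defeated sets), which is handled by the union bound over $n$.
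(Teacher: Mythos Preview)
Your proposal is correct and takes essentially the same approach as the paper: the paper simply remarks that $E(\delta)$ does not depend on the elimination criterion and defers to the proof of \Cref{lem:c-delta-conf}, which is exactly the argument you have reproduced. Your closing caveat about union-bounding over the random value of $N_{i,j}(r)$ is precisely the mechanism the original proof uses, so nothing is missing.
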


As before, we analyze our algorithm under both events $G$ and $E(\delta)$. Recall that, under event $G$, the best arm $a^*$ is not eliminated. 
\emph{Conditioned on these}, we next show:
\begin{itemize}
    \item The best arm, $a^*$, is \emph{not defeated} by any arm $i$ in any round $r > C(\delta)$ (\Cref{lem:kl-not-defeated}).
    \item Furthermore, there exists a round $\rdel\ge C(\delta)$ such that arm $a^*$ defeats \emph{every other arm}, in every round after $\rdel$  (\Cref{lem:kl-trapped}).
\end{itemize}

We re-state the formal lemmas next.

\begin{lemma}\label{lem:kl-not-defeated}
Conditioned on $G$ and $E(\delta)$, for any round $r > C(\delta)$, arm  $a^*$ is not defeated by any other arm, i.e.,  $a^* \notin \cup_{i\ne a^*} D_r(i)$.
\end{lemma}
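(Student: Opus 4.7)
The plan is to observe that this lemma is a direct analogue of Lemma~\ref{lem:not-defeated} for \balg, and that its proof transfers over without modification. The key point is that \balgkl{} differs from \balg{} only in the \emph{elimination criterion} (line~15 of Algorithm~\ref{alg:kl-b-round-condorcet}): the definitions of the defeated set $D_r(i)$, the candidate $i_r$, and the confidence radius $c_{i,j}(r)$ are unchanged, and event $E(\delta)$ has the same meaning (since it is stated purely in terms of the $c_{i,j}(r)$'s). Since the proof of Lemma~\ref{lem:not-defeated} uses only these quantities, it applies verbatim.

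Concretely, I will argue by contradiction. Fix any round $r > C(\delta)$, so $r-1 \geq C(\delta)$. Suppose some arm $i \neq a^*$ satisfies $a^* \in D_r(i)$. By the definition of $D_r(i)$ used in \balgkl{} (identical to that in \balg), this means
\[
\widehat{p}_{i, a^*}(r-1) \;>\; \tfrac{1}{2} + c_{i, a^*}(r-1).
\]
Conditioning on $E(\delta)$ and applying it at round $r-1 \geq C(\delta)$ to the pair $(i, a^*)$ gives
\[
\bigl|\widehat{p}_{i, a^*}(r-1) - p_{i, a^*}\bigr| \;\leq\; c_{i, a^*}(r-1),
\]
so that $p_{i, a^*} \geq \widehat{p}_{i, a^*}(r-1) - c_{i, a^*}(r-1) > \tfrac{1}{2}$. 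This contradicts the Condorcet property $p_{a^*, i} > \tfrac{1}{2}$, i.e. $p_{i, a^*} < \tfrac{1}{2}$, completing the proof.

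I do not expect any genuine obstacle here: the one thing worth double-checking is that the assumption on $G$ is not actually used in this particular statement (indeed, it is not needed for non-defeat; $G$ will only be used elsewhere to guarantee that $a^*$ is still in $\A$, which is not asserted by this lemma). Note also that $C(\delta)$ is an integer (since it is defined with a ceiling), so $r > C(\delta)$ is equivalent to $r \geq C(\delta) + 1$, making the index $r-1 \geq C(\delta)$ legitimate for invoking $E(\delta)$.
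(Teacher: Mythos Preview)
Your proposal is correct and matches the paper's approach exactly: the paper explicitly notes that the proofs of these re-stated lemmas ``follow from the analysis of \balg'' and omits them, so your verbatim transfer of the proof of Lemma~\ref{lem:not-defeated} (using only the unchanged definitions of $D_r(i)$, $c_{i,j}(r)$, and $E(\delta)$) is precisely what is intended. Your side observations that $G$ is not actually invoked here and that $r > C(\delta)$ gives $r-1 \ge C(\delta)$ are also accurate.
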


To proceed, we need the following definitions.
\begin{definition}
The candidate $i_r$  of round $r$ is called the {\bf \emph{champion}} 
if $|D_r(i_r)| = |\A| - 1$; that is, if  $i_r$ defeats every other active arm.
\end{definition}

\begin{definition}\label{def:kl-r-star}
Let $\rdel\ge C(\delta)+1$ be the smallest integer such that 
\begin{equation*}
q^{\rdel} \ge  2A\log A,\qquad \mbox{where }A:=\frac{32}{\Delta_{\min}^2}\cdot \log(2 K^2).
\end{equation*}
\end{definition}
We use the following inequality based on this choice of $\rdel$.
\begin{lemma}\label{lem:kl-rdel}
The above choice of $\rdel$ satisfies
$$ q^{r} > \frac{8}{\Delta_{\min}^2}\cdot \log\left(2 K^2 q_{r}\right),\qquad \forall r\ge \rdel . $$
\end{lemma}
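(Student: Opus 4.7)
The statement and the defining inequality for $\rdel$ in Definition~\ref{def:kl-r-star} are identical to those in Definition~\ref{def:r-star}, so the plan is simply to replay the argument already used to establish Lemma~\ref{lem:rdel}. In particular, none of the elimination-criterion changes in \balgkl\ affect the purely analytical inequality being claimed here, so the proof reduces to a calculus/algebra computation that depends only on $q$, $K$, and $\Delta_{\min}$.

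First I would reduce the claim to a cleaner form. Since $q_r = \lfloor q^r\rfloor \le q^r$, it suffices to prove
\[
q^r \;\ge\; \frac{8}{\Delta_{\min}^2}\bigl(\log(2K^2) + \log q^r\bigr) \qquad\text{for all } r \ge \rdel.
\]
Then I would absorb the additive $\log(2K^2)$ into a multiplicative factor via the elementary bound
\[
\log(2K^2) + \log q^r \;\le\; \bigl(1 + \log(2K^2)\bigr)\bigl(1 + \log q^r\bigr) \;\le\; 4\,\log(2K^2)\cdot \log q^r,
\]
which is valid because $K \ge 2$, $q \ge 2$, and $r \ge \rdel \ge 1$ ensure $\log(2K^2) \ge 1$ and $\log q^r \ge 1$. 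So it reduces to showing
\[
q^r \;>\; A\cdot \log(q^r) \qquad\text{for all } r \ge \rdel, \qquad \text{where } A = \tfrac{32}{\Delta_{\min}^2}\log(2K^2).
\]

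Finally I would verify this inequality through the one-variable function $f(x) := x - A\log x$. Setting $x = q^r$ and $R := 2A\log A$, I want $f(x) > 0$ on $x \ge R$. Since $f'(x) = 1 - A/x \ge 0$ precisely when $x \ge A$, and $R \ge A$ (because $A \ge 8$), it is enough to check $f(R) \ge 0$. A direct computation gives
\[
\frac{f(R)}{A} \;=\; 2\log A - \log(2A\log A) \;=\; \log A - \log(2\log A),
\]
which is positive for $A \ge 8$. By definition of $\rdel$ we have $q^{\rdel} \ge 2A\log A = R$, and since $q \ge 2$ we also have $q^r \ge R$ for every $r \ge \rdel$. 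This delivers the claim.

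There is no real obstacle here: the proof is essentially a verbatim copy of Lemma~\ref{lem:rdel}, and the only thing to double-check is that the hypotheses $K\ge 2$, $q\ge 2$, $A\ge 8$ are in force (which they are under the paper's standing assumptions). The rest is routine monotonicity and the explicit calculation of $f(R)$.
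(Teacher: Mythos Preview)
Your proposal is correct and matches the paper's approach exactly: the paper does not give a separate proof of Lemma~\ref{lem:kl-rdel}, instead noting that it is a restatement of Lemma~\ref{lem:rdel} whose proof carries over unchanged. Your replay of that argument is accurate in every step.
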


Then, we have the following.

\begin{lemma}\label{lem:kl-trapped}
Conditioned on $G$ and $E(\delta)$, the best arm $a^*$ is the champion in every round $r>\rdel$. 
\end{lemma}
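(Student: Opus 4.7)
The plan is to mimic the proof of \Cref{lem:trapped} almost verbatim, since the algorithm \balgkl\ differs from \balg\ only in its elimination criterion, and none of the mechanics underlying the trapping argument (the definition of $D_r(\cdot)$, the choice of candidate $i_r$, which pairs get compared, and the counts $q_r$) are affected by that change. The only place where the elimination criterion enters is through the assurance that $a^*$ remains in $\A$, and this is now supplied by \Cref{lem:kl-not-deleted} (under event $G$) in place of \Cref{lem:a1}.

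First, I will show that in every round $r\ge \rdel$, arm $a^*$ is compared against every other active arm. By \Cref{lem:kl-not-deleted} we have $a^*\in\A$ under $G$, and by \Cref{lem:kl-not-defeated} we have $a^*\notin D_r(j)$ for every $j\ne a^*$, since $r\ge \rdel \ge C(\delta)+1$. I then split into the two cases used in \balg: if the candidate $i_r\ne a^*$, then $a^*\notin D_r(i_r)$ and so $a^*$ is compared to all $j\in\A$ in the ``else'' branch; if $i_r=a^*$, then $a^*$ is compared either directly with each $j\in D_r(a^*)$ or, for $j\in\A\setminus D_r(a^*)$, as part of the all-pairs comparisons those arms trigger. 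Either way, each pair $(a^*,j)$ is played $q_r$ times.

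Next, I will show that for any $r\ge \rdel+1$ we have $D_r(a^*)=\A\setminus\{a^*\}$, which makes $i_r=a^*$ and certifies $a^*$ as the champion. Consider any $j\in\A\setminus\{a^*\}$. By the previous paragraph, $N_{a^*,j}(r-1)\ge q_{r-1}\ge q^{r-1}/2$, actually $\ge q^{r-1}$ using the counting used in \Cref{lem:trapped}, and then \Cref{lem:kl-rdel} (applied with $r-1\ge \rdel$) yields
\[
N_{a^*,j}(r-1)\ \ge\ q^{r-1}\ >\ \frac{8}{\Delta_{\min}^2}\log(2K^2 q_{r-1}),
\]
so $c_{a^*,j}(r-1)<\Delta_{\min}/2$. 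Combining this with event $E(\delta)$, which gives $\widehat{p}_{a^*,j}(r-1)\ge p_{a^*,j}-c_{a^*,j}(r-1)$, and using $p_{a^*,j}=\tfrac12+\Delta_j\ge \tfrac12+\Delta_{\min}$, yields
\[
\widehat{p}_{a^*,j}(r-1)\ >\ \tfrac{1}{2}+\tfrac{\Delta_{\min}}{2}\ >\ \tfrac{1}{2}+c_{a^*,j}(r-1),
\]
so $j\in D_r(a^*)$. This exhausts $\A\setminus\{a^*\}$ and completes the argument.

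There is really no new obstacle here: the trapping argument is purely about the candidate/defeated-set dynamics and the confidence radius $c_{i,j}$, neither of which is altered in \balgkl. The only subtlety worth double-checking is that the invocation of $G$ is now via \Cref{lem:kl-not-deleted} rather than \Cref{lem:a1}, and that \Cref{lem:kl-not-defeated} and \Cref{lem:kl-rdel} are stated in the same form as their \balg\ counterparts; once these are in hand, the proof proceeds word-for-word as for \Cref{lem:trapped}.
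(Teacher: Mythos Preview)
Your proposal is correct and is essentially identical to the paper's approach: the paper does not write out a separate proof for \Cref{lem:kl-trapped} but simply re-states it and relies on the proof of \Cref{lem:trapped}, noting that the only difference in \balgkl\ is the elimination criterion (which enters only through $G$ via \Cref{lem:kl-not-deleted}). Your write-up makes this explicit and matches the original argument step for step.
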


We are now ready to prove \Cref{thm:kl-condorcet-init-high-prob}.

\begin{proof}[Proof of \Cref{thm:kl-condorcet-init-high-prob}]
First, recall that in round $r$ of \balg, any pair is compared $q_r =  \lfloor q^r \rfloor$ times where $q = T^{1/B}$. Since $q^B = T$, \balg \ uses at most $B$ rounds.

For the rest of proof, we fix $\delta > 0$. We now analyze the regret incurred by \balg, conditioned on events $G$ and $E(\delta)$. Recall that $\pr(G) \geq 1 - \frac{1}{T}\cdot e^{K\log(C)-f(K)}$ (\Cref{lem:kl-not-deleted}), and $\pr(E(\delta)) \geq 1 - \delta$ (\Cref{lem:kl-c-delta-conf}). Thus, $\pr(G\cap E(\delta)) \geq 1 - \delta - \frac{1}{T}\cdot e^{K\log(C)-f(K)}$. Let $R_1$ and $R_2$ denote the regret incurred before and after round $\rdel$ (see \Cref{def:kl-r-star}) respectively. 

\paragraph{Bounding $R_1$.} We can bound $R_1$ as in the proof of Theorem~\ref{thm:condorcet-init-high-prob}; so, we get

\begin{equation}\label{eq:kl-r1}  
 R_1 \le O(K^2)\cdot \max\left\{ q\cdot \frac{\log K}{\Delta_{\min}^2}\cdot \log\left(\frac{\log K}{\Delta_{\min}}\right) \, ,\, q^2\sqrt{\frac{1}{\delta}}\right\}.
\end{equation}

\paragraph{Bounding $R_2$.} This is the regret in rounds $r\ge \rdel+1$. By Lemma~\ref{lem:kl-trapped}, arm $a^*$ is the champion in all these rounds. So, the only comparisons in these rounds are of the form $(a^*,j)$ for $j\in \A$. 

Consider any arm $j\ne a^*$. Let $T_j$ be  the total number of comparisons that $j$ participates in after round $\rdel$. Let $r$ be the penultimate round that $j$ is played in. We can assume that $r\ge \rdel$ (otherwise arm $j$ will never participate in rounds after $\rdel$, i.e., $T_j=0$). 
As arm $j$ is {\em not} eliminated after round $r$,  
$$I_j(r) - I^*(r) \leq \log(T) + f(K).$$
By \Cref{lem:kl-trapped}, $I^*(r) = 0$ (since $a^*$ is the \emph{champion}, the summation is empty). So, we have $I_j(r) \leq \log(T) + f(K).$ 
Observe that 
\begin{equation}\label{eq:kl-lb}
I_j(r) \geq D_{\KL}\left(\widehat{p}_{a^*, j}(r), \frac{1}{2}\right) N_{a^*, j}(r)
\end{equation}
We can lower bound $D_{\KL}\left(\widehat{p}_{a^*, j}(r), \frac{1}{2}\right)$ as follows.
\begin{equation*}
   D_{\KL}\left(\widehat{p}_{a^*, j}(r), \frac{1}{2}\right) \geq \left(\widehat{p}_{a^*, j}(r) - \frac{1}{2}\right)^2 
        \geq \left(p_{a^*, j} - c_{a^*, j}(r) - \frac{1}{2}\right)^2 
        \geq \left(\frac{\Delta_j}{2}\right)^2 
\end{equation*}
where the first inequality follows from Pinsker's inequality, the second inequality uses \Cref{lem:kl-c-delta-conf} and the final inequality uses the fact that $c_{a^*, j}(r) \leq \frac{\Delta_{\min}}{2}$, which follows by the choice of $r(\delta)$. Plugging this into \eqref{eq:kl-lb}, we get $$ \frac{\Delta_j^2}{4} \cdot N_{a^*, j}(r) \leq \log(T) + f(K) $$ which on re-arranging gives
$$ N_{a^*, j}(r) \leq \frac{4(\log(T) + f(K))}{\Delta_j^2}.$$ As $r+1$ is the last round that $j$ is played in, and $j$ is only compared to $a^*$ in each round after $\rdel$, 
$$ T_j \ \leq \ N_{a^*, j}(r+1) \ \leq \  N_{a^*, j}(r) + 2q\cdot N_{a^*, j}(r) \ \leq \  \frac{12q\cdot(\log(T) + f(K))}{\Delta_j^2}.$$
The second inequality follows since $j$ is compared to $a^*$ in rounds $r$ and $r+1$, and the number of comparisons in round $r+1$ is $\lfloor q^{r+1} \rfloor \leq q \cdot (2 q_r) \leq 2q \cdot N_{a^*, j}(r)$.
Adding over all arms $j$, the total regret accumulated beyond round $\rdel$ is 
\begin{equation}\label{eq:kl-r2}
R_2 = \sum_{j \neq a^*} T_j \Delta_j \leq  \sum_{j \neq a^*} O\left(\frac{q\cdot (\log(T) + f(K))}{\Delta_j}\right). 
\end{equation}

\noindent Combining \eqref{eq:kl-r1} and \eqref{eq:kl-r2}, and using $q=T^{1/B}$, we obtain 
\begin{align*}
R(T) &\leq O\left( T^{1/B} \cdot \frac{K^2 \log(K)}{\Delta_{\min}^2} \cdot \log\left(\frac{\log K}{\Delta_{\min}}\right) \right) &+\, O\left( T^{2/B} \cdot K^2\cdot \sqrt{\frac{1}{\delta}}\right) \,+\, \sum_{j \neq a^*} O\left(\frac{T^{1/B}\cdot \log( T)}{\Delta_j}\right) \\
& \ & + \sum_{j \neq a^*} O\left(\frac{T^{1/B} \cdot f(K)}{\Delta_j}\right)
\end{align*}
 This completes the proof \Cref{thm:kl-condorcet-init-high-prob}.
\end{proof}

{
\bibliographystyle{abbrv}
\bibliography{ref}

\begin{thebibliography}{10}

\bibitem{AAK19}
A.~Agarwal, S.~Assadi, and S.~Khanna.
\newblock Stochastic submodular cover with limited adaptivity.
\newblock In {\em Proceedings of the 30th Annual ACM-SIAM Symposium on Discrete
  Algorithms}, page 323–342, 2019.

\bibitem{AgarwalGN22}
A.~Agarwal, R.~Ghuge, and V.~Nagarajan.
\newblock Batched dueling bandits.
\newblock In {\em Proceedings of the 39th International Conference on Machine
  Learning}, pages 89--110, 2022.

\bibitem{Agarwal+20}
A.~Agarwal, N.~Johnson, and S.~Agarwal.
\newblock Choice bandits.
\newblock In {\em NeurIPS}, 2020.

\bibitem{Ailon+14}
N.~Ailon, Z.~Karnin, and T.~Joachims.
\newblock {Reducing Dueling Bandits to Cardinal Bandits}.
\newblock In {\em Proceedings of the 31st International Conference on Machine
  Learning}, 2014.

\bibitem{BalkanskiBS18}
E.~Balkanski, A.~Breuer, and Y.~Singer.
\newblock Non-monotone submodular maximization in exponentially fewer
  iterations.
\newblock In {\em Advances in Neural Information Processing Systems}, pages
  2359--2370, 2018.

\bibitem{BalkanskiRS19}
E.~Balkanski, A.~Rubinstein, and Y.~Singer.
\newblock An exponential speedup in parallel running time for submodular
  maximization without loss in approximation.
\newblock In {\em Proceedings of the 30th Annual {ACM-SIAM} Symposium on
  Discrete Algorithms}, pages 283--302, 2019.

\bibitem{BalkanskiS18}
E.~Balkanski and Y.~Singer.
\newblock The adaptive complexity of maximizing a submodular function.
\newblock In {\em Proceedings of the 50th Annual {ACM} {SIGACT} Symposium on
  Theory of Computing}, pages 1138--1151, 2018.

\bibitem{BalkanskiS18b}
E.~Balkanski and Y.~Singer.
\newblock Approximation guarantees for adaptive sampling.
\newblock In {\em Proceedings of the 35th International Conference on Machine
  Learning}, pages 393--402, 2018.

\bibitem{BGLMNR12}
N.~Bansal, A.~Gupta, J.~Li, J.~Mestre, V.~Nagarajan, and A.~Rudra.
\newblock When {LP} is the cure for your matching woes: Improved bounds for
  stochastic matchings.
\newblock {\em Algorithmica}, 63(4):733--762, 2012.

\bibitem{BansalN15}
N.~Bansal and V.~Nagarajan.
\newblock On the adaptivity gap of stochastic orienteering.
\newblock {\em Math. Program.}, 154(1-2):145--172, 2015.

\bibitem{BehnezhadDH20}
S.~Behnezhad, M.~Derakhshan, and M.~Hajiaghayi.
\newblock Stochastic matching with few queries: (1-{\(\epsilon\)})
  approximation.
\newblock In {\em Proccedings of the 52nd Annual {ACM} {SIGACT} Symposium on
  Theory of Computing}, pages 1111--1124, 2020.

\bibitem{BGK11}
A.~Bhalgat, A.~Goel, and S.~Khanna.
\newblock Improved approximation results for stochastic knapsack problems.
\newblock In {\em Proceedings of the 22nd Annual {ACM-SIAM} Symposium on
  Discrete Algorithms}, pages 1647--1665, 2011.

\bibitem{ChapelleChang10}
O.~Chapelle and Y.~Chang.
\newblock Yahoo! learning to rank challenge overview.
\newblock In {\em Proceedings of the 2010 International Conference on Yahoo!
  Learning to Rank Challenge - Volume 14}, page 1–24. JMLR.org, 2010.

\bibitem{ChekuriQ19}
C.~Chekuri and K.~Quanrud.
\newblock Parallelizing greedy for submodular set function maximization in
  matroids and beyond.
\newblock In {\em Proceedings of the 51st Annual {ACM} {SIGACT} Symposium on
  Theory of Computing}, pages 78--89, 2019.

\bibitem{ChenFr17}
B.~Chen and P.~I. Frazier.
\newblock {Dueling Bandits with Weak Regret}.
\newblock In {\em Proceedings of the 34th International Conference on Machine
  Learning}, 2017.

\bibitem{DGV08}
B.~C. Dean, M.~X. Goemans, and J.~Vondr{\'a}k.
\newblock Approximating the stochastic knapsack problem: The benefit of
  adaptivity.
\newblock {\em Math. Oper. Res.}, 33(4):945--964, 2008.

\bibitem{Dudik+15}
M.~Dudik, K.~Hofmann, R.~E. Schapire, A.~Slivkins, and M.~Zoghi.
\newblock {Contextual Dueling Bandits}.
\newblock In {\em Proceedings of the 28th Conference on Learning Theory}, 2015.

\bibitem{DudikHSSZ15}
M.~Dud{\'{\i}}k, K.~Hofmann, R.~E. Schapire, A.~Slivkins, and M.~Zoghi.
\newblock Contextual dueling bandits.
\newblock In P.~Gr{\"{u}}nwald, E.~Hazan, and S.~Kale, editors, {\em
  Proceedings of The 28th Conference on Learning Theory, {COLT} 2015, Paris,
  France, July 3-6, 2015}, volume~40 of {\em {JMLR} Workshop and Conference
  Proceedings}, pages 563--587. JMLR.org, 2015.

\bibitem{EsfandiariKM+21}
H.~Esfandiari, A.~Karbasi, A.~Mehrabian, and V.~Mirrokni.
\newblock Regret bounds for batched bandits.
\newblock {\em Proceedings of the AAAI Conference on Artificial Intelligence},
  35(8):7340--7348, May 2021.

\bibitem{EsfandiariKM19}
H.~Esfandiari, A.~Karbasi, and V.~Mirrokni.
\newblock Adaptivity in adaptive submodularity.
\newblock In {\em Proceedings of 34th Conference on Learning Theory}, volume
  134, pages 1823--1846. PMLR, 2021.

\bibitem{Gao+19}
Z.~Gao, Y.~Han, Z.~Ren, and Z.~Zhou.
\newblock Batched multi-armed bandits problem.
\newblock In H.~M. Wallach, H.~Larochelle, A.~Beygelzimer,
  F.~d'Alch{\'{e}}{-}Buc, E.~B. Fox, and R.~Garnett, editors, {\em Advances in
  Neural Information Processing Systems 32: Annual Conference on Neural
  Information Processing Systems 2019, NeurIPS 2019, December 8-14, 2019,
  Vancouver, BC, Canada}, pages 501--511, 2019.

\bibitem{GGN21}
R.~Ghuge, A.~Gupta, and V.~Nagarajan.
\newblock The power of adaptivity for stochastic submodular cover.
\newblock In M.~Meila and T.~Zhang, editors, {\em Proceedings of the 38th
  International Conference on Machine Learning}, volume 139 of {\em Proceedings
  of Machine Learning Research}, pages 3702--3712. PMLR, 18--24 Jul 2021.

\bibitem{GolovinK-arxiv}
D.~Golovin and A.~Krause.
\newblock Adaptive submodularity: {A} new approach to active learning and
  stochastic optimization.
\newblock {\em CoRR}, abs/1003.3967, 2017.

\bibitem{GuhaM09}
S.~Guha and K.~Munagala.
\newblock Multi-armed bandits with metric switching costs.
\newblock In {\em Automata, Languages and Programming, 36th Internatilonal
  Colloquium (ICALP)}, pages 496--507, 2009.

\bibitem{GuptaKNR15}
A.~Gupta, R.~Krishnaswamy, V.~Nagarajan, and R.~Ravi.
\newblock Running errands in time: Approximation algorithms for stochastic
  orienteering.
\newblock {\em Math. Oper. Res.}, 40(1):56--79, 2015.

\bibitem{GN13}
A.~Gupta and V.~Nagarajan.
\newblock A stochastic probing problem with applications.
\newblock In {\em Integer Programming and Combinatorial Optimization - 16th
  International Conference}, pages 205--216, 2013.

\bibitem{HofmannWR13}
K.~Hofmann, S.~Whiteson, and M.~Rijke.
\newblock Balancing exploration and exploitation in listwise and pairwise
  online learning to rank for information retrieval.
\newblock {\em Inf. Retr.}, 16(1):63–90, feb 2013.

\bibitem{Jamieson+15}
K.~Jamieson, S.~Katariya, A.~Deshpande, and R.~Nowak.
\newblock {Sparse Dueling Bandits}.
\newblock In {\em Proceedings of the 18th International Conference on
  Artificial Intelligence and Statistics}, 2015.

\bibitem{Joachims02}
T.~Joachims.
\newblock Optimizing search engines using clickthrough data.
\newblock In {\em Proceedings of the Eighth ACM SIGKDD International Conference
  on Knowledge Discovery and Data Mining}, KDD '02, page 133–142, New York,
  NY, USA, 2002. Association for Computing Machinery.

\bibitem{Kamishima03}
T.~Kamishima.
\newblock Nantonac collaborative filtering: recommendation based on order
  responses.
\newblock In {\em Proceedings of the Ninth {ACM} {SIGKDD} International
  Conference on Knowledge Discovery and Data Mining, Washington, DC, USA,
  August 24 - 27, 2003}, pages 583--588, 2003.

\bibitem{Komiyama+15a}
J.~Komiyama, J.~Honda, H.~Kashima, and H.~Nakagawa.
\newblock {Regret Lower Bound and Optimal Algorithm in Dueling Bandit Problem}.
\newblock In {\em Proceedings of the 28th Conference on Learning Theory}, 2015.

\bibitem{Komiyama+16}
J.~Komiyama, J.~Honda, and H.~Nakagawa.
\newblock {Copeland Dueling Bandit Problem: Regret Lower Bound, Optimal
  Algorithm, and Computationally Efficient Algorithm}.
\newblock In {\em Proceedings of the 33rd International Conference on Machine
  Learning}, 2016.

\bibitem{ChangLM+20}
C.~Li, I.~Markov, M.~de~Rijke, and M.~Zoghi.
\newblock Mergedts: {A} method for effective large-scale online ranker
  evaluation.
\newblock {\em {ACM} Trans. Inf. Syst.}, 38(4):40:1--40:28, 2020.

\bibitem{Liu09}
T.-Y. Liu.
\newblock Learning to rank for information retrieval.
\newblock {\em Found. Trends Inf. Retr.}, 3(3):225–331, mar 2009.

\bibitem{PerchetRC+16}
V.~Perchet, P.~Rigollet, S.~Chassang, and E.~Snowberg.
\newblock Batched bandit problems.
\newblock {\em The Annals of Statistics}, 44(2):660--681, 2016.

\bibitem{QinLiu13}
T.~Qin and T.-Y. Liu.
\newblock Introducing letor 4.0 datasets.
\newblock {\em ArXiv}, abs/1306.2597, 2013.

\bibitem{RadlinskiKJ02}
F.~Radlinski, M.~Kurup, and T.~Joachims.
\newblock How does clickthrough data reflect retrieval quality?
\newblock In {\em Proceedings of the 17th ACM Conference on Information and
  Knowledge Management}, CIKM '08, page 43–52, New York, NY, USA, 2008.
  Association for Computing Machinery.

\bibitem{Ramamohan+16}
S.~Ramamohan, A.~Rajkumar, and S.~Agarwal.
\newblock {Dueling Bandits : Beyond Condorcet Winners to General Tournament
  Solutions}.
\newblock In {\em Advances in Neural Information Processing Systems 29}, 2016.

\bibitem{SahaGi22}
A.~Saha and P.~Gaillard.
\newblock Versatile dueling bandits: Best-of-both world analyses for learning
  from relative preferences.
\newblock In {\em International Conference on Machine Learning}, pages
  19011--19026. PMLR, 2022.

\bibitem{SahaG19}
A.~Saha and A.~Gopalan.
\newblock Combinatorial bandits with relative feedback.
\newblock In H.~M. Wallach, H.~Larochelle, A.~Beygelzimer,
  F.~d'Alch{\'{e}}{-}Buc, E.~B. Fox, and R.~Garnett, editors, {\em Advances in
  Neural Information Processing Systems 32: Annual Conference on Neural
  Information Processing Systems 2019, NeurIPS 2019, December 8-14, 2019,
  Vancouver, BC, Canada}, pages 983--993, 2019.

\bibitem{Sui+17}
Y.~Sui, V.~Zhuang, J.~W. Burdick, and Y.~Yue.
\newblock {Multi-dueling Bandits with Dependent Arms}.
\newblock In {\em Proceedings of the 33rd Conference on Uncertainty in
  Artificial Intelligence}, 2017.

\bibitem{SuiZHY18}
Y.~Sui, M.~Zoghi, K.~Hofmann, and Y.~Yue.
\newblock Advancements in dueling bandits.
\newblock In J.~Lang, editor, {\em Proceedings of the Twenty-Seventh
  International Joint Conference on Artificial Intelligence, {IJCAI} 2018, July
  13-19, 2018, Stockholm, Sweden}, pages 5502--5510. ijcai.org, 2018.

\bibitem{Urvoy+13}
T.~Urvoy, F.~Clerot, R.~Feraud, and S.~Naamane.
\newblock {Generic Exploration and K-armed Voting Bandits}.
\newblock In {\em Proceedings of the 30th International Conference on Machine
  Learning}, 2013.

\bibitem{WuLiu16}
H.~Wu and X.~Liu.
\newblock Double thompson sampling for dueling bandits.
\newblock In {\em Advances in Neural Information Processing Systems 29: Annual
  Conference on Neural Information Processing Systems 2016, December 5-10,
  2016, Barcelona, Spain}, pages 649--657, 2016.

\bibitem{YueBK+12}
Y.~Yue, J.~Broder, R.~Kleinberg, and T.~Joachims.
\newblock The k-armed dueling bandits problem.
\newblock {\em Journal of Computer and System Sciences}, 78(5):1538--1556,
  2012.
\newblock JCSS Special Issue: Cloud Computing 2011.

\bibitem{YueJoachims09}
Y.~Yue and T.~Joachims.
\newblock Interactively optimizing information retrieval systems as a dueling
  bandits problem.
\newblock In {\em Proceedings of the 26th Annual International Conference on
  Machine Learning}, ICML '09, page 1201–1208, New York, NY, USA, 2009.
  Association for Computing Machinery.

\bibitem{YueJo11}
Y.~Yue and T.~Joachims.
\newblock {Beat the mean bandit}.
\newblock In {\em Proceedings of the 28th International Conference on Machine
  Learning}, 2011.

\bibitem{Zoghi+15}
M.~Zoghi, Z.~Karnin, S.~Whiteson, and M.~de~Rijke.
\newblock {Copeland Dueling Bandits}.
\newblock In {\em Advances in Neural Information Processing Systems 28}, 2015.

\bibitem{Zoghi+15a}
M.~Zoghi, S.~Whiteson, and M.~de~Rijke.
\newblock {MergeRUCB: A method for large-scale online ranker evaluation}.
\newblock In {\em Proceedings of the 8th ACM International Conference on Web
  Search and Data Mining}, 2015.

\bibitem{Zoghi+14}
M.~Zoghi, S.~Whiteson, R.~Munos, and M.~de~Rijke.
\newblock {Relative Upper Confidence Bound for the K-Armed Dueling Bandit
  Problem}.
\newblock In {\em Proceedings of the 31st International Conference on Machine
  Learning}, 2014.

\end{thebibliography}
}

\end{document}